\documentclass[journal]{IEEEtran}
%

\usepackage[dvips]{graphicx}
%
\ifCLASSINFOpdf
\else
   \usepackage[dvips]{graphicx}
\fi

%
\usepackage[cmex10]{amsmath}

%
\usepackage{algorithm}
\usepackage{algorithmic} 
\usepackage{amssymb}
\usepackage{amsthm}

\usepackage{array}

\usepackage{mdwmath}
\usepackage{mdwtab}

  \usepackage[caption=false,font=footnotesize]{subfig}

\hyphenation{op-tical net-works semi-conduc-tor}

\begin{document}
%
\title{Cascade Learning by Optimally Partitioning}

\author{Yanwei~Pang,~\IEEEmembership{Senior~Member,~IEEE,}
        Jiale~Cao,
        and~Xuelong~Li,~\IEEEmembership{Fellow,~IEEE,}
       
\thanks{Y. Pang and J. Cao are with the School of Electronic Information Engineering, Tianjin University, Tianjin 300072, China. e-mails: \{pyw,connor\}@tju.edu.cn}

\thanks{ X. Li is with the Center for OPTical IMagery Analysis and Learning (OPTIMAL), State Key Laboratory of Transient Optics and Photonics, Xi'an Institute of Optics and Precision Mechanics, Chinese Academy of Sciences, Xi'an 710119, Shaanxi, P. R. China. e-mail: xuelong\_li@opt.ac.cn.}
}

\markboth{}%
{Shell \MakeLowercase{\textit{et al.}}: Bare Demo of IEEEtran.cls for Journals}

\maketitle

\begin{abstract}

Cascaded AdaBoost classifier is a well-known efficient object detection algorithm. The cascade structure has many parameters to be determined. Most of existing cascade learning algorithms are designed by assigning detection rate and false positive rate to each stage either dynamically or statically. Their objective functions are not directly related to minimum computation cost. These algorithms are not guaranteed to have optimal solution in the sense of minimizing computation cost. On the assumption that a strong classifier is given, in this paper we propose an optimal cascade learning algorithm (we call it iCascade) which iteratively partitions the strong classifiers into two parts until predefined number of stages are generated. iCascade searches the optimal number $r_{i}$ of weak classifiers of each stage $i$ by directly minimizing the computation cost of the cascade. Theorems are provided to guarantee the existence of the unique optimal solution. Theorems are also given for the proposed efficient algorithm of searching optimal parameters $r_{i}$. Once a new stage is added, the parameter $r_{i}$ for each stage decreases gradually as iteration proceeds, which we call decreasing phenomenon. Moreover, with the goal of minimizing computation cost, we develop an effective algorithm for setting the optimal threshold of each stage classifier. In addition, we prove in theory why more new weak classifiers are required compared to the last stage. Experimental results on face detection demonstrate the effectiveness and efficiency of the proposed algorithm.
\end{abstract}

\begin{IEEEkeywords}
AdaBoost, cascade learning, classifier design, object detection.
\end{IEEEkeywords}

%
%
\IEEEpeerreviewmaketitle

\section{Introduction}
\IEEEPARstart{R}{OBUST} and real-time object detection  is a key problem in computer vision tasks such as vision-based Human Computer Interaction (HCI), video surveillance and biometrics. Robustness of an object detection system is mainly governed by the robustness of extracted features and the generalization ability of employed classifiers. The detection efficiency is determined by the types of features, the manner of the features to be extracted, and the structure of the classifiers \cite{Shao_FeatureLearning_TNNLS2014},  \cite{Bian_MinimizingNearest_TNNLS2014}, \cite{Yu_AUnifiedLearning_TTNLS2014}. For example, it is well known that the features can be computed by the trick of integral image, which is suitable for efficient object detection. However, the structure of classifiers is also important for efficient object detection. For example, AdaBoost classifiers with cascade structure have greatly contributed to real-time face detection \cite{Brubaker_Design_IJCV_2008}, \cite{Zhang_MultiInstance_NIPS_2007}, \cite{Jun_LTF_PAMI_2013}, \cite{Gualdi_ParticleWindows_PAMI_2011}, human detection \cite{Pang_CoHOG_SMC_2012}, \cite{Zhu_HOG_CVPR_2006}, \cite{Wang_HOG-LBP_ICCV_2009}, \cite{Benenson_100_Frames_ICCV_2012}, \cite{Dollar_PedestrianDetection_PAMI_2012}, \cite{Paisitkriangkrai_AScalableStageWise_TNNLS2014}, \cite{Paisitkriangkrai_RandomBoost_TNNLS2014}, etc. With cascade structure, a large fraction of sub-windows can be rejected at early stages with a small number of weak classifiers. Only the sub-windows of true positives and those similar to true positives can arrive at later stages. However, how to design an optimal cascade structure is an open problem which is the focus of this paper.

Cascade learning is the process of determining the parameters of a cascade in order to improve the efficiency of AdaBoost classifier. The cascade parameters mainly include the number of stages, the number of weak classifiers in each stage, and the threshold for each stage. However, most of existing cascade learning methods are not directly formulated as a constrained optimization problem. Though more efficient than the non-cascade one, they are not guaranteed to be the best in the sense of maximizing detection efficiency under acceptable constraints. Usually, there are many hand-crafted parameters which are chosen according to one's our intuition and experience. The performance of the cascade AdaBoost relies on one's insight into the cascade structure. As Saberian and Vasconcelos mentioned \cite{Saberian_Boosting_NIP_2010}, the design of a good cascade can take up several weeks. In addition, some useful intuitions are not justified in theory.

To overcome the above problems, we formulate cascade learning as a process of learning the parameters of a cascade by minimizing the computation cost with some certain constraints.

In summary, the contributions and characteristics of the paper are as follows.
\begin{enumerate}
\item We transform the strong classifier of regular AdaBoost into an optimal cascade classifier. That is, the result of regular AdaBoost is the input of our cascade learning algorithm. In the sense of detection rate and rejection rate, we use cascade AdaBoost to approach its non-cascade one (i.e., regular AdaBoost) with minimum computation cost.
\item The objective function of our method is just the computation cost of a cascade. In contrast, most of the existing algorithms are designed by empirically assigning detection rate and false positive rate to each stage either dynamically or statically. Existence and uniqueness of the optimal solution are analytically proved.
\item To design a one-stage cascade structure, we propose to partition the strong classifier $H(\mathbf{x})$, a combination of weak classifiers $h_{1},...,h_{T}$, into left part $H_{L}(\mathbf{x},r_{1})$ and right part $H_{R}(\mathbf{x},r_{1})$ at partition point $r_{1}$ (see Algorithm 1 and Fig. 1). The optimal partition point $r_{1}$ is found by minimizing the objective function $f_{1}(r)$ which stands for the computation cost of the cascade classifier. We theoretically (i.e., Theorem \ref{thm01}) prove  that $f_{1}(r)$ exists a unique solution. Moreover, we give a theorem (i.e., Theorem \ref{thm02}) that gives a rough estimation of the optimal solution.
\item To design a two-stage cascade structure, we propose to further partition right classifier $H_{R}(\mathbf{x},r_{1})$ into two parts at partition point $r_{2}$. The partition iteratively continues (see Fig. 4). This algorithm  is not globally optimal if $r_{1}$ is fixed while $r_{2}$ is considered as a variable. To obtain global optimization, we further jointly model the computation cost $f(r_{1},r_{2})$ with variables both $r_{1}$ and $r_{2}$. We prove that $f(r_{1},r_{2})$ has a unique minimum solution (see Theorem \ref{thm07}). An iterative optimization algorithm (i.e., Algorithm 2) is proposed to find the optimal solution. Theoretical analysis (i.e.,  Theorems \ref{thm09}-\ref{thm12}) is given that $r_{1}$ decreases in each iteration where $r_{2}$ is fixed and $r_{2}$ decreases in each iteration where $r_{1}$ is fixed. We call it decreasing phenomenon. Such globally optimal two-stage cascade learning algorithm can be easily generalized to multi-stage one (i.e., Algorithm 3). 
\item Moreover, we contribute to learning the optimal threshold $t_{i}$ of each stage classifier for minimizing computation cost $f_{S}$ of the cascaded classifier. We prove that the computation cost decreases with the stage threshold $t_{i}$ (i.e., Theorem \ref{thm13}). Based on this theorem, we develop an effective threshold learning algorithm (i.e., Algorithm 4) whose core is properly decreasing $t_{i}$. Though this algorithm is not globally optimal, it is very effective. We call the proposed algorithm (i.e., Algorithm 4 and the procedure in Fig. 10) iCascade. 
\item We prove in theory why more new weak classifiers are required compared to the previous stage (i.e., Theorem \ref{thm05}). In addition, we also theoretically prove why cascade AdaBoost is more efficient than its non-cascade one. Though the results and phenomena can be intuitively understood, we are the first to theoretically justify them to be the best of our knowledge. 
\end{enumerate}

\section{Related Work}
This section briefly reviews some existing work related to cascade leaning.

Most of existing cascade learning algorithms can be called DF-guided (where "DF" stands for Detection rate and False positive rate) method pioneered by Viola and Jones \cite{Viola_face_detection_IJCV_2004}. In the learning step, DF-guided method selects weak classifiers step by step until predefined minimum acceptable detection rate and maximum acceptable false positive rate are both satisfied. We call this method VJCascade \cite{Viola_face_detection_IJCV_2004} .

Variants of VJCascade have been proposed to select and organize weak classifiers. BoostChain \cite{Xiao_Boosting_Chain_CVPR_2003} improves VJCascade by reusing the ensemble score from previous stages to enhance current stage. Brubaker et al. \cite{Brubaker_Design_IJCV_2008} called such a technique BoostChain recycling. Similar to BoostChain, SoftCascade also allows for monotonic accumulation of information as the classifier is evaluated \cite{Bourdev_Soft_Cascade_CVPR_2005}. In Multi-exit AdaBoost \cite{Pham_Multi-exit_Asymmetric_CVPR_2005}, node classifier also shares overlapping sets of weak classifiers. FloatBoost \cite{Li_Floatboost_PAMI_2004} as well as Boost-Chain uses DF-guided strategy to design the cascade. But different from VJCascade, FloatBoost uses backtrack mechanism to eliminate the less useful or even detrimental weak classifiers. Wu et al. \cite{Wu_Asymmetric_Learning_PAMI_2008} employed Forward Feature Selection (FFS) algorithm to greedily select features. Wang et al. \cite{Wang_ATCB_NNLS_2012} developed an asymmetric learning algorithm for both feature selection and ensemble classifier learning. FisherBoost \cite{Shen_Effective_Node_IJCV_2013} uses column generation technique to implement totally-corrective boosting algorithm. To decrease the training burden caused by the large number of negative samples and over-complete features (e.g., Haar-like features), some algorithms use only a random subset of the feature pool \cite{Brubaker_Design_IJCV_2008}, \cite{Bourdev_Soft_Cascade_CVPR_2005}. 

Endeavor has also been devoted to adjust the thresholds of stages of a cascade structure which is also called the thresholds of node classifiers. On the assumption that a full cascade has been trained by VJCascade algorithm, Luo \cite{Luo_Optimization_Design_CVPR_2005} proposed to jointly optimize the setting of the thresholding parameters of all the node classifiers within the cascade. Waldboost algorithm utilizes an adaptation of Wald's sequential probability ratio test to set stage thresholds \cite{Sochman_Waldboost-learning_CVPR_2005}. Brubaker et al. proposed a linear program algorithm to select weak classifiers and threshold of a node classifier \cite{Brubaker_Design_IJCV_2008}, \cite{Brubaker_Towards_Optimal_CVPR_2006}.

Though most of existing methods are DF-guided, computation-cost guided (i.e., CC-guided) methods were also developed. Chen and Yuille \cite{Chen_Time-Efficient_CVPR_2005} gave a criterion for designing a time-efficient cascade that explicitly takes into account the time complexity of tests including the time for pre-processing. They designed a greedy algorithm to minimize the criterion. But each stage in this method is constrained to detect all positive examples, which leads it to miss opportunity to improve detection efficiency \cite{Chen_Classifier_Cascade_AIS_2012}. The loss function of Cronus cascade learning algorithm is a tradeoff between accuracy (training error) and computation cost \cite{Chen_Classifier_Cascade_AIS_2012}. CSTC (i.e., Cost-Sensitive Tree of Classifiers) combines regularized training error and computation cost into a loss function \cite{Xu_Cost-Sensitive_ICML_2013}. Compared to VJCascade-like method, CSTC is suitable for balanced classes and specialized features \cite{Xu_Cost-Sensitive_ICML_2013}.

In contrast to the above methods, the objective function (i.e., loss function) of our method is just the computation cost and the detection accuracy can be naturally guaranteed. In addition, global solution instead of local one can be obtained in our method.

\section{Proposed Method: One-Stage Cascade}
The goal of cascade learning is to lean a cascade structure in order to correctly reject negative sub-windows and accept positive sub-windows as fast as possible. Generally, the cascade structure is determined by the number of stages and the number of weak classifiers in each stage. 

Most of existing methods design or learn the cascade structure by assigning minimum acceptable detection rate and maximum acceptable false positive rate for each stage. In this paper, we propose a novel cascade learning method in which it is not necessary to assign such acceptable detection rates and false positive rates. Instead, we learn the parameters of a cascade by directly minimizing the computation cost.

In this section, we describe the proposed one-stage cascade learning algorithm which is the foundation of our multi-stage cascade learning algorithm.

\subsection{Testing Stage}
In our method, cascade AdaBoost is considered as an estimation of regular AdaBoost. A good cascade structure can achieve the same detection accuracy as AdaBoost with small computation cost. Therefore, we begin with describing the form of the strong classifier of regular AdaBoost.

Let $H(\mathbf{x})$ be the strong classifier obtained by an AdaBoost algorithm. The strong classifier $H(\mathbf{x})$ is composed of \textsl{T} weak classifier $h_i(\mathbf{x)}\in\{1,-1\}$ with weights $\alpha_i$:
\begin{equation}
\label{eq01}
H(\mathbf{x})=\sum\limits_{i=1}^{T}\alpha_{i}h_{i}(\mathbf{x}).
\end{equation}
Generally, the weights of the weak classifiers satisfy 
\begin{equation}
\label{eq02}
\alpha_{1}>\alpha_{2}>\cdots>\alpha_{T}>0,
\end{equation}
and
\begin{equation}
\label{eq03}
\sum\limits_{i=1}^{T}\alpha_{i}=1.
\end{equation}
Let $l(\mathbf{x})\in\{1,-1\}$ be the class label of a feature vector. The decision rule of the strong classifier $H(\mathbf{x})$ is:
\begin{equation}
\label{eq04}
l(\mathbf{x})=
\begin{cases}
1,&\text{if $H(\mathbf{x})=\sum\limits_{i=1}^{T}\alpha_{i}h_{i}(\mathbf{x})>t$}, \\
-1,&\text{if $H(\mathbf{x})=\sum\limits_{i=1}^{T}\alpha_{i}h_{i}(\mathbf{x})\leq t$},
\end{cases}
\end{equation}
where $t$ is a threshold balancing the detection rate and false positive rate. 

In one-stage cascade structure, there is only one stage in which a small number (i.e., $r$) of weak classifiers are combined for classification. The core of the proposed one-stage cascade is to determine an optimal $r$ which divides the strong classifier $H(\mathbf{x})$ into left part $H_{L}(\mathbf{x})$ and right part $H_{R}(\mathbf{x})$:
\begin{gather}
\label{eq05}
H(\mathbf{x})=H_{L}(\mathbf{x})+H_{R}(\mathbf{x}), \\
\label{eq06}
H_{L}(\mathbf{x})=\sum\limits_{i=1}^{r}\alpha_{i}h_{i}(\mathbf{x}), \\
\label{eq07}
H_{R}(\mathbf{x})=\sum\limits_{i=r+1}^{T}\alpha_{i}h_{i}(\mathbf{x}).
\end{gather}
To reject true negative sub-windows with less computation cost, we propose to use the maximum of $H_{R}(\mathbf{x})$ to approximate the value of $H_{R}(\mathbf{x})$:
\begin{equation}
\label{eq08}
\max H_{R}(\mathbf{x})=\max_{\mathbf{x}}\left(\sum\limits_{i=r+1}^{T}\alpha_{i}h(\mathbf{x})\right)=\sum\limits_{i=r+1}^{T}\alpha_{i}.
\end{equation}
We denote the maximum by $\max H_{R}(\mathbf{x})$. With $\max H_{R}(\mathbf{x})$, it is guaranteed that all the true negative sub-windows can be correctly rejected if the following inequality holds:
\begin{equation}
\label{eq09}
H_{L}(\mathbf{x},r)+\max H_{R}(\mathbf{x},r)\leq t.
\end{equation}
That is, some sub-windows can be rejected by using merely $H_{L}(\mathbf{x})$ and $\max H_{R}(\mathbf{x})$ instead of both $H_{L}(\mathbf{x})$ and $H_{R}(\mathbf{x})$. Consequently, the computation cost is significantly reduced.

The rest sub-windows not satisfying (\ref{eq09}) have to be classified using both $H_{L}(\mathbf{x})$ and $H_{R}(\mathbf{x})$ (i.e., the strong classifier). If the sum of $H_{L}(\mathbf{x})$ and $H_{R}(\mathbf{x})$ is not larger than $t$, i.e.,
\begin{equation}
\label{eq10}
H_{L}(\mathbf{x},r)+H_{R}(\mathbf{x},r)=H(\mathbf{x})\leq t,
\end{equation}
then the sub-window corresponding to the feature vector $\textbf{x}$ can be finally classified as negative sub-window. Otherwise (i.e., the sum is larger than $t$), it is classified as positive sub-window. The algorithm of one-stage cascade is given in Algorithm 1. Equivalently, the flow-chart is shown in Fig. 1. Note that $t-\max H_{R}(\mathbf{x},r)$ can be viewed as the threshold for $H_{L}(\mathbf{x},r)$. The issue of how to set the threshold is addressed in Section V.C.
\begin{figure}[!t]
\label{FigStageCascade}
\centering
\includegraphics{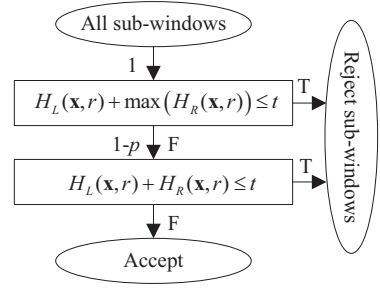}
\caption{Proposed method: one stage cascade AdaBoost with a given $r$.} 
\end{figure}
\begin{algorithm}[!t]
\caption{One-stage cascade}
\begin{algorithmic}[1]
\STATE if $H_{L}(\textbf{x},r)+\max H_{R}(\textbf{x},r)\leq t$, 
\STATE ~~~then $l(\textbf{x})=-1$,
\STATE else (i.e., $H_{L}(\textbf{x},r)+\max H_{R}(\textbf{x},r)>t$)
\STATE ~~~if $H_{L}(\textbf{x},r)+H_{R}(\textbf{x},r)\leq t$
\STATE ~~~~~~$l(\text{x})=-1$,
\STATE ~~~else (i.e., $H_{L}(\textbf{x},r)+H_{R}(\textbf{x},r)>t$)
\STATE ~~~~~~$l(\textbf{x})=1$.
\end{algorithmic}
\end{algorithm}

\subsection{Training Stage: How to Select an Optimal $r$}
In Fig. 1, it is assumed that $r$ and $\max H_R(\textbf{x},r)$ are given. In this sub-section, we describe how to choose an optimal $r$. $\max H_R(\textbf{x},r)$ can be easily computed from training samples once $r$ is given. In the training stage of cascade learning, it is assumed that the strong classifier $H(\textbf{x})=\sum_{i=1}^{T}\alpha_i h_i(\textbf{x})$ is obtained by a regular AdaBoost Algorithm. 

Given $r$, a $p$ fraction of true negative sub-windows can be rejected by using left classifier $H_{L}(\textbf{x})$ (i.e., (\ref{eq09})). The fraction $p$ is called rejection rate and defined by:
\begin{equation}
\label{eq11}
p(\mathbf{x})=\dfrac{\sum\limits_{\bf{x}}I(\sum\limits_{i=1}^{r}\alpha_{i}h_{i}(\mathbf{x})+\max H_{R}(\mathbf{x},r)\leq t)}{\sum\limits_{\mathbf{x}} I(l(\mathbf{x})==-1)},
\end{equation}
where $I(condition)$ is 1 if the condition is satisfied and 0 otherwise. $\sum_{\mathbf{x}} I(l(\mathbf{x})==-1)$ is the number of all true negative sub-windows. Eq. (\ref{eq11}) shows that $p$ is dependent on $r$.

Obviously, the fraction of true negative sub-windows classified by using both left and right classifiers is $1-p$. The criterion for choosing $r$ is to minimize the overall computation cost $f$ consisting of the cost $f^{L}$ of computing $H_{L}(\mathbf{x},r)$ in (\ref{eq09}) and the cost $f^{R}$ of computing  both $H_{L}(\mathbf{x},r)$ and $H_{R}(\mathbf{x},r)$ in (\ref{eq10}).

Suppose that all the weak classifiers have the same computation complexity. Then the computation cost is determined by the number of weak classifiers. A fact is that $f_{1}^{L}$ grows with $p$ and $r$:
\begin{equation}
\label{eq12}
f_{1}^{L}(r,p)=p(r+c),
\end{equation}
and $f_{1}^{R}$ grows with $1-p$ and $T$:
\begin{equation}
\label{eq13}
f_{1}^{R}(r,p)=(1-p)(T+2c).
\end{equation}

In (\ref{eq12}) and (\ref{eq13}), $c$ is the computational cost of checking either inequality (\ref{eq09}) or inequality (\ref{eq10}) holds. Usually, the computation cost C of a weak classifier is bigger than $c$. Let $C=1$, then $c<1$. Note that $c$ is not involved in computing $H_L(\mathbf{x},r)$ and $H_R(\mathbf{x},r)$. 

The goal is to minimize the following object function:
\begin{equation}
\label{eq14}
f_{1}(r,p)=f_{1}^{L}(r,p)+f_{1}^{R}(r,p)=p(r+c)+(1-p)(T+2c).
\end{equation}

To solve this optimization problem, it is necessary to reveal the relationship between $r$ and $p$. The parameters $r$ and $p$ are correlated and the correlation can be expressed as a function $p(r,\max H_{R}(\mathbf{x},r))$.

As $\max(H_{R}(\mathbf{x},r))$ (its upper bound is $\sum_{i=r+1}^{T}\alpha_{i}$) decreases with $r$, a larger number of negative sub-windows will be rejected by (\ref{eq09}). It is straightforward that the fraction $p$ of negative sub-windows satisfying (\ref{eq09}) grows with $r$.
Experimental results also show that $p$ monotonically increases with $r$. The relationship between $p$ and $r$ is nonlinear. Fig. 2 illustrates a typical trend that how $p$ varies with $r$. It can be seen that $p$ grows quickly from 0 to the value (e.g., 0.99) close to 1 when $r$ changes from 1 to a small value $r^*$ (e.g., 10). But $p$ becomes stable when $r$ is larger than $r^*$. The reason is that the first $r^*$ weak classifiers $h_{i}$ with larger weights $\alpha_{i}$ play much more important role than the rest weak classifiers.

Mathematically, $r^*$ is defined as the minimum $r$ which satisfies $p\approx1$ or equivalently $1-p(r)\leq \varepsilon$ with $\varepsilon$ being a small number (e.g., 0.01):
\begin{equation}
\label{eq15}
r^{*}=\arg\displaystyle\min_{r}\{r|1-p(r)\leq \varepsilon\}.
\end{equation}
We call $r^{*}$ the saturation point of $p(r)$.

Though $p(r)$ is in fact a high-order curve, it can be well modelled by combining of two linear functions: $p_{1}(r)=ar$ with $r<r^*$ and $p_{2}(r)=1$ with $r\geq r^*$(see Fig. 2). As $T$ is a large number, then $r^*\ll T$.
\begin{figure}[!t]
\label{FigRepresentativeForm}
\centering
\includegraphics{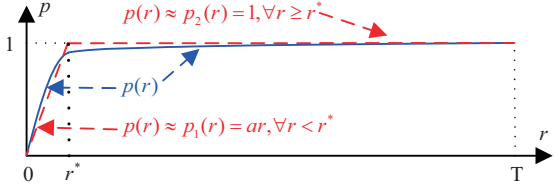}
\caption{A representative form of function $p(r)$ which can be simplified as combination of two linear functions: $p_{1}(r)=ar$ with $r<r^{*}$ and $p_{2}(r)=1$ with $r\geq r^{*}$.}
\end{figure}

It is reasonably assumed that the function $p(r)$ satisfies the fowllowing conditions:
\begin{gather}
\label{eq16}p(r_{1})<p(r_{2}),~if~r_{1}<r_{2},\\
\label{eq17}p'(r_{1})>p'(r_{2})\geq 0,~if~r_{1}<r_{2},\\
\label{eq18}p(T)=1,\\
\label{eq19}p(0)=0,\\
\label{eq20}p'(T)=0,\\
\label{eq21}p'(0)\gg 0,\\
\label{eq22}p'(1)\gg 0.
\end{gather}
(\ref{eq16}) states the monotonicity of $p(r)$.  (\ref{eq17}) tells that the slope of $p(r)$ decreases with $r$.  (\ref{eq20}) shows that the slope is zero at $r=T$ while it is extremely large at $r=1$. It is noted that (\ref{eq16})-(\ref{eq22}) will be used as assuption of the theorems of the proposed methods.

According to Fig. 2, $p(r)$ has the following properties: 
\begin{gather}
\label{eq23}r^*\ll T,~as~T~is~a~large~number,\\
\label{eq24}p(r)\approx p_{1}(r)=ar,~if~r<r^*,\\
\label{eq25}p(r)\approx p_{2}(r)=1,~if~r\geq r^*,
\end{gather}
which will be used as assumption of Theorem \ref{thm02}.

After each pair of $(r,p)$ are known, the value of $f_{1}$ can be obtained. Theorem \ref{thm01} tells that there exists a unique minimization solution.

\newtheorem{theorem}{Theorem}
\theoremstyle{plain}
\begin{theorem}
\label{thm01}
$f_{1}(r)=p(r)(r+c)+(1-p(r))(T+2c)$ has a unique minimum solution $r_{1}$. Moreover, $f_{1}(r)$ monotonically decreases with $r$ until $r=r_{1}$ and then increases with $r$.
\end{theorem}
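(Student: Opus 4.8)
The plan is to recast $f_{1}$ so that its first derivative becomes transparently monotone, and then combine the intermediate value theorem with that monotonicity. First I would reduce the objective: expanding the definition and collecting the terms free of $p(r)$ gives
\begin{equation}
f_{1}(r)=(T+2c)+p(r)\,(r-T-c),
\end{equation}
so that, treating $r$ as a continuous variable on $[1,T]$ (as assumptions \eqref{eq16}--\eqref{eq22} implicitly do),
\begin{equation}
f_{1}'(r)=p(r)+p'(r)\,(r-T-c)=p(r)-p'(r)\,(T+c-r),
\end{equation}
where the last form uses $r\le T<T+c$, so $T+c-r>0$.

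Next I would pin down the sign of $f_{1}'$ at the two ends of the interval. At $r=0$, conditions \eqref{eq19} and \eqref{eq21} give $f_{1}'(0)=-p'(0)(T+c)<0$ (and \eqref{eq22} similarly yields $f_{1}'(1)<0$), while at $r=T$ conditions \eqref{eq18} and \eqref{eq20} give $f_{1}'(T)=1>0$. Since $p$ and $p'$ are continuous, $f_{1}'$ must vanish somewhere in the interior; call such a point $r_{1}$.

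The crux -- the step I expect to need the most care -- is showing that this zero is unique and that $f_{1}'$ crosses it from negative to positive, i.e., that $f_{1}'$ is strictly increasing. Here I would exploit the structure $f_{1}'(r)=p(r)-p'(r)(T+c-r)$: by \eqref{eq17} the slope $p'(r)$ is nonnegative and strictly decreasing, and $T+c-r$ is positive and strictly decreasing, so their product $p'(r)(T+c-r)$ is strictly decreasing, hence $-p'(r)(T+c-r)$ is increasing; adding the strictly increasing term $p(r)$ (by \eqref{eq16}) makes $f_{1}'$ strictly increasing on $[1,T]$. Then $r_{1}$ is its unique zero, with $f_{1}'<0$ on $[1,r_{1})$ and $f_{1}'>0$ on $(r_{1},T]$, which is exactly the asserted shape: $f_{1}$ strictly decreases up to $r_{1}$ and strictly increases afterwards, so $r_{1}$ is the unique global minimizer. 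Equivalently one could argue from $f_{1}''(r)=2p'(r)+p''(r)(r-T-c)>0$, using $p''\le 0$ (concavity, from \eqref{eq17}) and $r-T-c<0$.

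The main obstacle is really one of regularity rather than of ideas: $p(r)$ is specified only through the qualitative conditions \eqref{eq16}--\eqref{eq22} and is drawn piecewise linear in Fig.~2, so I would phrase the monotonicity of $f_{1}'$ via difference quotients of $p$ and $p'$ rather than a second derivative, so that nothing beyond ``$p$ strictly increasing, $p'$ nonnegative and strictly decreasing'' is invoked. A minor additional point is that $r$ is an integer; once the continuous $f_{1}$ is shown to be unimodal, its restriction to $\{1,\dots,T\}$ inherits the same shape and still has a unique minimizer.
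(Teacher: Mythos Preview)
Your proposal is correct and follows essentially the same route as the paper: both compute $f_1'(r)=p(r)-p'(r)(T+c-r)$, pin down its sign at the endpoints via \eqref{eq18}--\eqref{eq21}, and derive uniqueness from the monotonicity assumptions on $p$ and $p'$ in \eqref{eq16}--\eqref{eq17}. The only difference is cosmetic---the paper phrases uniqueness as a contradiction (two critical points $r_1<r_2$ would force $f_1'(r_1)<f_1'(r_2)$), whereas you state directly that $f_1'$ is strictly increasing; your extra remarks on convexity, regularity, and the integer restriction are thoughtful but not needed for the paper's argument.
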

\renewcommand{\proof}{\textbf{Proof.}}
\begin{proof}
We first prove the existence of the minimum solution and then give the evidence of the uniqueness of the minimum solution.

\textit{Existence:}

$\because f_{1}(r)=p(r)(r+c)+(1-p(r))(T+2c),$

$\therefore f'_{1}(r)=p'(r)(r-T-c)+p(r).$

Consider the value of the derivative $f'_{1}(r)$ when $r$ approaches 0:
\begin{equation}
\lim\limits_{r\to 0}f'_{1}(r)=f'_{1}(0)=p'(0)(0-c-T)+p(0).
\end{equation}
Because $p(0)=0$ (i.e., (\ref{eq19})) and $p'(0)\gg 0$ (i.e., (\ref{eq21}) ). Therefore, it holds:
\begin{equation}
\lim\limits_{r\to 0 }f'_{1}(r)=f'_{1}(0)=-p'(0)(T+c)<0.
\end{equation}

Now consider the value of the derivative $f'_{1}(r)$ when $r$ approaches $T$:
\begin{equation}
\begin{split}
\lim\limits_{r\to T}f'_{1}(r)=f'_{1}(T)=p(T)-p'(T)c\approx 1-0>0.
\end{split}
\end{equation}
Because $\lim\limits_{r\to 0}f'_{1}(r)<0$, $\lim\limits_{r\to T}f'_{1}(r)>0$ and $f'_{1}(r)$ is continuous function, it must exist a $r_{1}\in [1,T]$ such that $f'_{1}(r_{1})=0$. The $r_{1}$ is at least a local minimum, which shall be the global minimum if the local minimum is unique.

\textit{Uniqueness (Proof by contradiction):}

Suppose that there are two local minimums $r_{1}$ and $r_{2}$ with $r_{1}<r_{2}$. Then it holds that $f'_{1}(r_{1})-f'_{1}(r_{2})=0$.

Now investigate the value of $f'_{1}(r_{1})-f'_{1}(r_{2})=[p(r_{1})-p(r_{2})]-[p'(r_{1})(T+c-r_{1})-p'(r_{2})(T+c-r_{2})]$ if $r_{1}<r_{2}$ is true.\\
$\because r_{1}<r_{2},$\\
$\therefore p'(r_{1})>p'(r_{2})>0,~T+c-r_{1}>T+c-r_{2}>0,\\~~~~p(r_{1})<p(r_{2}),$\\
$\therefore f'(r_{1})-f'(r_{2})<0.$\\
This contradicts $f'(r_{1})-f'(r_{2})=0$, Therefore, $r_{1}<r_{2}$ is wrong. 

Similarly, we can prove that $r_{1}>r_{2}$ is wrong. Consequently, $r_{1}=r_{2}$ is true, meaning a unique solution.\qed
\end{proof}

Fig. 3  shows a representative form of $f_{1}(r)$, it has a unique minimum solution.
\begin{figure}[!t]
\label{FigOneStageComputation}
\centering
\includegraphics{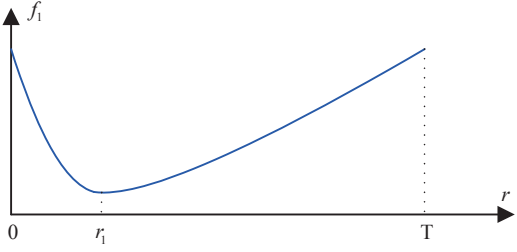}
\caption{A representative form of $f_{1}(r)$}
\end{figure}

\begin{theorem}
\label{thm02}
Let $r^*$ be a saturation point of $p(r)$ (see  (\ref{eq15})) and assume that $p(r)$ can be modelled by combining $p_{1}(r)=ar$ where $r<r^*$ with $p_{2}(r)=1$ where $r\geq r^*$(see Fig. 2
for illustration). Then the saturation point $r^*$ is the optimal minimum solution $r_{1}=\arg\min\limits_{r}f_{1}(r)$.
\end{theorem}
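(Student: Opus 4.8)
\textbf{Proof proposal for Theorem~\ref{thm02}.}
The plan is to use the piecewise-linear model of $p(r)$ to reduce $f_1$ to two elementary pieces and then locate the minimum by the first-derivative sign analysis already used in Theorem~\ref{thm01}. First I would substitute the two regimes of the model into $f_1(r)=p(r)(r+c)+(1-p(r))(T+2c)$: on $[0,r^*]$ we have $p(r)=ar$, so $f_1(r)=ar(r+c)+(1-ar)(T+2c)=ar^2+a(c-T-2c)r+(T+2c)=ar^2-a(T+c)r+(T+2c)$, a quadratic in $r$ with positive leading coefficient $a>0$; on $[r^*,T]$ we have $p(r)\equiv 1$, so $f_1(r)=r+c$, a strictly increasing line. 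Since Theorem~\ref{thm01} guarantees the global minimizer $r_1$ is unique, it suffices to show $f_1$ is decreasing just to the left of $r^*$ and increasing just to the right; uniqueness then forces $r_1=r^*$.

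The right branch is immediate: for $r\ge r^*$, $f_1'(r)=1>0$, so $f_1$ is strictly increasing on $[r^*,T]$ and cannot contain the minimizer in its interior. For the left branch I would show the quadratic $q(r)=ar^2-a(T+c)r+(T+2c)$ is still decreasing at $r=r^*$, i.e. that its vertex $r_{\mathrm{vertex}}=(T+c)/2$ lies to the right of $r^*$. This is exactly where assumption (\ref{eq23}), $r^*\ll T$, enters: since $r^*\ll T$ while $(T+c)/2$ is of order $T/2$, we have $r^* < (T+c)/2$, hence $q'(r^*)=2ar^*-a(T+c)=a(2r^*-T-c)<0$, so $f_1$ is strictly decreasing on all of $[0,r^*]$. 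Combining the two branches, $f_1$ strictly decreases up to $r^*$ and strictly increases afterward, so the unique minimizer is $r_1=r^*$.

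One subtlety worth flagging is the continuity/matching of the model at $r=r^*$: the two pieces agree in the value of $p$ (both give $p(r^*)=ar^*\approx 1$, i.e.\ $a\approx 1/r^*$), but the left quadratic and the right line need not join smoothly in slope, so strictly speaking $r_1=r^*$ is the kink point rather than a stationary point of the idealized model. This is fine because Theorem~\ref{thm01} already establishes existence and uniqueness of the true minimizer on the smooth $p(r)$; the model is only used to pin down \emph{where} that minimizer sits, and the sign-change of $f_1'$ across $r^*$ (negative on the left branch, positive on the right branch) is enough to conclude.

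The main obstacle I anticipate is not the algebra but making the informal hypothesis ``$r^*\ll T$'' precise enough to yield $r^* < (T+c)/2$ rigorously; in the write-up I would simply invoke (\ref{eq23}) together with $c<1$ (from the convention $C=1$) and $T$ large, which gives $(T+c)/2 > r^*$ with room to spare, and note that the argument in fact needs only the much weaker bound $r^* \le (T+c)/2$. Everything else is the routine quadratic-vertex computation plus a citation of Theorem~\ref{thm01} for uniqueness.
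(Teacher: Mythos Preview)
Your proposal is correct and follows essentially the same route as the paper: substitute the two linear pieces of the model into $f_1$, obtain the quadratic $ar^2-a(T+c)r+(T+2c)$ on $[0,r^*]$ with vertex $(T+c)/2>r^*$ (by (\ref{eq23})) and the line $r+c$ on $[r^*,T]$, and conclude that $f_1$ decreases up to $r^*$ and increases thereafter. The only cosmetic difference is that the paper phrases each case as ``the minimizer on this sub-interval is $r^*$'' and then combines, whereas you phrase it via the sign of $f_1'$ and cite Theorem~\ref{thm01}; the latter citation is harmless but not actually needed, since your sign analysis already yields uniqueness directly.
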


\begin{proof}
Note that $f_{1}^{L}(r)=p(r)(r+c)$,~$f_{1}^{R}(r)=(1-p(r))(T+2c)$,~and~$f_{1}(r)=f_{1}^{L}(r,p)+f_{1}^{R}(r,p)=p(r)(r+c)+(1-p(r))(T+2c).$

Case 1: For $r^*\leq r\leq T$, because $p_{2}(r)=1$ and $1-p_{2}(r)=0$, so we have $f_{1}^{L}(r)=p_{2}(r)(r+c)=r+c$,~$f_{1}^{R}(r)=0$, and hence $f_{1}(r)=r+c$. Therefore, the optimal solution $r_{R}^{*}$ for $r\geq r^{*}$ is $r^{*}$ itself. That is, 
\begin{equation}
\label{eq29}
r_{R}^{*}=\arg\min\limits_{r^*\leq r\leq T}f_{1}(r)=r^*.
\end{equation}

Case 2: For $0\leq r\leq r^*$, because $p(r)\approx p_{1}(r)=ar$, we have:
\begin{gather*}
f_{1}^{L}(r)=p_{1}(r)(r+c)=ar(r+c),\\
f_{1}^{R}(r)=(1-p(r))(T+2c)=(1-ar)(T+2c),\\
f_{1}(r)=f_{1}^{L}(r,p)+f_{1}^{R}(r,p)=ar^{2}-a(T+c)r+(T+2c),\\
f'_{1}=2ar-a(T+c)=0\Rightarrow\widetilde{r}=\arg\min\limits_{r}f_{1}(r)=(T+c)/2.
\end{gather*}
Because $r^{*}<\widetilde{r}$, $f_{1}(0)=T+2c$, and $f_{1}(r)$ monotonically decreases with $r$ when $r<\widetilde{r}$, the minimum value $r_{L}^{*}$ of $f_{1}(r)$ in the range of $0<r\leq r^{*}$ is $r^{*}$. That is,
\begin{equation}
\label{eq30}
r_{L}^{*}=\arg\min\limits_{0\leq r\leq r^{*}}f_{1}(r)=r^{*}.
\end{equation}

It is observed from (\ref{eq29}) and (\ref{eq30}) that the minimum solutions for $0\leq r\leq r^{*}$ and $r^{*}\leq r\leq T$ are identical to $r^{*}$. Consequently, $r^{*}=\arg\min\limits_{0\leq r\leq T}f_{1}(r)$.

Therefore, optimal minimum solution $r_{1}=\arg\min\limits_{r}f_{1}(r)$ is $r^{*}$.\qed
\end{proof}

\section{Proposed Method: Local-Minimum Based Multi-Stage Cascade}
In this section, we extend one-stage cascade learning to multi-stage cascade learning.

\subsection{Testing Stage}
From (\ref{eq05}), one can see that one-stage cascade is obtained by splitting $H(\mathbf{x})$ into $H_{L}(\mathbf{x},r_{1})$ and $H_{R}(\mathbf{x},r_{1})$ where $r_{1}$ is the optimal $r$ (i.e., $r_{1}=\arg\min\limits_r f_{1}(r)$). We add a superscript "1" to $H_{L}$ and $H_{R}$ so that one explicitly knows that $H_{L}^{1}(\mathbf{x},r_{1})$ and $H_{R}^{1}(\mathbf{x},r_{1})$ correspond to stage 1. Multi-stage cascade is obtained by iteratively splitting the right classifier $H_{R}^{i}$.

As shown in Fig. 4, the sub-windows not rejected by stage 1 are fed to stage 2. The second stage is obtained by further dividing the right classifier $H_{R}^{1}(\mathbf{x},r_{1})$ into two parts at the partition point $r_{2}$ ($r_{2}>r_{1}$):
\begin{gather}
\label{eq31}H_{R}^{1}(\mathbf{x},r_{1})=H_{L}^{2}(\mathbf{x},r_{2})+H_{R}^{2}(\mathbf{x},r_{2}),\\
\label{eq32}H_{L}^{2}(\mathbf{x},r_{2})=\sum\limits_{i=r_{1}+1}^{r_{2}}\alpha_{i}h_{i}(\mathbf{x}),\\
\label{eq33}H_{R}^{2}(\mathbf{x},r_{2})=\sum\limits_{i=r_{2}+1}^{T}\alpha_{i}h_{i}(\mathbf{x}).
\end{gather}

In stage 2, the sub-windows are rejected if the following inequality holds:
\begin{equation}
\label{eq34}
H_{L}^{1}(\mathbf{x},r_{1})+\left[H_{L}^{2}(\mathbf{x},r_{2})+\max\left(H_{R}^{2}(\mathbf{x},r_{2})\right)\right]\leq t.
\end{equation}
(\ref{eq34})  is equivalent to
\begin{equation}
\label{eq35}
H_{L}(\mathbf{x},r_{2})+\max\left(H_{R}(\mathbf{x},r_{2})\right)\leq t,
\end{equation}
because 
\begin{equation}
H_{L}(\mathbf{x},r_{2})=H_{L}^{1}(\mathbf{x},r_{1})+H_{L}^{2}(\mathbf{x},r_{2}).
\end{equation}
But (\ref{eq35}) is more time-consuming than (\ref{eq34}) because $r_{2}$ $(r_{2}>r_{1})$ weak classifiers are used to compute $H_{L}(\mathbf{x},r_{2})$ in (\ref{eq35}) whereas in (\ref{eq34}) $H_{L}^{1}(\mathbf{x},r_{1})$ has been computed in stage 1 and $H_{L}^{2}(\mathbf{x},r_{2})$ can be efficiently computed using as small as $(r_{2}-r_{1})$ weak classifiers where $H_{L}^{1}(\mathbf{x},r_{1})$ can be reused in stage 2.

Analogously, the left classifier in stage $i-1$ can be represented by the left and right classifiers in stage $i$:
\begin{equation}
H_{R}^{i-1}(\mathbf{x},r_{i-1})=H_{L}^{i}(\mathbf{x},r_{i})+H_{R}^{i}(\mathbf{x},r_{i}).
\end{equation}

\begin{figure}[!t]
\label{FigLocalMinimumProcess}
\centering
\includegraphics[width=3in]{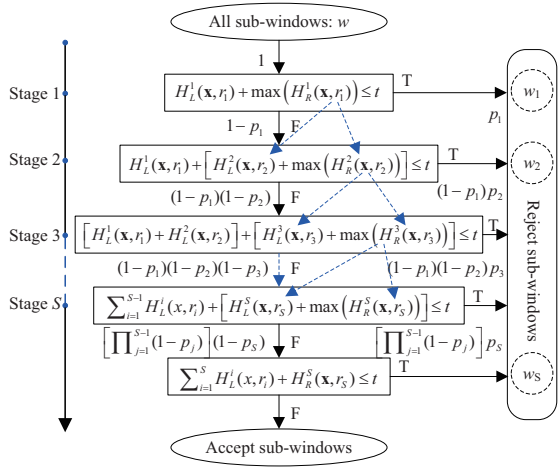}
\caption{Testing process of local-minimum based multi-stage cascade AdaBoost.}
\end{figure}

The block diagram of the multi-stage cascade is shown in Fig. 4 where the rejection rate $p_{i}$ is the ratio of sub-windows rejected in Stage $i$. In stage 1, $p_{1}$ fraction of sub-windows are directly rejected and $1-p_{1}$ fraction of sub-windows are fed to stage 2. Among the $(1-p_{1})w$ sub-windows, $p_{2}$ fractions are rejected by stage 2 and $1-p_{2}$ fractions are considered as positive-class candidates and therefore are fed to stage 3. This means that $(1-p_{1})(1-p_{2})$ fraction of total $w$ sub-windows are to be classified by stage 3. Because $p_{i}$ in stage $i$ is dependant on $p_{i-1}$ in stage $i-1$, we explicitly express $p_{i}$ as $p(r_{i}|r_{i-1})$ when necessary. Specifically, the rejection rate $p(r_{i}|r_{i-1})$ is defined as:
\begin{equation}
p(r_{i}|r_{i-1})\\=\dfrac{I(\sum\limits_{k=1}^{r_{i}}\alpha_{k}h_{k}(\mathbf{x})+\max(H_{R}(\mathbf{x},r_{i}))\leq t)}{I(\sum\limits_{k=1}^{r_{i-1}}\alpha_{k}h_{k}(\mathbf{x})+\max(H_{R}(\mathbf{x},r_{i-1}))>t)}.
\end{equation}

Fig. 5 shows two representative curves of $p(r_{i}|r_{i-1})$. The properties of $p(r_{i}|r_{i-1})$ are summarized as follows:
\begin{gather}
\label{eq39}p'(r_{i}|r_{i-1})\geq 0,\\
\label{eq40}\lim\limits_{r_{i}\to r_{i-1}}p(r_{i}|r_{i-1})=0,\\
\label{eq41}\lim\limits_{r_{i}\to T}p(r_{i}|r_{i-1})=1,\\
\label{eq42}p(r_{i}|\widetilde{r}_{i-1})>p(r_{i}|r_{i-1}),~if~\widetilde{r}_{i-1}<r_{i-1},\\
\label{eq43}p'(r_{i}|\widetilde{r}_{i-1})<p'(r_{i}|r_{i-1}),~if~\widetilde{r}_{i-1}<r_{i-1}.
\end{gather}
\begin{figure}[!t]
\label{FigTwoForm}
\centering
\includegraphics{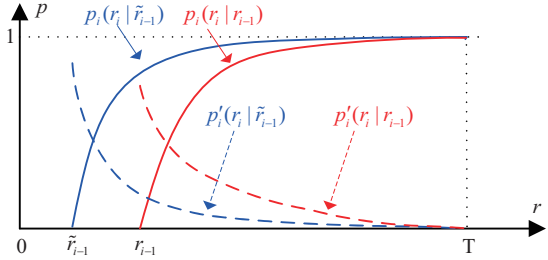}
\caption{The form of $p_{i}(r_{i}|r_{i-1})$ and its properties. If $\widetilde{r}_{i-1}<r_{i-1}$, then $p_{i}(r_{i}|\widetilde{r}_{i-1})>p_{i}(r_{i}|r_{i-1})$ and $p'_{i}(r_{i}|\widetilde{r}_{i-1})<p'_{i}(r_{i}|r_{i-1})$.}
\end{figure}

We give a theoretical guarantee (i.e., Theorem \ref{thm03}) that adding a stage results in reduction in computation cost if certain condition is satisfied.


\begin{theorem}
\label{thm03}
Let $r_{1},\ldots,r_{S}$ define an $S$ stage cascade structure whose computation cost is $f_{S}(r_{1},\ldots,r_{S})$:
\begin{equation}
\label{eq44}
f_{S}(r_{1},\ldots,r_{S})=\sum\limits_{i=1}^{S}f_{i}^{L}(r_{1},\ldots,r_{i})+f_{S}^{R}(r_{S}),
\end{equation}
where\\
\begin{equation}
\label{eq45}
f_{i}^{L}=\left[\prod\limits_{j=1}^{i-1}\left(1-p_{j}(r_{j}|r_{j-1})\right)\right]p_{i}(r_{i}|r_{i-1})(r_{i}+ic),
\end{equation}
\begin{equation}
\label{eq46}
f_{S}^{R}=\left[\prod\limits_{j=1}^{S}(1-p_{j}(r_{j}|r_{j-1}))\right](T+(S+1)c).
\end{equation}
Let $r_{1},\ldots,r_{S-1}$ define an $S-1$ stage cascade structure whose computation cost is $f_{S-1}(r_{1},\ldots,r_{S-1})$:
\begin{equation}
\label{eq47}
f_{S-1}(r_{1},\ldots,r_{S-1})=\sum\limits_{i=1}^{S-1}f_{i}^{L}(r_{1},\ldots,r_{i})+f_{S-1}^{R}(r_{S-1}).
\end{equation}
If $p_S(r_S|r_{S-1})>c/(T+c-r_S)$, then we have
\begin{equation}
f_{S-1}(r_{1},\ldots,r_{S-1})>f_{S}(r_{1},\ldots,r_{S-1},r_{S}).
\end{equation}
\end{theorem}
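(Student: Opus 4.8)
The plan is to subtract the two cost functions and exploit a large cancellation. By (\ref{eq45}), each left-branch cost $f_i^L$ depends only on $r_1,\ldots,r_i$ and the rejection rates $p_1,\ldots,p_i$, none of which is affected by appending a new $S$-th stage. Hence $f_1^L,\ldots,f_{S-1}^L$ are \emph{the same terms} in $f_{S-1}(r_1,\ldots,r_{S-1})$ and in $f_S(r_1,\ldots,r_{S-1},r_S)$, so that
\[
f_{S-1}-f_S \;=\; f_{S-1}^R(r_{S-1}) \;-\; f_S^L(r_1,\ldots,r_S) \;-\; f_S^R(r_S).
\]
The whole inequality thus collapses to a three-term comparison: the old tail cost against the new stage-$S$ left cost plus the new tail cost.

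Next I would factor out the common positive quantity $P:=\prod_{j=1}^{S-1}\bigl(1-p_j(r_j|r_{j-1})\bigr)$ and abbreviate $p:=p_S(r_S|r_{S-1})$. Reading off (\ref{eq46}) once for an $(S-1)$-stage and once for an $S$-stage cascade, together with the $i=S$ instance of (\ref{eq45}), gives $f_{S-1}^R=P(T+Sc)$, $f_S^L=P\,p\,(r_S+Sc)$, and $f_S^R=P(1-p)\bigl(T+(S+1)c\bigr)$. Substituting these — the one genuinely computational step — the $T$-terms and the $Sc$-terms telescope and one is left with
\[
f_{S-1}-f_S \;=\; P\,\bigl[\,p\,(T+c-r_S)-c\,\bigr].
\]

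Finally I would close with a sign check. The factor $P$ is strictly positive (each useful stage rejects a nonzero but not-all fraction, so $0<p_j<1$ by the properties (\ref{eq39})--(\ref{eq41})), and $T+c-r_S>0$ since $r_S\le T$ and $c>0$. Therefore the sign of $f_{S-1}-f_S$ equals the sign of $p(T+c-r_S)-c$, which is positive exactly when $p>c/(T+c-r_S)$, i.e.\ under the stated hypothesis. This yields $f_{S-1}(r_1,\ldots,r_{S-1})>f_S(r_1,\ldots,r_{S-1},r_S)$.

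I do not anticipate a substantive obstacle: the statement is essentially bookkeeping, and the "hard part" is only clerical — verifying that the stage-count constants $ic$ are handled exactly (the threshold $c/(T+c-r_S)$ is sensitive to an off-by-one in the "$(S+1)c$ versus $Sc$" terms), and confirming that the first $S-1$ left costs really do match index-for-index between the two expansions. It is worth adding the interpretation: since $c<1\ll T$, the threshold $c/(T+c-r_S)$ is tiny, so in practice \emph{any} added stage with a non-negligible rejection rate strictly lowers the cost — which is exactly the promised theoretical justification that cascade AdaBoost is cheaper than its non-cascade counterpart.
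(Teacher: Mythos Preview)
Your proposal is correct and follows essentially the same route as the paper: subtract the two costs, cancel the shared terms $f_1^L,\ldots,f_{S-1}^L$, factor out $P=\prod_{j=1}^{S-1}(1-p_j)$, and reduce to the bracket $p_S(T+c-r_S)-c$. The paper's proof is merely a terser version of your computation, and your added interpretive remark about the smallness of the threshold $c/(T+c-r_S)$ matches the paper's own comment immediately following the proof.
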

\begin{proof}\\
\begin{array}{l@{~}l}
\because & f_{S-1}(r_{1},\ldots,r_{S-1})-f_{S}(r_{1},\ldots,r_{S-1},r_{S})\\
&=f_{S-1}^{R}(r_{S-1})-f_{S}^{L}(r_{1},\ldots,r_{S})-f_{S}^{R}(r_{S})\\
&=\left[\prod\limits_{j=1}^{S-1}(1-p_{j})\right]\left[p_S (r_S|r_{S-1})(T+c-r_S)-c\right],
\end{array}
\begin{array}{l@{~}l}
\because & 1-p_{j}>0,~p_{S}>0,\\
\therefore & \text{if}~ p_S(r_S|r_{S-1})>c/(T+c-r_S), \text{then}~ f_{S-1}>f_{S}.
\end{array} 
\end{proof}\qed

Note that if the computational cost $c$ is omitted, then $f_{S-1} > f_S$ as long as $p_S(r_S|r_{S-1}) > 0$. In this case, it is optimal
that each stage contains a new weak classifier (i.e., the case
$S = T$ , $r1 = 1, r2 = 2, ..., r_T = T$ ). But $c \neq 0$ in practice, it
is necessary to let $S < T$ and find way to search the optimal
values of $r_1, ..., r_S$.

\subsection{Training Stage: How to Select Optimal $r_{i}$}
Section IV.A describes the testing stage of the proposed cascade method. Now we describe the training stage of the proposed method which is closely related to Section III.B.

\subsubsection{Existence and Uniqueness}
Investigating Fig. 4, one can find that the cascade structure is completely determined once $r_{1},\ldots,r_{S}$ are known. Therefore, the main task of the training stage is to find the optimal $r_{1},\ldots,r_{S}$.

The $r_{1}$ in stage 1 is obtained by the method in Section III.B. Given $r_{1}$, we learn the  best $r_{2}$:
\begin{equation}
r_{2}=\arg\min\limits_{r}f_{2}(r_{1},r)=\arg\min\limits_{r}f_{2}(r|r_{1}).
\end{equation}
Similar to the proof of Theorem \ref{thm01}, it can be  proved that $f_{2}(r|r_{1})$ has a unique solution.

Generally, $r_{i}$ is computed based on $r_{1},\ldots,r_{i-1}$:
\begin{equation}
\label{eq50}
\begin{split}
r_{i}&=\arg\min\limits_{r_{i-1}<r<T}f_{i}(r_{1},\ldots,r_{i-1},r)\\
&=\arg\min\limits_{r_{i-1}<r<T}f_{i}(r|r_{1},\ldots,r_{i-1}).
\end{split}
\end{equation}

In (\ref{eq50}), $f_{i}(r|r_{1},\ldots,r_{i-1})$ is used to describe the assumption that $r_{1},\ldots,r_{i-1}$ in the first $i-1$ stages are given. If $r_{1},\ldots,r_{i-1}$ and $p(r_{1}),p(r_{2}|r_{1}), \ldots, p(r_{i-1}|r_{i-2})$ are known, then $f_{i}(r|r_{1},\ldots,r_{i-1})$ will be in the similar form as $f_{1}(r)$ (see (\ref{eq14})):
\begin{equation}
\label{eq51}
\begin{array}{l@{~}l}
& f_{i}(r_{i}|r_{1},\ldots,r_{i-1})\\
&=\sum\limits_{j=1}^{i}f_{j}^{L}(r_{1},\ldots,r_{j})+f_{i}^{R}(r_{i})\\
&=\sum\limits_{j=1}^{i-1}f_{j}^{L}(r_{1},\ldots,r_{j})+f_{i}^{L}(r_{1},\ldots,r_{i})+f_{i}^{R}(r_{i})\\
&=\left[\sum\limits_{j=1}^{i-1}f_{j}^{L}(r_{1},\ldots,r_{j})\right]\\
&+\left[\prod\limits_{j=1}^{i-1}(1-p_{j}(r_{j}|r_{j-1}))\right]p_{i}(r_{i}|r_{i-1})(r_{i}+ic)\\
&+\left[\prod\limits_{j=1}^{i-1}(1-p_{j}(r_{j}|r_{j-1}))\right](1-p_{i}(r_{i}|r_{i-1}))(T+(i+1)c).
\end{array}
\end{equation}

Because the items in bracket in (\ref{eq51}) are constant, so $f_{i}(r|r_{1},\ldots,r_{i-1})$ is in the similar form as $f_{1}(r_{1})$. Therefore, as a corollary of Theorem \ref{thm01}, we have the following theorem:

\begin{theorem}
\label{thm04}
$\min\limits_{r}f_{i}(r_{1},\ldots,r_{i-1},r)=\min\limits_{r}f_{i}(r|r_{1},\ldots,\\r_{i-1})$ has a unique minimum solution $r_{i}\in [r_{i-1},T]$. Moreover, $f_{i}(r|r_{1},\ldots,r_{i-1})$ monotonically decreases with $r$ until $r=r_{i}$ and then increases with $r$.
\end{theorem}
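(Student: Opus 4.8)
Theorem~\ref{thm04} is billed as a corollary of Theorem~\ref{thm01}, so the plan is to make that corollary transparent. Fixing $r_1,\ldots,r_{i-1}$ (hence also $p_1,\ldots,p_{i-1}$ and $f_1^L,\ldots,f_{i-1}^L$), I read off (\ref{eq51}) that the bracketed quantities $\sum_{j=1}^{i-1}f_j^L(r_1,\ldots,r_j)$ and $\prod_{j=1}^{i-1}(1-p_j(r_j\mid r_{j-1}))$ are independent of the free variable $r$, so that
\[
f_i(r\mid r_1,\ldots,r_{i-1})=A+B\,g_i(r),
\]
where $A=\sum_{j=1}^{i-1}f_j^L$, $B=\prod_{j=1}^{i-1}(1-p_j)>0$ (positive because every rejection rate satisfies $0\le p_j<1$), and $g_i(r)=p_i(r\mid r_{i-1})(r+ic)+(1-p_i(r\mid r_{i-1}))(T+(i+1)c)$. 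Rearranging, $g_i(r)=T+(i+1)c+p_i(r\mid r_{i-1})(r-T-c)$, which is structurally identical to $f_1$ in (\ref{eq14}): the constants $c,2c$ merely become $ic,(i+1)c$, but only the term $r-T-c$ enters what follows. Since $B>0$, minimising $f_i$ over $r\in[r_{i-1},T]$ is equivalent to minimising $g_i$, and the two functions share the same monotonicity profile and the same minimiser.

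Next I would replay the proof of Theorem~\ref{thm01} with $g_i$ in place of $f_1$ and $p_i(\cdot\mid r_{i-1})$ on $[r_{i-1},T]$ in place of $p$ on $[0,T]$. From (\ref{eq39})--(\ref{eq41}) together with the piecewise-linear modelling of Fig.~5, $p_i(\cdot\mid r_{i-1})$ is nondecreasing with strictly decreasing slope, satisfies $p_i(r_{i-1}\mid r_{i-1})=0$, $p_i(T\mid r_{i-1})=1$, $p_i'(T\mid r_{i-1})=0$ and $p_i'(r_{i-1}^+\mid r_{i-1})\gg0$ — the analogues of (\ref{eq16})--(\ref{eq22}) with the left endpoint $0$ shifted to $r_{i-1}$. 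Then $g_i'(r)=p_i'(r\mid r_{i-1})(r-T-c)+p_i(r\mid r_{i-1})$ has $g_i'(r_{i-1}^+)<0$ and $g_i'(T)\approx 1>0$, so by the intermediate value theorem $g_i'$ vanishes at some $r_i\in(r_{i-1},T)$. For uniqueness I would argue by contradiction exactly as in Theorem~\ref{thm01}: if $r_i<r_i'$ were two zeros of $g_i'$, then
\[
g_i'(r_i)-g_i'(r_i')=\bigl[p_i(r_i\mid r_{i-1})-p_i(r_i'\mid r_{i-1})\bigr]-\bigl[p_i'(r_i\mid r_{i-1})(T+c-r_i)-p_i'(r_i'\mid r_{i-1})(T+c-r_i')\bigr],
\]
in which the first bracket is negative by monotonicity and the second is positive because $p_i'$ is strictly decreasing while $T+c-r>0$ on $[r_{i-1},T]$; hence $g_i'(r_i)-g_i'(r_i')<0$, contradicting $0-0$. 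So $g_i'$ has a single zero, is negative on $(r_{i-1},r_i)$ and positive on $(r_i,T)$; therefore $g_i$, and with it $f_i=A+Bg_i$, decreases until $r=r_i$ and increases thereafter, and $r_i\in[r_{i-1},T]$ is the unique minimiser.

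The one place that deserves care — and essentially the only obstacle — is the hypothesis on $p_i(\cdot\mid r_{i-1})$: conditions (\ref{eq39})--(\ref{eq43}) as stated supply monotonicity in $r_i$, the two endpoint values, and the behaviour under changing the conditioning $r_{i-1}$, but the sign analysis above additionally uses strict concavity in $r_i$, $p_i'(T)=0$, and $p_i'(r_{i-1}^+)\gg0$; these must be added as modelling assumptions in the same spirit in which (\ref{eq17}) and (\ref{eq20})--(\ref{eq22}) were assumed for $p$. Granting that, nothing genuinely new arises: the argument is a line-by-line transcription of the proof of Theorem~\ref{thm01}, and the strict signs $g_i'(r_{i-1}^+)<0$, $g_i'(T)>0$ guarantee that the minimiser lies strictly inside $[r_{i-1},T]$ rather than at an endpoint.
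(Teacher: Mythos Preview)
Your proposal is correct and takes essentially the same approach as the paper: the paper's entire argument is the remark preceding the statement that ``the items in bracket in (\ref{eq51}) are constant, so $f_i(r\vert r_1,\ldots,r_{i-1})$ is in the similar form as $f_1(r_1)$,'' and then it declares Theorem~\ref{thm04} a corollary of Theorem~\ref{thm01} without further detail. Your write-up makes this reduction explicit (the affine relation $f_i=A+Bg_i$ with $B>0$) and replays the existence/uniqueness argument line by line; your caveat about needing the concavity and endpoint-slope assumptions on $p_i(\cdot\mid r_{i-1})$ is well taken and is implicitly assumed in the paper via Fig.~5 and the stated ``similar form.''
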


Theorem \ref{thm04} implies that $f_{i}(r|r_{1},\ldots,r_{i-1})$ has the similar form as the curve in Fig. 3.

\subsubsection{Efficient Search}
The search range of $r_{i}$ is $(r_{i-1},T)$. However, because $f_{i}(r|r_{1},\ldots,r_{i-1})$ monotonically decreases with $r$ until $r=r_{i}$ and then increases with $r$, to find the unique minimum solution one can increase $r$ from $r_{i-1}$ with a small step and stop at the value once $f_{i}(r|r_{1},\ldots,r_{i-1})$ no longer decreases. Therefore, the practical range is less than $(r_{i-1},T)$.

The search range can be further reduced according to the following increasing phenomenon.

\begin{theorem}
\label{thm05}
If $r_{i}=\arg\lim\limits_{r_{i-1}<r<T}f_{i}(r|r_{1},\ldots,r_{i-1})$, $r_{i-1}=\arg\min\limits_{r_{i-2}<r<T}f_{i-1}(r|r_{1},\ldots,r_{i-2})$, $r_{i+1}=\arg\min\limits_{r_{i}<r<T}f_{i+1}(r|r_{1},\ldots,r_{i})$ and $2r_{i}-r_{i-1}<T$, then it holds:
\begin{gather}
r_{i}-r_{i-1}\leq r_{i+1}-r_{i},\\
r_{i+1}\geq 2r_{i}-r_{i-1}.
\end{gather}
We define $r_{0}=0$, so we have:
\begin{equation}
\label{eq54}
r_{2}\geq 2r_{1}.
\end{equation}
\end{theorem}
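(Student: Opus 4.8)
The plan is to pass from the cost functions to their derivatives, reduce the claim to a single scalar inequality about the rejection-rate curve, and then settle that inequality using the first-order optimality condition at $r_{i}$ together with the concavity and plateau properties of the curve.

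First I would strip off the constants. By (\ref{eq51}) and Theorem \ref{thm04}, for each stage $j$ one has $f_{j}(r\mid r_{1},\ldots,r_{j-1}) = A_{j}+B_{j}\,g_{j}(r)$ with $A_{j}=\sum_{k<j}f_{k}^{L}$ and $B_{j}=\prod_{k<j}(1-p_{k}(r_{k}\mid r_{k-1}))>0$ both independent of $r$, where $g_{j}(r)=p_{j}(r\mid r_{j-1})(r+jc)+(1-p_{j}(r\mid r_{j-1}))(T+(j+1)c)$. Hence $r_{j}$ minimizes $g_{j}$, so $g_{j}'(r_{j})=0$ with $g_{j}'=p_{j}'(r\mid r_{j-1})(r-T-c)+p_{j}(r\mid r_{j-1})$; because $p_{j}(\cdot\mid r_{j-1})$ is increasing and concave (Fig. 5, inherited from (\ref{eq17})) and $r-T-c<0$ on $(r_{j-1},T)$, we get $g_{j}''=p_{j}''(\cdot)(r-T-c)+2p_{j}'(\cdot)\ge 0$, so $g_{j}'$ is (strictly) increasing and $r_{j}$ is its unique zero, which is exactly the ``decreases then increases'' shape of Theorem \ref{thm04}. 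Rearranging the target, $r_{i+1}\ge 2r_{i}-r_{i-1}$ is equivalent, since $g_{i}'$ is increasing, to $g_{i}'(2r_{i}-r_{i-1})\le g_{i}'(r_{i+1})$, so everything reduces to identifying $g_{i}'(r_{i+1})$ and then estimating $g_{i}'(2r_{i}-r_{i-1})$.

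The key identity I would establish is the chain rule for the rejection rates: since the set of sub-windows rejected once $r$ weak classifiers are evaluated grows monotonically with $r$ (because $\max H_{R}(\mathbf{x},r)$ decreases with $r$, so (\ref{eq09}) is preserved), splitting stage $i$ at $r_{i}$ gives, for $r\ge r_{i}$,
\begin{equation*}
p_{i}(r\mid r_{i-1}) = p_{i}(r_{i}\mid r_{i-1}) + \bigl(1-p_{i}(r_{i}\mid r_{i-1})\bigr)\,p_{i+1}(r\mid r_{i}),
\end{equation*}
and differentiating, $p_{i}'(r\mid r_{i-1})=(1-p_{i}(r_{i}\mid r_{i-1}))\,p_{i+1}'(r\mid r_{i})$. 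Substituting these two relations into $g_{i+1}'$ and using $1-p_{i}(r_{i}\mid r_{i-1})>0$ yields $(1-p_{i}(r_{i}\mid r_{i-1}))\,g_{i+1}'(r)=g_{i}'(r)-p_{i}(r_{i}\mid r_{i-1})$ for all $r\ge r_{i}$; setting $r=r_{i+1}$ (where $g_{i+1}'=0$) gives $g_{i}'(r_{i+1})=p_{i}(r_{i}\mid r_{i-1})$. If one prefers to avoid this identity, properties (\ref{eq42})--(\ref{eq43}) alone give $g_{i+1}'\le g_{i}'$ on $[r_{i},T)$, which already recovers the weaker fact $r_{i+1}>r_{i}$ but not the quantitative gap. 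So it remains to prove $g_{i}'(2r_{i}-r_{i-1})\le p_{i}(r_{i}\mid r_{i-1})$.

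Write $q(r)=p_{i}(r\mid r_{i-1})$, $\delta=r_{i}-r_{i-1}>0$ and $\bar r=2r_{i}-r_{i-1}=r_{i}+\delta$, with $\bar r<T$ by hypothesis; note $q(r_{i-1})=0$, which is (\ref{eq40}), and for $i=1$ it is (\ref{eq19}) under the convention $r_{0}=0$, $p_{1}(\cdot\mid r_{0})=p(\cdot)$. Using the optimality relation $q(r_{i})=q'(r_{i})(T+c-r_{i})$ and unwinding $g_{i}'(\bar r)=q'(\bar r)(\bar r-T-c)+q(\bar r)$, the desired $g_{i}'(\bar r)\le q(r_{i})$ becomes $q(\bar r)-q(r_{i})\le q'(\bar r)(T+c-\bar r)$; concavity bounds the left side by $q'(r_{i})\delta$, so it suffices to show $q'(r_{i})\delta\le q'(\bar r)(T+c-\bar r)$. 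Concavity and $q(r_{i})\ge q'(r_{i})\delta$ together with the optimality relation already give $T+c-r_{i}\ge\delta$, hence $T+c-\bar r\ge 0$; what is really needed is that $q'$ does not fall off too fast over the short interval $[r_{i},\bar r]$ relative to the very large factor $T+c-\bar r$. Here the saturation structure enters: by reasoning as in Theorem \ref{thm02} the optimal partition points lie in the plateau region, $\bar r\lesssim 2r^{*}\ll T$ (properties (\ref{eq20}), (\ref{eq22}), (\ref{eq23}) and Figs. 2 and 5), so $T+c-\bar r\approx T$ dwarfs $\delta$ and dominates the bounded decay $q'(\bar r)/q'(r_{i})$; in the piecewise-linear idealization of Fig. 2 the slope $q'$ is constant on the rising stretch and the inequality collapses to $\delta\le T+c-\bar r$, i.e.\ the hypothesis $\bar r<T$ strengthened by $c$. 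With this, $g_{i}'(2r_{i}-r_{i-1})\le p_{i}(r_{i}\mid r_{i-1})=g_{i}'(r_{i+1})$ and monotonicity of $g_{i}'$ give $r_{i+1}\ge 2r_{i}-r_{i-1}$, equivalently $r_{i}-r_{i-1}\le r_{i+1}-r_{i}$; taking $i=1$, $r_{0}=0$ yields $r_{2}\ge 2r_{1}$. The main obstacle is exactly this last scalar inequality $q'(\bar r)(T+c-\bar r)\ge q'(r_{i})\delta$: concavity alone does not suffice, and making the argument fully rigorous requires quantifying ``$r^{*}\ll T$'' so that the length-$\delta$ decay of the rejection-rate slope is negligible against the remaining range $T+c-\bar r$.
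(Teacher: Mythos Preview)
Your approach is correct but takes a genuinely different route from the paper. The paper's proof is a short heuristic argument that invokes Theorem~\ref{thm02} at the outset: under the piecewise-linear model it identifies each $r_{j}$ with the saturation point $r_{j}^{*}$ of $p(\cdot\mid r_{j-1})$, rewrites the saturation increments as $\Delta r_{j}^{*}=\arg\min_{\Delta r}\{\Delta r\mid 1-p(r_{j-1}+\Delta r\mid r_{j-1})\le\varepsilon\}$, and then appeals to property (\ref{eq42}) and Fig.~6 to conclude that the shifted curve $p(r_{i}+\Delta r\mid r_{i})$ lies below $p(r_{i-1}+\Delta r\mid r_{i-1})$, so it saturates later and $\Delta r_{i+1}^{*}>\Delta r_{i}^{*}$. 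No derivatives of $f_{j}$ are computed.

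Your argument is more analytical: you work directly with $g_{j}'$, derive the chain-rule identity $p_{i}(r\mid r_{i-1})=p_{i}(r_{i}\mid r_{i-1})+(1-p_{i}(r_{i}\mid r_{i-1}))p_{i+1}(r\mid r_{i})$ (which the paper does not state), obtain the clean relation $g_{i}'(r_{i+1})=p_{i}(r_{i}\mid r_{i-1})$, and reduce everything to the scalar inequality $q'(r_{i})\delta\le q'(\bar r)(T+c-\bar r)$. This buys you a sharper picture of \emph{where} the extra assumption is needed---you isolate it to a single estimate on the decay of $q'$ over $[r_{i},\bar r]$ versus the remaining range $T+c-\bar r$---whereas the paper absorbs the approximation wholesale by replacing optima with saturation points from the start. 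Both arguments ultimately rest on the same ``$r^{*}\ll T$'' structure, and your honest flag that concavity alone does not close the last step is accurate; the paper does not close it more rigorously, it simply works inside the idealized model throughout.
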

\begin{proof}
This theorem can be proved by using Theorem \ref{thm02} and the properties of $p_{i}(r|r_{i-1})$ and $p_{i+1}(r|r_{i})$

The curves of $p_{i+1}(r|r_{i})$ with $r>r_{i}$ and $p_{i}(r|r_{i-1})$ with $r>r_{i-1}$ have the similar shapes according to Fig. 2.  The difference between $p_{i}(r|r_{i-1})$ and $p_{i+1}(r|r_{i})$ can be characterized by their saturation points $r_{i}^{*}=\arg\min\limits_{r}\{r|1-p(r|r_{i-1})\leq\varepsilon\}$ and $r_{i+1}^{*}=\arg\min\limits_{r}\{r|1-p(r|r_{i})\leq\varepsilon\}$. Define increasing step $\Delta r_{i}^{*}=r_{i}^{*}-r_{i-1}$ and $\Delta r_{i+1}^{*}=r_{i+1}^{*}-r_{i}$, then $r_{i}^{*}=r_{i-1}+\Delta r_{i}^{*}$ and $r_{i+1}^{*}=r_{i}+\Delta r_{i+1}^{*}$ can be rewritten as:
\begin{gather}
\Delta r_{i}^{*}=\arg\min\limits_{\Delta r}\{\Delta r|1-p(r_{i-1}+\Delta r|r_{i-1})\leq\varepsilon\},\\
\Delta r_{i+1}^{*}=\arg\min\limits_{\Delta r}\{\Delta r|1-p(r_{i}+\Delta r|r_{i})\leq\varepsilon\}.
\end{gather}
Now investigate the curves of $p(r_{i-1}+\Delta r|r_{i-1})$ and $p(r_{i}+\Delta r|r_{i})$ (see Fig. 6). According to the property (i.e., (\ref{eq42})) of $p(r_{i}|r_{i-1})$, $\Delta r_{i+1}^{*}>\Delta r_{i}^{*}$ holds because $r_{i}>r_{i-1}$.
\begin{figure}[!t]
\label{FigDeta}
\centering
\includegraphics{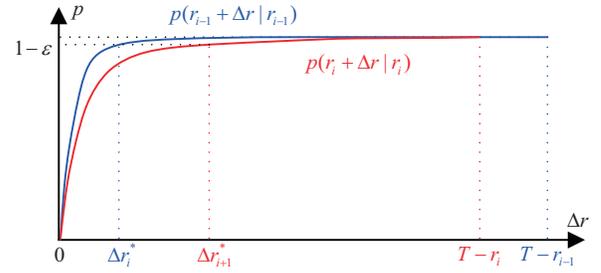}
\caption{$p(r_{i-1}+\Delta r|r_{i-1})$ and $p(r_{i}+\Delta r|r_{i})$.}
\end{figure}\qed
\end{proof} 

Fig. 7 illustrates the nature of Theorem \ref{thm05} where the objective functions and estimated optimal solution at saturation points are shown. The relationship of $r_{i-1}$, $r_{i}$, and $r_{i+1}$ are $r_{i-1}-r_{i-2}<r_{i}-r_{i-1}<r_{i+1}-r_{i}$ (i.e., $\Delta r_{i-1}<\Delta r_{i}<\Delta r_{i+1}$).
\begin{figure}[!t]
\label{FigIllustration}
\centering
\includegraphics{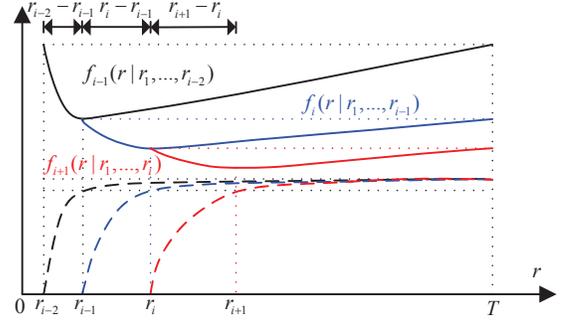}
\caption{Illustration of Theorem \ref{thm05}.}
\end{figure}

According to Theorem \ref{thm05}, if $r_{i}=\arg\min\limits_{r_{i-1}<r<T}f_{i}(r|r_{1},\ldots,r_{i-1})$ and $r_{i-1}=\arg\min\limits_{r_{i-2}<r<T}f_{i}(r|r_{1},\ldots,r_{i-2})$ are already known, then the search range for $r_{i+1}$ will be reduced to $[r_{i}+(r_{i}-r_{i-1}),T]$ (i.e., $[2r_{i}-r_{i-1},T]$) where $r_{i}-r_{i-1}$ is called increasing step. 

The training process is shown in Fig. 8 where the increasing phenomenon is used for efficient minimization.
\begin{figure}[!t]
\label{FigLocalCascadeLearning}
\centering
\includegraphics{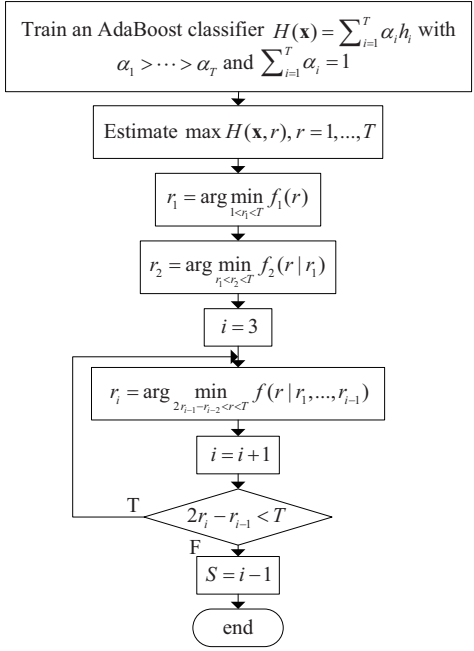}
\caption{Local-minimum based multi-stage cascade learning.}
\end{figure}

\section{Proposed Method: Joint-Minimum Based Multi-Stage Cascade}

\subsection{Existence and Uniqueness of a Jointly Optimal Solution}
The method in Section IV is a greedy optimization algorithm because it seeks 
an optimal $r_i $ on the condition that ($r_1 $,\ldots ,$r_{i - 1} )$ are 
known and fixed. The objective function is $f_i (r\vert r_1 ,...,r_{i - 1} 
)$, $i = 1,...,S$. In this section, we give an algorithm for jointly seeking 
the optimal ($r_1 $,\ldots ,$r_S )$ that globally minimizes the objective 
function $f(r_1 ,...,r_S )$ instead of $f_S (r\vert r_1 ,...,r_{S - 1} )$. 
That is, the goal of joint optimization is to find $(r_1^ * ,...,r_S^ * ) = 
\arg \mathop {\min }\limits_{r_1 ,...,r_S }f_S (r_1 ,...,r_S )$

For the sake of clarity, we start with establishing a globally optimal 
two-stage cascade structure. The globally optimal cascade structure with 
more than two stages will be extended from the two-stage one.

The goal of jointly optimal two-stage cascade learning aims at finding 
$(r_1^ * ,r_2^ * ) = \arg \mathop {\min }\limits_{r_1 ,r_2 }f_2 (r_1 ,r_2 
)$.

Obviously, if both $f_2 (r\vert r_1 ) = f_2 (r_1 ,r)$ and $f_2 (r\vert r_2 ) 
= f_2 (r,r_2 )$ have unique minimization solutions, then $f_2 (r_1 ,r_2 )$ has unique 
minimization solutions. $f_2 (r\vert r_1 )$ means the objective function of a two-stage cascade 
where the parameter $r_1 $ of stage 1 is known and the parameter $r$ of 
stage 2 is a unknown variable. $f_2 (r\vert r_2 )$ stands for the situation 
where the parameter $r_2 $ of stage 2 is known and the parameter $r$ of 
stage 1 is a variable. The theorems related to the jointly optimization are 
as follows.

\begin{theorem}
\label{thm06}
$\mathop {\min }\limits_r f_2 (r,r_2 ) = \mathop {\min 
}\limits_r f_2 (r\vert r_2 )$ has a unique minimum solution $r_1 $.
\end{theorem}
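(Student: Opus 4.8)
The plan is to reduce $f_2(r\vert r_2)=f_2(r,r_2)$ to a function of the single variable $r$ that has exactly the same shape as $f_1(r)$ in Theorem~\ref{thm01}, and then reuse that proof almost verbatim. Writing $r$ for the (variable) stage-1 parameter and keeping $r_2$ fixed, (\ref{eq44})--(\ref{eq46}) with $S=2$ give
\begin{equation*}
f_2(r,r_2)=p(r)(r+c)+(1-p(r))\,p(r_2\vert r)(r_2+2c)+(1-p(r))(1-p(r_2\vert r))(T+3c).
\end{equation*}
The key observation is that the negative sub-windows surviving both stages are exactly those surviving a single stage built from the first $r_2$ weak classifiers: since $\max H_R(\mathbf{x},r)=\sum_{i=r+1}^T\alpha_i$ and $H_L(\mathbf{x},r_2)=H_L(\mathbf{x},r)+\sum_{i=r+1}^{r_2}\alpha_i h_i(\mathbf{x})$ with $\alpha_i\ge\alpha_i h_i(\mathbf{x})$, we get $H_L(\mathbf{x},r)+\max H_R(\mathbf{x},r)\ge H_L(\mathbf{x},r_2)+\max H_R(\mathbf{x},r_2)$, so $\{H_L(\mathbf{x},r_2)+\max H_R(\mathbf{x},r_2)>t\}\subseteq\{H_L(\mathbf{x},r)+\max H_R(\mathbf{x},r)>t\}$. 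Hence $(1-p(r))(1-p(r_2\vert r))=1-p(r_2)$, a quantity \emph{independent of} $r$, and likewise $(1-p(r))p(r_2\vert r)=p(r_2)-p(r)$. Substituting,
\begin{equation*}
f_2(r,r_2)=p(r)(r+c)+(p(r_2)-p(r))(r_2+2c)+(1-p(r_2))(T+3c),
\end{equation*}
whose last term is constant in $r$.

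Differentiating, $f_2'(r)=p'(r)(r-r_2-c)+p(r)$, which is precisely the derivative appearing in Theorem~\ref{thm01} with $T$ replaced by $r_2$; so the argument proceeds as there. For \emph{existence}, (\ref{eq19}) and (\ref{eq21}) give $f_2'(0)=-p'(0)(r_2+c)<0$, while at the right end of the admissible range $r\to r_2$ we obtain $f_2'(r_2)=p(r_2)-c\,p'(r_2)>0$ because $r_2$ lies at or past the saturation point $r^*$ of $p$, so $p(r_2)\approx 1$ and $p'(r_2)\approx 0$ (this plays the role of $p'(T)=0$ from (\ref{eq20}) used in Theorem~\ref{thm01}). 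Continuity of $f_2'$ and the intermediate value theorem then give a zero $r_1\in(0,r_2)$.

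For \emph{uniqueness}, argue by contradiction exactly as in Theorem~\ref{thm01}: if $\rho_1<\rho_2$ were two critical points, then
\begin{equation*}
f_2'(\rho_1)-f_2'(\rho_2)=\big[p(\rho_1)-p(\rho_2)\big]-\big[p'(\rho_1)(r_2+c-\rho_1)-p'(\rho_2)(r_2+c-\rho_2)\big],
\end{equation*}
and by (\ref{eq16}) $p(\rho_1)<p(\rho_2)$, by (\ref{eq17}) $p'(\rho_1)>p'(\rho_2)\ge 0$, and $r_2+c-\rho_1>r_2+c-\rho_2>0$ since $\rho_2<r_2$; hence $f_2'(\rho_1)-f_2'(\rho_2)<0$, contradicting $f_2'(\rho_1)=f_2'(\rho_2)=0$. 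So the critical point is unique, it is therefore the global minimum, and $f_2(\cdot,r_2)$ decreases up to $r_1$ and increases afterwards.

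The main obstacle is the sign of $f_2'$ at the right endpoint: unlike the endpoint $r=T$ in Theorem~\ref{thm01}, where (\ref{eq20}) forces $p'(T)=0$ outright, here one only knows that $p$ saturates rapidly, so $f_2'(r_2)>0$ must be justified (or assumed) under the hypothesis that $r_2$ sits at or beyond the saturation point $r^*$ — which is consistent with Theorem~\ref{thm02} and with the increasing/decreasing-phenomenon picture in which $r_2\ge 2r_1$. A secondary technical point is making the cancellation identity $(1-p(r))(1-p(r_2\vert r))=1-p(r_2)$ fully rigorous from the monotonicity of $\max H_R$; it is routine, but it is exactly what eliminates the awkward $p(r_2\vert r)$ dependence and lets the Theorem~\ref{thm01} machinery apply unchanged.
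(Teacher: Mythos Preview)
Your approach is essentially the paper's: both hinge on the identity $p(r)+(1-p(r))p(r_2\vert r)=p(r_2)$ (equivalently your $(1-p(r))(1-p(r_2\vert r))=1-p(r_2)$) to collapse $f_2(\cdot\vert r_2)$ to a one-variable function with derivative $f_2'(r)=p'(r)(r-r_2-c)+p(r)$, after which existence and uniqueness run exactly as in Theorem~\ref{thm01}. The only notable difference is order of operations --- you substitute the identity into $f_2$ and then differentiate, while the paper differentiates first and then applies the differentiated identity (its (\ref{eq57})--(\ref{eq58})) --- and this is cosmetic.

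Where you do diverge is at the right endpoint, and the paper's choice is cleaner than yours. You evaluate $f_2'$ at $r=r_2$, obtain $p(r_2)-c\,p'(r_2)$, and then have to invoke saturation to argue this is positive; you correctly flag this as the main obstacle. The paper instead evaluates at $r=r_2+c$, where the first term $p'(r)(r-r_2-c)$ vanishes identically, giving $f_2'(r_2+c)=p(r_2+c)>0$ with no appeal to saturation at all. Since $c<1$, the resulting interval $[1,r_2+c)$ still yields an integer minimizer in $[1,r_2)$. So the ``obstacle'' you identify disappears with this small shift of endpoint; you may want to adopt it.
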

\begin{proof}
We first prove the existence of the minimum solution and 
then give the evidence of the uniqueness of the minimum solution.

\textit{Existence: }

\begin{array}{l@{~}l}
\because f_2 (r\vert r_2 )= & p_1 (r)(r+c) + (1 - p_1 (r))p_2 (r_2 \vert r)(r_2+2c)\\
& + (1 - p_1 (r))(1 - p_2 (r_2 \vert r))(T+3c),
\end{array}

\begin{array}{l@{~}l}
\therefore f'_{2}(r\vert r_{2}) = & p'_1 (r)(r-2c-T)+ p_1 (r)\\
&  + (T +c- r_2 ) p'_1 (r)p_2 (r_2 \vert r)\\
& - (T +c- r_2 )(1 - p_1 (r))p'_2 (r_2 \vert r).
\end{array}

Because the sum of rejected negative sub-windows of stage 1 and stage 2 is a 
const $\eta > 0$ once $r_2 $ is fixed:
\begin{equation}
\label{eq57}
p_1 (r) + (1 - p_1 (r))p_2 (r_2 \vert r) = p_1 (r_2 ) = \eta .
\end{equation}
Computing the derivative of $r$ to both sides of (\ref{eq57}) yields:
\begin{equation}
\label{eq58}
{p}'_1 (r)p_2 (r_2 \vert r) - (1 - p_1 (r)){p}'_2 (r_2 \vert r) = {p}'_1 
(r).
\end{equation}
Therefore, we can get $f'_{2}(r\vert r_{2})$ as
\begin{equation}
\label{eq59}
\begin{split}
{f}'_2 (r\vert r_2 ) = p_1 ^\prime (r)(r - r_2-c ) + p_1 (r).
\end{split}
\end{equation}

$\because\mathop {\lim }\limits_{r \to 0} p_1 (r) = 0$ (see (\ref{eq19})) and ${p}'_1 (r) \ge 
0$ (see (\ref{eq17})),

$\therefore\mathop {\lim }\limits_{r \to 0} {f}'_2 (r\vert r_2 ) = - (r_2+c) p_1 ^\prime 
(0) < 0$.

$\because\mathop {\lim }\limits_{r \to r_2+c } {f}'_2 (r\vert r_2 ) = p_1 (r_2+c ) > 0$.

$\because\mathop {\lim }\limits_{r \to r_2+c } {f}'_2 (r\vert r_2 ) > 0$, $\mathop 
{\lim }\limits_{r \to 0} {f}'_2 (r\vert r_2 ) < 0$ and ${f}'_2 (r\vert r_2 
)$ is a continuous function,

$\therefore$ It must exist a $r_1 \in [1, r_2+c )$ satisfying ${f}'_2 (r_1 \vert r_2 ) = 
0$ and $r_1 = \arg \min f_2 (r\vert r_2 )$.

\textit{Uniqueness (Proof by contradiction)}: 

Suppose there are two local minimum solutions $\tilde {r}_1 $ and $r_1 $ with $\tilde 
{r}_1 < r_1 $. Then it holds that ${f}'_2 (\tilde {r}_1 \vert r_2 ) - {f}'_2 
(r_1 \vert r_2 ) = 0$. 

Now investigate the value of ${f}'_2 (\tilde {r}_1 \vert r_2 ) - {f}'_2 (r_1 
\vert r_2 )$ = $\left[ {p_1 ^\prime (\tilde {r}_1 )(\tilde {r}_1 - r_2 -c) - p_1 ^\prime 
(r_1 )(r_1 - r_2 -c)} \right] + \left[ {p_1 (\tilde {r}_1 ) - p_1 (r_1 )} \right]$ if $\tilde {r}_1 < r_1 $ is true.

$\because\tilde {r}_1 < r_1,$

$\therefore{p}'(\tilde {r}_1 ) > {p}'(r_1 ) > 0, \tilde {r}_1 - r_2-c  < r_1 - r_2-c < 0,
0 < p(\tilde {r}_1 ) < p(r_1 ),$

$\therefore{f}'_2 (\tilde {r}_1 \vert r_2 ) - {f}'_2 (r_1 \vert r_2 ) < 0.$

This contradicts ${f}'_2 (\tilde {r}_1 \vert r_2 ) - {f}'_2 (r_1 \vert r_2 ) 
= 0$. Therefore, $\tilde {r}_1 < r_1 $ is wrong.

Similarly, we can prove that $\tilde {r}_1 > r_1 $ is wrong. Consequently, 
$\tilde {r}_1 = r_1 $ is true which means a unique solution in $r_1 \in [1, r_2+c )$. Because $c$ is smaller than 1 (see the statement below (\ref{eq13})), It is equivalent that the unique solution $r_1$ is in the range $[1, r_2 )$ \qed

\end{proof}

Theorem \ref{thm06} tells that if the information of stage 2 is given, then one can 
find an optimal parameter $r$ for stage 1 so that the computation cost $f_2 
$ of the final two-stage cascade is minimized. 

\begin{theorem}
\label{thm07}
$f_2 (r_1 ,r_2 )$ has a unique minimum solution $(r_1^ * 
,r_2^ * )$. 
\end{theorem}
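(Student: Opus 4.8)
The plan is to prove existence and uniqueness separately, exactly as in Theorems \ref{thm01} and \ref{thm06}. For existence, note that the feasible region $\{(r_1,r_2):1\le r_1\le r_2\le T\}$ is compact and that $f_2$ is continuous there: using the identity $p_1(r_1)+(1-p_1(r_1))p_2(r_2\vert r_1)=p_1(r_2)$ (i.e. (\ref{eq57})) one eliminates $p_2$ and obtains the closed form $f_2(r_1,r_2)=p_1(r_1)(r_1-r_2-c)+p_1(r_2)(r_2-T-c)+T+3c$, which is manifestly continuous in $(r_1,r_2)$. A continuous function on a compact set attains its minimum, so a global minimizer $(r_1^{*},r_2^{*})$ exists; the one-sided derivative bounds already computed in the proof of Theorem \ref{thm06} show that it lies in the interior, not on the boundary.

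For uniqueness I would reduce the bivariate problem to a univariate one by partial minimization. By Theorem \ref{thm04} (case $i=2$), for every fixed $r_1$ the slice $f_2(r_1,\cdot)$ is strictly decreasing then strictly increasing, with a unique minimizer $\sigma(r_1)$; set $g(r_1):=f_2(r_1,\sigma(r_1))=\min_{r_2}f_2(r_1,r_2)$. Any global minimizer of $f_2$ then has the form $(r_1^{*},\sigma(r_1^{*}))$ with $r_1^{*}=\arg\min_{r_1}g(r_1)$, so it suffices to show $g$ has a unique minimizer. By the envelope theorem $g'(r_1)=\partial_{r_1}f_2(r_1,\sigma(r_1))=p_1'(r_1)(r_1-\sigma(r_1)-c)+p_1(r_1)$, and Theorem \ref{thm06} tells us that, for the fixed value $\bar r_2=\sigma(r_1)$, the sign of $\partial_{r_1}f_2(\cdot,\bar r_2)$ changes from negative to positive exactly once, at $\rho(\bar r_2)$. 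Hence $g'(r_1)<0$ iff $r_1<\rho(\sigma(r_1))$ and $g'(r_1)>0$ iff $r_1>\rho(\sigma(r_1))$, so $g$ is decreasing-then-increasing (hence unimodal with a unique minimizer) as soon as the map $r_1\mapsto r_1-\rho(\sigma(r_1))$ crosses zero only once; for that it is enough that the composition $\rho\circ\sigma$ be nonexpansive. Differentiating the defining relations of $\rho$ and $\sigma$ and invoking the concavity-type property (\ref{eq17}) ($p_1''\le0$) together with $r_1<r_2<T$ gives $0<\rho'\le 1/2$ and shows that $\sigma$ varies slowly — in the saturated regime it is essentially pinned near the saturation point $r^{*}$ — so $(\rho\circ\sigma)'<1$ and the sign change of $g'$ is unique.

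The main obstacle is precisely this last step, i.e. making rigorous the claim (stated as ``obvious'' in the text) that coordinate-wise uniqueness of minimizers for $f_2(r\vert r_1)$ and $f_2(r\vert r_2)$ forces a unique joint minimizer. This is \emph{not} automatic for a general bivariate function — a function can be unimodal on every horizontal and every vertical line yet have two distinct global minima — so the proof must genuinely exploit the explicit structure $f_2(r_1,r_2)=p_1(r_1)(r_1-r_2-c)+p_1(r_2)(r_2-T-c)+T+3c$ and the shape assumptions (\ref{eq16})--(\ref{eq22}) on $p_1$ (chiefly its concavity, which limits how fast the stage-2 optimum $\sigma(r_1)$ can move as $r_1$ grows). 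Controlling $\sigma(r_1)$, and hence the unimodality of $g$, uniformly over the whole range $[1,T]$ — in particular near $r^{*}$, where $p_1'\approx0$ and the stage-2 optimizer is essentially flat — is the delicate part; once that is in hand, the contradiction argument reducing two stationary points to one proceeds as in the uniqueness halves of Theorems \ref{thm01} and \ref{thm06}.
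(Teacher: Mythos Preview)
Your proposal is more careful than the paper's own argument, and it also diverges from it substantially.

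The paper's entire proof of Theorem~\ref{thm07} is the single sentence you flagged as ``obvious'': since $\min_r f_2(r\vert r_1)$ (Theorem~\ref{thm04}) and $\min_r f_2(r\vert r_2)$ (Theorem~\ref{thm06}) each have a unique minimizer, $\min f_2(r_1,r_2)$ has a unique minimizer. That is literally all the paper says; it does \emph{not} address the gap you correctly identify --- coordinate-wise unimodality does not imply a unique joint minimum in general. So you have, in effect, anticipated the paper's proof and then gone beyond it by trying to supply the missing structure.

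What you do differently is genuinely valuable. Your closed form
\[
f_2(r_1,r_2)=p_1(r_1)(r_1-r_2-c)+p_1(r_2)(r_2-T-c)+T+3c,
\]
obtained by eliminating $p_2$ via (\ref{eq57}), is a simplification the paper never writes down; it makes existence immediate and gives clean first-order conditions. Your partial-minimization/envelope route is the right strategy for turning coordinate-wise information into a global statement, and your computation $0<\rho'\le\tfrac12$ (from differentiating the stationarity condition and using $p_1''\le0$, $\rho-r_2-c<0$) is correct.

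The remaining soft spot is the one you yourself point to: the bound on $\sigma'$. Differentiating $\partial_{r_2}f_2(r_1,\sigma(r_1))=0$ gives $\sigma'=p_1'(r_1)/[p_1''(\sigma)(\sigma-T-c)+2p_1'(\sigma)]$, and since $r_1<\sigma$ forces $p_1'(r_1)>p_1'(\sigma)$, the crude estimate actually gives $\sigma'\ge\tfrac12$ rather than an upper bound; your ``$\sigma$ varies slowly near saturation'' remark is heuristic, not a proof that $(\rho\circ\sigma)'<1$ everywhere. To close this you would need to exploit the concavity term $p_1''(\sigma)(\sigma-T-c)\ge0$ in the denominator more quantitatively, or argue directly from the two stationarity equations that two distinct critical points lead to a contradiction (in the spirit of the uniqueness halves of Theorems~\ref{thm01} and~\ref{thm06}, but now jointly). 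Either way, you are already ahead of the paper, which simply asserts the conclusion.
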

\begin{proof}
Because both $\mathop {\min }\limits_r f_2 (r\vert r_1 ) = \mathop 
{\min }\limits_r f_2 (r_1 ,r)$ (see Theorem \ref{thm04}) and $\mathop {\min }\limits_r 
f_2 (r\vert r_2 ) = f_2 (r,r_2 )$ (see Theorem \ref{thm06}) have unique minimum 
solutions, so $\min f_2 (r_1 ,r_2 )$ has a unique minimum solution. That is $f_2 (r_1^ * ,r_2^ * ) = \mathop {\min }\limits_{r_1 ,r_2 } f_2 (r_1 ,r_2 ) = \mathop {\min }\limits_{r_1 } \mathop {\min }\limits_{r_2 } f_2 (r_2 \vert r_1 ) = \mathop {\min }\limits_{r_2 } \mathop {\min }\limits_{r_1 } f_2 (r_1 \vert r_2 )$ \qed
\end{proof}

It is straightforward to generalize Theorem \ref{thm07} to the following Theorem:

\begin{theorem}
\label{thm08}
$f_i (r_1 ,...,r_i )$ has a unique minimum 
solution $(r_1^ * ,...,r_i^ * )$.
\end{theorem}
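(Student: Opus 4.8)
The plan is to prove Theorem~\ref{thm08} by generalizing the proof of Theorem~\ref{thm07} from two coordinates to $i$: first I would show that $f_i(r_1,\ldots,r_i)$ is strictly unimodal along every feasible axis-parallel direction --- restricted to the line in which a single $r_k$ varies while the other $r_j$ are held fixed, $f_i$ decreases and then increases, hence has a unique minimiser on that line --- and then I would deduce, exactly as in Theorem~\ref{thm07}, that the joint minimiser over the ordered region $0<r_1<\cdots<r_i<T$ is unique. For the last coordinate $r_i$ the one-variable fact is already available: by Theorem~\ref{thm04}, $f_i(r\mid r_1,\ldots,r_{i-1})$ has a unique minimum and is decreasing-then-increasing. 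So the only genuinely new work is to establish the analogue of Theorem~\ref{thm06} for every interior coordinate $r_k$, $1\le k\le i-1$ (for $i=2$ this is Theorem~\ref{thm06} itself).

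To do that, fix all $r_j$ with $j\ne k$ and let $r=r_k$ vary in $(r_{k-1},r_{k+1})$. The sub-windows entering stage $k$ depend only on $r_1,\ldots,r_{k-1}$, so $\sum_{j<k}f_j^L$ and the factor $\Pi:=\prod_{j<k}(1-p_j)$ do not involve $r$. The crucial point is the analogue of (\ref{eq57}): since $r<r_{k+1}$ implies $H_L(\mathbf{x},r_{k+1})+\max H_R(\mathbf{x},r_{k+1})\le H_L(\mathbf{x},r)+\max H_R(\mathbf{x},r)$, any sub-window rejected by stage $k$ at boundary $r$ is also rejected at boundary $r_{k+1}$, whence
\begin{equation}
p_k(r)+(1-p_k(r))\,p_{k+1}(r_{k+1}\mid r)=p_k(r_{k+1}\mid r_{k-1})=\eta,
\end{equation}
a constant independent of $r$ (here $p_k(r)$ abbreviates $p_k(r\mid r_{k-1})$). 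Hence $(1-p_k(r))(1-p_{k+1}(r_{k+1}\mid r))=1-\eta$ is constant too, so every term $f_{k+2}^L,\ldots,f_i^L$ and $f_i^R$ is constant in $r$, and the only $r$-dependent part is $f_k^L+f_{k+1}^L=\Pi\bigl[p_k(r)(r-r_{k+1}-c)+\eta(r_{k+1}+(k+1)c)\bigr]$. Therefore, up to an additive constant, $f_i$ along the $r_k$-line is $\Pi\,p_k(r)(r-r_{k+1}-c)$, and its derivative is $\Pi\bigl[p_k'(r)(r-r_{k+1}-c)+p_k(r)\bigr]$ --- precisely the form of (\ref{eq59}), with $r_{k+1}$ playing the role of $r_2$.

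From here the one-variable reasoning is a transcription of Theorems~\ref{thm01} and~\ref{thm06}, using that $p_k(\cdot\mid r_{k-1})$ obeys (\ref{eq39})--(\ref{eq43}): the derivative is negative as $r\to r_{k-1}$ (because $p_k(r_{k-1})=0$ and $p_k'(r_{k-1})\gg0$) and positive at $r\to r_{k+1}+c$ (where it equals $\Pi\,p_k(r_{k+1}+c)>0$), so a critical point exists; and if $\tilde r<r$ were two critical points, then $p_k'$ positive and decreasing, $\tilde r-r_{k+1}-c<r-r_{k+1}-c<0$, and $p_k(\tilde r)<p_k(r)$ force ${f}'_i(\tilde r)-{f}'_i(r)<0$, contradicting that both vanish. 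So $f_i$ is decreasing-then-increasing in each $r_k$ with a unique line-minimiser. Combining this for $k=1,\ldots,i-1$ with Theorem~\ref{thm04} for $k=i$ and arguing as in Theorem~\ref{thm07} yields the unique joint minimiser $(r_1^*,\ldots,r_i^*)$, with $f_i(r_1^*,\ldots,r_i^*)$ equal to any of the nested minimisations $\min_{r_{\sigma(1)}}\cdots\min_{r_{\sigma(i)}}f_i$.

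The step I expect to be the main obstacle is this last deduction. In the two-variable case the feasible set is a rectangle, but here it is the simplex-like region $0<r_1<\cdots<r_i<T$, so an axis-parallel move from one candidate minimiser toward another can leave the domain, and the ``exchange coordinates one at a time'' argument must be set up with that in mind --- most cleanly by invoking coordinate descent (each coordinate update is well defined and strictly decreasing by the unimodality just proved) and showing, as for Theorem~\ref{thm07}, that any global minimiser has each coordinate equal to the unique minimiser of $f_i$ along the corresponding feasible line through that point, forcing two global minimisers to coincide. The only other place needing care is verifying the inclusion behind the analogue of (\ref{eq57}) for interior stages; everything after that is bookkeeping.
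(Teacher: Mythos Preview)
Your approach is the same as the paper's, only far more detailed: the paper gives no proof at all beyond the sentence ``It is straightforward to generalize Theorem~\ref{thm07} to the following Theorem,'' whereas you actually carry out that generalization by extending Theorem~\ref{thm06} to every interior coordinate $r_k$ and then invoking the Theorem~\ref{thm07} reduction. Your computation of the $r_k$-restricted cost, the key identity $p_k(r)+(1-p_k(r))p_{k+1}(r_{k+1}\mid r)=\eta$, and the resulting derivative $\Pi[p_k'(r)(r-r_{k+1}-c)+p_k(r)]$ are all correct (the monotonicity $H_L(\mathbf{x},r)+\max H_R(\mathbf{x},r)$ nonincreasing in $r$ indeed follows from $\alpha_{r+1}(h_{r+1}(\mathbf{x})-1)\le 0$). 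The obstacle you flag --- that coordinate-wise strict unimodality on the ordered region $0<r_1<\cdots<r_i<T$ does not by itself force a unique global minimiser without a further argument --- is genuine, but it is equally present (and equally unaddressed) in the paper's own proof of Theorem~\ref{thm07}; you are not missing anything the paper supplies.
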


\subsection{How to Search the Jointly Optimal Solutions}

\subsubsection{Algorithm}
Theorem \ref{thm08} guarantees the existence and uniqueness of jointly 
optimizing the stages of a cascade. In this section, we give algorithms 
(i.e., Algorithms \ref{thm02} and \ref{thm03}) for searching the solution and then theoretically justify 
the algorithms in theory. We start with the algorithm for optimizing a 
two-stage cascade and then generalize it to multi-stage one. 

The task of jointly optimizing a two-stage cascade can be expressed as $(r_1^ 
* ,r_2^ * ) = \arg \mathop {\min }\limits_{r_1 ,r_2 }f_2 (r_1 ,r_2 )$. 
The idea of our optimization method is shown in Algorithm 2.

\begin{algorithm}[!t]
\renewcommand{\algorithmicrequire}{\textbf{Input:}}
\renewcommand\algorithmicensure {\textbf{Output:} }
\renewcommand\algorithmicrepeat {\textbf{Iteration} }
\caption{ Globally optimal two-stage cascade learning.}
\begin{algorithmic}[1]
\REQUIRE ~~\\
Strong classifier $H({\rm {\bf x}}) = \sum\nolimits_{i = 1}^T {\alpha _i h_i ({\rm {\bf x}})} $ and its threshold $t$;\\
A set of true negative sub-windows$\{{\rm {\bf x}}\vert l({\rm {\bf x}}) = - 1\}$;
\ENSURE ~~\\ 
$(r_1^ * ,r_2^ * ) = \arg \mathop {\min }\limits_{r_1 ,r_2 } = f_2 (r_1 ,r_2 )$;
\STATE \textbf{Initialization}

\STATE Search the optimal solution $r_1 $ of $f_1 (r)$ for stage 1 in the range of $(1,T)$: $r_1 = \arg \mathop {\min }\limits_{1 < r < T} f_1 (r)$.

\STATE Given $r_1 $, search the optimal solution $r_2 $ of $f_2 (r\vert r_1 )$ for stage 2 in the range of $[2r_1 ,T)$: $r_2 = \arg \mathop {\min }\limits_{2r_1 \leq r < T} f_2 (r\vert r_1 )$. See Theorem \ref{thm05} for the reason of $r \geq 2r_1 $.

\REPEAT
\STATE Given $r_2 $, search the optimal solution $\tilde {r}_1 $ of $f_2 (r\vert r_2 )$ in the range of $1<r \leq r_1 $ for stage 1: $\tilde {r}_1 = \arg \mathop {\min }\limits_{1 < r \leq r_1 } f_2 (r\vert r_2 )$. Note that $\tilde {r}_1 < r_1 $ (see Theorem \ref{thm09}). An efficient search strategy is decreasing $r$ from $r_1 $ step by step until $f_2 (r\vert r_2 )$ does not decrease. $f \leftarrow f_2 (r_1 \vert r_2 )$.

\STATE Given $\tilde {r}_1 $, search the optimal solution $\tilde {r}_2 $ of $f_2 (r\vert \tilde {r}_1 )$ for stage 2 in the range of $\tilde {r}_1<r  \leq r_2 $: $\tilde {r}_2 = \arg \mathop {\min }\limits_{\tilde {r}_1 < r \leq r_2 } f_2 (r\vert \tilde {r}_1 )$. Note that $\tilde {r}_2 \leq r_2 $ (see Theorem \ref{thm12}). An efficient search strategy is decreasing $r$ from $r_2 $ step by step until $f_2 (r\vert \tilde {r}_1 )$ does not decrease. $\tilde {f} \leftarrow f_2 (r\vert \tilde {r}_1 )$.

\STATE Update $r_1 \leftarrow \tilde {r}_1 $, $r_2 \leftarrow \tilde {r}_2 $.
\UNTIL{$f - \tilde {f} \ge \mu$}

\RETURN $r_1^\ast \leftarrow \tilde {r}_1 $, $r_2^\ast \leftarrow \tilde {r}_2 $.
\end{algorithmic}
\end{algorithm}


The proposed Algorithm 2 is an alternative optimization procedure. In the 
initialization step, the solution $r_1 $ of the one-stage cascade learning is searched in the largest range $1 < 
r < T$: $r_1 = \arg \mathop {\min }\limits_{1 < r < T} f_1 (r)$. The value 
of $r_1 $ is shown in Fig. 9, where "{\#}1'' means that $r_1 $ is obtained 
firstly. The obtained $r_1 $ is used as the upper bound of the searching 
range for the better solution $\tilde {r}_1 $ in line 5 of Algorithm 2. After 
$r_1 $ is given, line 3 of Algorithm 2 searches the optimal solution $r_2 
$ of $f_2 (r\vert r_1 )$ for stage 2 in the range of $2r_1 \leq r < T$: $r_2 = 
\arg \mathop {\min }\limits_{2r_1 \leq r < T } f_2 (r\vert r_1 )$. Based on (\ref{eq54}), the search 
range starts from $2r_1 $. The value of $r_2 $ is shown in Fig. 9, where "{\#}2'' means that $r_2$ is the second value obtained by Algorithm 2.

In line 5 of Algorithm 2, $r_2 $ is given and the task is to search the 
optimal solution $\tilde {r}_1 $ of $f_2 (r\vert r_2 )$ in the range of $1 < 
r \leq r_1 $ for stage 1: $\tilde {r}_1 = \arg \mathop {\min }\limits_{1 < r \leq
r_1 } f_2 (r\vert r_2 )$. Because $r_1 \ll T$, the search range $1 < r \leq r_1 
$ is much smaller than the one (i.e., $1 < r < T)$ in line 2. Theorem \ref{thm09} guarantees $\tilde {r}_1 \leq r_1$ for the first round of iteration. $\tilde {r}_1 $ 
is the third value obtained by Algorithm 2 which is shown near "{\#}3'' 
in Fig. 9. Experimental results and intuitive analysis show that the 
absolute distance $\vert \tilde {r}_1 - r_1 \vert $ from $\tilde {r}_1 $ to 
$r_1 $ is much smaller than the absolute distance $\vert 1 - \tilde {r}_1\vert $ from  1 to $\tilde {r}_1 $, the search strategy of decreasing 
$r$ from $r_1 $ step by step until $f_2 (r\vert r_2 )$ does not 
decrease is more efficient than the one of increasing $r$ from 1 step by 
step until $f_2 (r\vert r_2 )$ does not increase.

In line 6 of Algorithm 2, $\tilde {r}_1 $ is given and the task is to search 
the optimal solution $\tilde {r}_2 $ of $f_2 (r\vert \tilde {r}_1 )$ in the range of $\tilde {r}_1 < r \leq r_2$ for 
stage 2: $\tilde {r}_2 = \arg \mathop {\min }\limits_{\tilde 
{r}_1 < r \leq r_2 } f_2 (r\vert \tilde {r}_1 )$. Because $r_2 < T$, the upper 
bound of the search range is much smaller than the one (i.e., $T)$ in line 3. Moreover, as iteration runs, the updated $r_2 $ 
becomes smaller and so the upper bound of search range for $\tilde {r}_2 $ 
becomes smaller too. Theorem \ref{thm10} guarantees $\tilde {r}_2 \leq r_2 $. The value 
of $\tilde {r}_2 $ is shown in Fig. 9 which is "{\#}4" obtained by Algorithm 2. Experimental results and intuitive analysis show that the 
absolute distance $\vert \tilde {r}_2 - r_2 \vert $ from $\tilde {r}_2 $ to 
$r_2 $ is much smaller than the absolute distance $\vert \tilde {r}_1 - \tilde {r}_2  \vert$ from $\tilde {r}_1$ to $\tilde {r}_2 $, the search strategy of decreasing 
$r$ from $r_2 $ step by step until $f_2 (r\vert \tilde {r}_1 )$ does not 
decrease is more efficient than the one of increasing $r$ from $\tilde {r}_1 
$ step by step until $f_2 (r\vert \tilde {r}_1 )$ does not increase. 

In the second round of iteration, because $\tilde {r}_2 \leq r_2 $, the 
parameter value $\tilde {\tilde {r}}_1 $ for stage 1 is obtained and shown 
in Fig. 9 with a label "{\#}5". According to Theorem \ref{thm11}, it is true that 
$\tilde {\tilde {r}}_1 \leq \tilde {r}_1 $. Subsequently, the parameter value 
$\tilde {\tilde {r}}_2 $ for stage 2 is obtained and shown in Fig. 10 with a 
label ``{\#}6''. According to Theorem \ref{thm10}, it is true that $\tilde {\tilde 
{r}}_2 \leq \tilde {r}_2 $.

The iteration stops if the difference between the value $f$ of objective 
function in line 5 of Algorithm 2 and the one $\tilde {f}$ in in line 6 of Algorithm 2 is equal to or smaller than the threshold $\mu \ge 0$.

\begin{figure}[!t]
\label{FigIntermediate}
\centering
\includegraphics{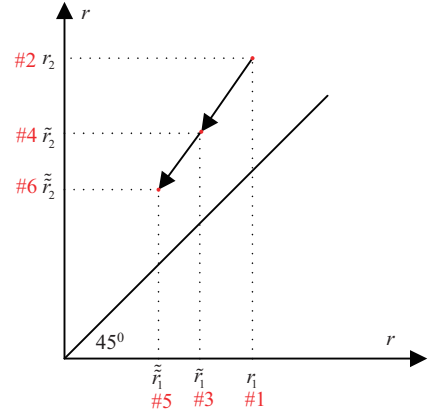}
\caption{Illustration of the intermediate values obtained by Algorithm 2. $r_1 $ and $r_2 $ are the outputs of Initialization. The sequence of the stage parameters are updated in the following turn: $r_1  \to r_2  \to \tilde {r}_1  \to \tilde {r}_2  \to \tilde {\tilde {r}}_1  \to \tilde {\tilde {r}}_2 $ with $\tilde {\tilde {r}}_1 < \tilde {r}_1 < r_1 $ and $\tilde {\tilde {r}}_2 < \tilde {r}_2 < {r}_2$}
\end{figure}

\textit{Decreasing Phenomenon:} Fig. 9 shows an interesting phenomenon: (1) 
Once a new stage 2 is added, the parameter $r_1 $ of stage 1 should be 
updated by decreasing $r_1 $ to a smaller number $\tilde {r}_1 $ so that the 
computation cost is minimized. (2) Once the number of stages is fixed, the 
parameter for each stage decreases gradually as iteration proceeds.

\subsubsection{Justification of the Algorithm}
Theorems \ref{thm09}-\ref{thm12} are to given to theoretically interpret the so-called 
Decreasing Phenomenon and justify Algorithm 2. Theorem \ref{thm09} implies that the 
parameter $r_1 $ of stage 1 should be updated by decreasing to a small number 
when the parameter $r_2 $ of stage 2 is fixed.

\begin{theorem}
\label{thm09}
$\tilde {r}_1 = \arg \mathop {\min }\limits_r f_2 
(r\vert r_2 ) \leq \arg \mathop {\min }\limits_r f_1 (r) = r_1 $ where $r_2 > 
r_1 $.
\end{theorem}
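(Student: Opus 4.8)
The plan is to compare the two objective functions through their first derivatives at a common point. From the proof of Theorem~\ref{thm01} we have $f_1'(r) = p'(r)(r - T - c) + p(r)$, and from the proof of Theorem~\ref{thm06} (after using the conservation identity (\ref{eq57}) to simplify) we have $f_2'(r\vert r_2) = p'(r)(r - r_2 - c) + p(r)$, where in both formulas $p$ denotes the stage-$1$ rejection rate $p_1$. Subtracting, the term $p(r)$ and the constant $c$ both cancel, leaving the clean identity $f_2'(r\vert r_2) - f_1'(r) = p'(r)\,(T - r_2)$. Since $r_2$ is a partition point with $r_2 < T$ and $p'(r) \ge 0$ by (\ref{eq17}) (equivalently (\ref{eq39})), the right-hand side is nonnegative, so $f_2'(r\vert r_2) \ge f_1'(r)$ for every $r$.

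Next I would specialize this inequality to $r = r_1 = \arg\min_r f_1(r)$. By Theorem~\ref{thm01}, $f_1$ is strictly decreasing before $r_1$ and strictly increasing after, so $f_1'(r_1) = 0$; hence $f_2'(r_1\vert r_2) \ge 0$. Finally I would invoke the shape of $f_2(\cdot\vert r_2)$ established in Theorem~\ref{thm06}: it too has a unique stationary point $\tilde r_1$, being strictly decreasing to the left of $\tilde r_1$ and strictly increasing to the right, so $f_2'(r\vert r_2) < 0$ for $r < \tilde r_1$. Therefore $f_2'(r_1\vert r_2) \ge 0$ rules out $r_1 < \tilde r_1$, giving $\tilde r_1 \le r_1$, which is the claim.

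The step needing the most care is the last one: it rests on the strict ``decreasing-then-increasing'' structure of $f_2(\cdot\vert r_2)$ with a single stationary point, which is exactly what the existence-and-uniqueness argument inside Theorem~\ref{thm06} provides, so I would cite that rather than re-derive it. One should also note the degenerate possibility $p'(r_1) = 0$: then the identity gives $f_2'(r_1\vert r_2) = 0 = f_1'(r_1)$, forcing $\tilde r_1 = r_1$, still consistent with the weak inequality in the statement. Beyond that the argument is purely a sign comparison, so no real computation is involved.
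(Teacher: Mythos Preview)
Your proof is correct and takes a genuinely different route from the paper's. The paper argues by contradiction: it assumes $\tilde r_1 > r_1$, evaluates the two derivatives at \emph{different} points $\tilde r_1$ and $r_1$, and then uses the monotonicity of $p_1$ and of $p_1'$ (assumptions (\ref{eq16})--(\ref{eq17})) together with the sign comparison $0 > \tilde r_1 - r_2 - c > r_1 - T - c$ to force $f_2'(\tilde r_1\vert r_2) - f_1'(r_1) > 0$, contradicting that both are stationary points. You instead compare the two derivatives at the \emph{same} point and obtain the clean pointwise identity $f_2'(r\vert r_2) - f_1'(r) = p'(r)(T - r_2)$, from which $f_2'(r\vert r_2) \ge f_1'(r)$ follows using only $p' \ge 0$ and $r_2 < T$; the conclusion then drops out of the unimodal shape of $f_2(\cdot\vert r_2)$ already established in Theorem~\ref{thm06}. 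Your approach is shorter, uses strictly fewer hypotheses (no need for strict concavity of $p$ or for the chain of inequalities on $p_1$ and $p_1'$ at two different arguments), and makes the underlying mechanism transparent: $f_2(\cdot\vert r_2)$ has a pointwise larger derivative than $f_1$, so its minimizer must sit to the left. The paper's contradiction argument, by contrast, has the advantage of being self-contained and not relying on the ``decreasing-then-increasing'' shape from Theorem~\ref{thm06} as a black box, but at the cost of more bookkeeping.
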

\begin{proof}

$\because\tilde {r}_1 = \arg \mathop {\min }\limits_r f_2 (r\vert r_2 )$ and $r_1 = \arg \mathop {\min }\limits_r f_1 (r)$,

$\therefore\left. \frac{df_2 (r\vert r_2 )}{dr} \right|_{r = \tilde {r}_{ 1 }}  - 
\left. \frac{df_1 (r)}{dr} \right|_{r = r_{ 1 }} = 0$.

$\because f_1 (r) = p_1 (r)(r+c) + (1 - p_1 (r))(T+2c),$

$\therefore{f}'_1 (r_1 ) = p_1 (r_1 ) + {p}'_1 (r_1 )(r_1-T-c).$
 
$\because f_2 (r\vert r_2 ) = p_1 (r)(r+c) + (1 - p_1 (r))[p_2 (r_2 \vert r)(r_2+2c)+(1 - p_2 (r_2 \vert r))(T+3c)],$

$\therefore{f}'_2 (r\vert r_2 ) = p_1 ^\prime (r)(r - r_2-c ) + p_1 (r)$ (see (\ref{eq59})).

Now investigate the value of ${f}'_2 (\tilde {r}_1 \vert r_2 ) - {f}'_1 (r_1 
)=p_1 ^\prime (\tilde {r}_1 )(\tilde {r}_1 - r_2-c ) - {p}'_1 (r_1 )(r_1 - T-c)+ (p_1 (\tilde {r}_1 ) - p_1 (r_1 ))$ if $\tilde {r}_1 > r_1 $ is true:

$\because\tilde {r}_1 > r_1 $ is assumed,

$\therefore 0 < p_1 ^\prime (\tilde {r}_1 ) < {p}'_1 (r_1 ), 0 > \tilde {r}_1-r_2 -c> r_1-T-c, p_1 (\tilde {r}_1 ) > {p}_1 (r_1 ) > 0,$

$\therefore p_1 ^\prime (\tilde {r}_1 )(\tilde {r}_1 - r_2-c ) > {p}'_1 (r_1 )(r_1 - T-c), p_1 (\tilde {r}_1 ) > p_1 (r_1 ),$

$\therefore {f}'_2 (\tilde {r}_1 \vert r_2 ) - {f}'_1 (r_1 ) > 0$.

This contradicts ${f}'_2 (\tilde {r}_1 \vert r_2 ) - {f}'_1 (r_1 ) = 0$. So 
$\tilde {r}_1 > r_1 $ is wrong and $\tilde {r}_1 \leq r_1 $ is true. \qed
\end{proof}

As a lemma of Theorem \ref{thm09}, we have the following theorem:

\begin{theorem}
\label{thm10}
If $(r_1^{i\ast } ,...,r_i^{i\ast } ) = 
\arg \mathop {\min }\limits_{r_1 ,...,r_i } f_i (r_1 ,...,r_i )$ and 
$(r_1^{\ast (i + 1)} ,...,r_i^{\ast (i + 1)} ,r_{i + 
1}^{\ast (i + 1)} ) = \arg \mathop {\min }\limits_{r_1 ,...,r_i ,r_{i + 1} } 
f_i (r_1 ,...,r_i ,r_{i + 1} )$, then $r_j^{\ast (i + 1)} \leq r_j^{\ast i},$
$j = 1,...,i$.
\end{theorem}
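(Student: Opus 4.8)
The plan is to put the cost into a transparent form, reduce Theorem~\ref{thm10} to a one-parameter monotonicity statement, and then propagate it coordinate by coordinate in the spirit of Theorem~\ref{thm09}. First I would rewrite everything through the cumulative rejection rate $p(\cdot)$ of Section~III: the conservation identity behind (\ref{eq57})--(\ref{eq58}) (ultimately because $H_L(\mathbf{x},r)+\max H_R(\mathbf{x},r)$ is non-increasing in $r$) gives $\prod_{j=1}^{k}(1-p_j(r_j\vert r_{j-1}))=1-p(r_k)$, a quantity depending on $r_k$ alone, so that $f_S(r_1,\ldots,r_S)=\sum_{k=1}^{S}(p(r_k)-p(r_{k-1}))(r_k+kc)+(1-p(r_S))(T+(S+1)c)$ with $p(r_0):=0$. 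Now I invoke Theorem~\ref{thm08}: both $(r_1^{\ast i},\ldots,r_i^{\ast i})$ and $(r_1^{\ast (i+1)},\ldots,r_{i+1}^{\ast (i+1)})$ exist, are unique, and are reachable by alternating minimization; writing $a:=r_{i+1}^{\ast (i+1)}$ and fixing that last coordinate, $(r_1^{\ast (i+1)},\ldots,r_i^{\ast (i+1)})=\arg\min_{r_1,\ldots,r_i}f_{i+1}(r_1,\ldots,r_i,a)$. From the displayed formula, $f_{i+1}(r_1,\ldots,r_i,a)$ differs from the expression obtained from $f_i(r_1,\ldots,r_i)$ by substituting $a$ for $T$ (with $p$ unchanged) only through an additive constant, namely $(1-p(a))(T-a+c)$. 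Since $a<T$, Theorem~\ref{thm10} is thus equivalent to: lowering this terminal value from $T$ to $a$ cannot increase any coordinate of the joint minimizer.

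Next I would exploit the backward recursive structure of that minimization. By Abel summation the cost equals $f_S=(T+(S+1)c)+\sum_{k=1}^{S}p(r_k)(r_k-r_{k+1}-c)$ with the convention $r_{S+1}:=T$, so $\partial f_S/\partial r_k=p'(r_k)(r_k-r_{k+1}-c)+p(r_k)-p(r_{k-1})$, which matches (\ref{eq59}) when $S=2$. Eliminating $r_1$, then $r_2,\ldots,$ then $r_{j-1}$ reduces the $r_j$-subproblem to $r_j=\arg\min_r[\,g_j(r)+p(r)(r-r_{j+1}-c)\,]$, where $g_j$ is a partial-minimum function built from $p(\cdot)$ and $c$ only --- hence the \emph{same} function for the $i$-stage and the $(i+1)$-stage cascade --- with $r_0:=0$ and top value $r_{i+1}:=T$ for the $i$-stage problem and $r_{i+1}:=a$ for the $(i+1)$-stage problem. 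This is exactly the situation of Theorem~\ref{thm09}. The decisive property is the cross partial: $\partial^2/\partial r\,\partial s$ of $g_j(r)+p(r)(r-s-c)$ equals $-p'(r)\le 0$ by (\ref{eq17})/(\ref{eq39}); exactly as in the sign argument inside the proof of Theorem~\ref{thm09} ($p$ increasing and $p'$ decreasing), this decreasing-differences property makes $\arg\min_r[\,g_j(r)+p(r)(r-s-c)\,]$ non-decreasing in $s$.

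I would then chain these monotonicities by downward induction on $j$. For $j=i$: since $a<T=r_{i+1}^{\ast i}$, the monotonicity above gives $r_i^{\ast (i+1)}\le r_i^{\ast i}$. For the step, assuming $r_{j+1}^{\ast (i+1)}\le r_{j+1}^{\ast i}$, the $r_j$-subproblems of the two cascades share $g_j$ and have top values $r_{j+1}^{\ast (i+1)}\le r_{j+1}^{\ast i}$, so the same monotonicity yields $r_j^{\ast (i+1)}\le r_j^{\ast i}$. Running $j=i,i-1,\ldots,1$ proves $r_j^{\ast (i+1)}\le r_j^{\ast i}$ for every $j=1,\ldots,i$; the case $i=1$ recovers Theorem~\ref{thm09}, the companion lower bound being (\ref{eq54})/Theorem~\ref{thm05}.

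The step I expect to be the real obstacle is the comparative-statics claim --- $\arg\min_r[\,g_j(r)+p(r)(r-s-c)\,]$ non-decreasing in $s$ --- together with what is needed to state it cleanly: that partial minimization over $r_1,\ldots,r_{j-1}$ keeps the subproblem in the unique ``decreasing-then-increasing'' form of Theorems~\ref{thm01}--\ref{thm04}, so that its minimizer is characterized by a single sign change of the derivative. This is precisely where the structural hypotheses (\ref{eq16})--(\ref{eq22}) on $p(r)$ are indispensable. Granting that shape, the monotonicity follows either by a Topkis-type argument from the sign of the cross partial, or, more explicitly, by differentiating the stacked optimality conditions $\partial f_S/\partial r_k=0$ in the terminal value: the Jacobian in $(r_1,\ldots,r_i)$ is symmetric, positive definite at the minimizer, and has off-diagonal entries $-p'(\cdot)\le 0$, hence is a Stieltjes matrix with entrywise-non-negative inverse, while the inhomogeneous term hits only the last equation with sign $-p'(r_i)\le 0$; multiplying out yields $\partial r_k/\partial(\text{terminal value})\ge 0$ for every $k$. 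A naive copy of Theorem~\ref{thm09}'s contradiction onto the stacked conditions --- pick the largest index where the two optimal vectors cross, subtract the corresponding conditions --- only propagates the crossing \emph{downward} along the chain and does not by itself close, so it is this monotone-minimizer viewpoint that does the work.
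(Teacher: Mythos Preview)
Your proposal is correct, and in fact it supplies a proof where the paper supplies none: the paper states Theorem~\ref{thm10} ``as a lemma of Theorem~\ref{thm09}'' and moves on without a single line of argument. So there is nothing to compare at the level of proof details; what can be compared is strategy. The paper's implicit idea is that the two-variable contradiction in Theorem~\ref{thm09} (and its companions Theorems~\ref{thm11}--\ref{thm12}) should somehow iterate, but it never says how to couple the $i$ coordinates of the two joint minimizers so that those pairwise contradictions chain together. You take a genuinely different route: you first rewrite $f_S$ through the cumulative rejection $p(\cdot)$ and the telescoping identity $\prod_{j\le k}(1-p_j)=1-p(r_k)$, Abel-sum it to $f_S=\mathrm{const}+\sum_k p(r_k)(r_k-r_{k+1}-c)$ with $r_{S+1}=T$, and then observe that freezing the new last coordinate at $a=r_{i+1}^{\ast(i+1)}$ turns the $(i{+}1)$-stage problem into the $i$-stage problem with the terminal value $T$ replaced by $a<T$ (up to an additive constant). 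This reduction is exact and is the key insight the paper lacks. From there, your two finishing arguments are both sound: the Topkis-style one (submodularity in $(r_1,\ldots,r_i)$ from the off-diagonals $-p'(\cdot)\le 0$, decreasing differences in $(r_i,A)$), and the implicit-function/Stieltjes-matrix one (the Hessian at the minimizer is tridiagonal, positive definite, with nonpositive off-diagonals, hence an $M$-matrix with entrywise-nonnegative inverse; only the last first-order condition depends on $A$, with sign $-p'(r_i)$). Either yields $\partial r_k^\ast/\partial A\ge 0$ for all $k$, and the downward induction you describe then propagates the inequality cleanly because your partial-minimum functions $g_j$ depend only on $r_1,\ldots,r_j$ and are therefore identical in the $i$-stage and $(i{+}1)$-stage problems.

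The place you flag as the obstacle --- that partial minimization over $r_1,\ldots,r_{j-1}$ keeps the $r_j$-subproblem single-troughed --- is the right place to worry, but note that your Stieltjes-matrix route bypasses it entirely: it works directly at the joint stationary point guaranteed by Theorem~\ref{thm08}, needs only positive definiteness of the Hessian there and the sign of the off-diagonals, and never requires unimodality of any partially minimized value function. If you want the paper-style derivative-contradiction argument to close in the multivariate setting, you would have to organize it as you suggest (pick the largest index where the vectors cross and subtract), and you correctly note that this only pushes the crossing downward; it is exactly the lattice/Topkis viewpoint that repairs this, so your instinct to lean on monotone comparative statics rather than mimic Theorem~\ref{thm09} literally is well placed.
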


As a generalized version of Theorem \ref{thm09}, Theorem \ref{thm10} tells that once a new 
stage $i + 1$ is added, all the optimal parameters of the existing stages 
$1,\ldots,i$ should be updated and decreased so that the 
computation cost is minimized.

\begin{theorem}
\label{thm11}
If $\tilde {r}_2 < r_2$, then $\tilde {r}_1 = \arg 
\mathop {\min }\limits_r f_2 (r\vert \tilde {r}_2 ) \leq \arg \mathop {\min 
}\limits_r f_2 (r\vert r_2 ) = r_1 $.
\end{theorem}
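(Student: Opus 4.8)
The plan is to prove Theorem~\ref{thm11} by contradiction, following the same template as Theorem~\ref{thm09}, but now comparing two objective functions of \emph{identical} functional form that differ only in the fixed second-stage parameter. The crucial ingredient is the simplified expression for the derivative of $f_2(r\vert r_2)$ established in~(\ref{eq59}), namely $f_2'(r\vert r_2) = p_1'(r)(r - r_2 - c) + p_1(r)$; applied with $\tilde{r}_2$ in place of $r_2$ it likewise gives $f_2'(r\vert \tilde{r}_2) = p_1'(r)(r - \tilde{r}_2 - c) + p_1(r)$. By Theorem~\ref{thm06}, each of these objective functions has a unique minimizer, so $r_1 = \arg\min_r f_2(r\vert r_2)$ and $\tilde{r}_1 = \arg\min_r f_2(r\vert \tilde{r}_2)$ are the unique stationary points of $f_2(\cdot\vert r_2)$ and $f_2(\cdot\vert \tilde{r}_2)$ respectively, whence $f_2'(r_1\vert r_2) = 0$ and $f_2'(\tilde{r}_1\vert \tilde{r}_2) = 0$.

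Next I would suppose, for contradiction, that $\tilde{r}_1 > r_1$ and evaluate the difference
\begin{equation*}
f_2'(\tilde{r}_1\vert \tilde{r}_2) - f_2'(r_1\vert r_2) = \bigl[p_1'(\tilde{r}_1)(\tilde{r}_1 - \tilde{r}_2 - c) - p_1'(r_1)(r_1 - r_2 - c)\bigr] + \bigl[p_1(\tilde{r}_1) - p_1(r_1)\bigr].
\end{equation*}
The second bracket is positive because $p_1$ is increasing (see~(\ref{eq16})) and $\tilde{r}_1 > r_1$. For the first bracket, the monotone-decreasing slope property~(\ref{eq17}) gives $0 \le p_1'(\tilde{r}_1) < p_1'(r_1)$, while the hypothesis $\tilde{r}_2 < r_2$ together with the assumed $\tilde{r}_1 > r_1$ gives $\tilde{r}_1 - \tilde{r}_2 - c > r_1 - r_2 - c$. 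Once one knows that \emph{both} quantities $\tilde{r}_1 - \tilde{r}_2 - c$ and $r_1 - r_2 - c$ are negative, it follows that $p_1'(\tilde{r}_1)(\tilde{r}_1 - \tilde{r}_2 - c)$ is a negative number of smaller magnitude than $p_1'(r_1)(r_1 - r_2 - c)$, so the first bracket is also positive. Hence $f_2'(\tilde{r}_1\vert \tilde{r}_2) - f_2'(r_1\vert r_2) > 0$, contradicting that both derivatives vanish; therefore $\tilde{r}_1 > r_1$ is impossible and $\tilde{r}_1 \le r_1$.

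The step I expect to require the most care is justifying that $r_1 - r_2 - c < 0$ and $\tilde{r}_1 - \tilde{r}_2 - c < 0$, which is exactly what makes the sign bookkeeping in the first bracket go through. This is not an extra assumption: it is recorded inside the \emph{existence} half of Theorem~\ref{thm06}, which localizes the minimizer of $f_2(\cdot\vert r_2)$ in $[1, r_2 + c)$ and, applied verbatim with $\tilde{r}_2$ substituted for $r_2$, localizes the minimizer of $f_2(\cdot\vert \tilde{r}_2)$ in $[1, \tilde{r}_2 + c)$. With those two interval memberships in hand the argument above is complete; the symmetric case $\tilde{r}_1 < r_1$ need not be treated, since only the one-sided inequality $\tilde{r}_1 \le r_1$ is claimed.
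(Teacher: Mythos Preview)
Your proposal is correct and follows essentially the same contradiction argument as the paper: assume $\tilde r_1>r_1$, use the simplified derivative~(\ref{eq59}) for both $f_2(\cdot\vert r_2)$ and $f_2(\cdot\vert \tilde r_2)$, and show the difference of derivatives is strictly positive, contradicting stationarity. Your treatment is in fact slightly more careful than the paper's, which simply asserts $0>\tilde r_1-\tilde r_2-c>r_1-r_2-c$ without comment; you explicitly invoke the localization $r_1\in[1,r_2+c)$ and $\tilde r_1\in[1,\tilde r_2+c)$ from Theorem~\ref{thm06} to justify the negativity needed for the sign bookkeeping.
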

\begin{proof}

$\because\tilde {r}_1 = \arg \mathop {\min }\limits_r f_2 (r\vert \tilde{r}_2 )$ and $r_1 = \arg \mathop {\min }\limits_r f_2 (r|r_{2})$,

$\therefore\left. {\frac{df_2 (r\vert \tilde{r}_2 )}{dr}} \right|_{r = \tilde {r}_{1}} - 
\left. {\frac{df_2 (r|r_2)}{dr}} \right|_{r = r_{_1 }} = 0$ is true.

Now investigate the value of ${f}'_2 (\tilde {r}_1 
\vert \tilde {r}_2 ) - {f}'_2 ( {r}_1 \vert {r}_2 )$
$ = [p_1 ^\prime (\tilde {r}_1 )(\tilde {r}_1 - \tilde {r}_2-c ) - {p}'_1 (r_1 )(r_1 - r_2-c)]  + [p_1 (\tilde {r}_1 ) - p_1 (r_1 )]$ if $\tilde {r}_1 > r_1 $ is true:

$\because\tilde {r}_1 > r_1 $ is assumed,

$\therefore 0 < p_1 ^\prime (\tilde {r}_1 ) < {p}'_1 (r_1 ),
0 > \tilde {r}_1 - \tilde {r}_2-c > r_1 - r_2-c,
p_1 (\tilde {r}_1 ) > p_1 (r_1 ) > 0,$

$\therefore p_1 ^\prime (\tilde {r}_1 )(\tilde {r}_1 - \tilde {r}_2-c ) > {p}'_1 (r_1 )(r_1 - r_2-c), p_1 (\tilde {r}_1 ) > p_1 (r_1 ),$

$\therefore {f}'_2 (\tilde {r}_1 
\vert \tilde {r}_2 ) - {f}'_2 ( {r}_1 \vert {r}_2 ) > 0.$

This contradicts ${f}'_2 (\tilde {r}_1 
\vert \tilde {r}_2 ) - {f}'_2 ( {r}_1 \vert {r}_2 ) = 0$. So 
$\tilde {r}_1 > r_1 $ is wrong and $\tilde {r}_1 \leq r_1 $ is true. \qed
\end{proof}

\begin{theorem}
\label{thm12}
If $\tilde {r}_1 < r_1,$ then $\tilde {r}_2 = \arg 
\mathop {\min }\limits_{\tilde {r}_1 < r < T} f_2 (r\vert \tilde {r}_1 ) \leq 
\arg \mathop {\min }\limits_{r_1 < r < T} f_2 (r\vert r_1 ) = r_2 $.
\end{theorem}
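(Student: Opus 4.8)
The plan is to mimic the proofs of Theorems~\ref{thm09} and~\ref{thm11}: freeze the first stage at a parameter $\rho$, write the cost of the resulting two-stage cascade, isolate the factor of its $r$-derivative that controls where the minimum sits, and then compare this reduced derivative for the two conditioning values $\rho=r_1$ and $\rho=\tilde r_1$.

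First I would record that, with stage~1 fixed at $\rho$, $f_2(r\,|\,\rho)=p_1(\rho)(\rho+c)+(1-p_1(\rho))\big[p_2(r|\rho)(r+2c)+(1-p_2(r|\rho))(T+3c)\big]$, so differentiating in $r$ gives $f_2'(r\,|\,\rho)=(1-p_1(\rho))\,\phi(r;\rho)$ with $\phi(r;\rho):=p_2(r|\rho)+p_2'(r|\rho)(r-T-c)$. Since $1-p_1(\rho)>0$, the stationary points and the sign of $f_2'(\cdot\,|\,\rho)$ coincide with those of $\phi(\cdot\,;\rho)$; and by Theorem~\ref{thm04} applied to the two-stage cascade, $\phi(\cdot\,;\rho)$ is negative strictly to the left of its unique zero and positive strictly to its right, that zero being $r_2$ when $\rho=r_1$ and $\tilde r_2$ when $\rho=\tilde r_1$.

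The crux is a pointwise comparison of the two reduced derivatives. For $r$ in the common domain $r_1<r<T$ (note $\tilde r_1<r_1$, so $\phi(\cdot\,;\tilde r_1)$ is defined there), $\phi(r;\tilde r_1)-\phi(r;r_1)=\big[p_2(r|\tilde r_1)-p_2(r|r_1)\big]+\big[p_2'(r|\tilde r_1)-p_2'(r|r_1)\big](r-T-c)$. Because $\tilde r_1<r_1$, property~(\ref{eq42}) makes the first bracket strictly positive, property~(\ref{eq43}) makes $p_2'(r|\tilde r_1)-p_2'(r|r_1)<0$, and $r<T$ makes $r-T-c<0$, so the second term is strictly positive as well; hence $\phi(r;\tilde r_1)>\phi(r;r_1)$ on $(r_1,T)$.

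To finish, I would evaluate this inequality at $r=r_2$, which lies in $(r_1,T)$ by Theorem~\ref{thm05} (indeed $r_2\ge 2r_1>r_1$): $\phi(r_2;\tilde r_1)>\phi(r_2;r_1)=0$, so $\phi(r_2;\tilde r_1)>0$. Since $\phi(\cdot\,;\tilde r_1)$ is negative to the left of its unique zero $\tilde r_2$ and positive to its right, $\phi(r_2;\tilde r_1)>0$ forces $r_2\ge\tilde r_2$, which is exactly $\tilde r_2\le r_2$ (this is the ``contradiction'' form used in the earlier theorems, phrased directly). The only step that genuinely needs care is the bookkeeping of the leading factors $1-p_1(r_1)$ and $1-p_1(\tilde r_1)$: they are unequal, so one cannot simply subtract the two full derivatives $f_2'(\tilde r_2|\tilde r_1)-f_2'(r_2|r_1)$ as in Theorems~\ref{thm09}--\ref{thm11}, and must instead argue through $\phi$. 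Once that reduction is in place, everything else is the routine sign check above, relying only on (\ref{eq42}), (\ref{eq43}), Theorem~\ref{thm04}, and Theorem~\ref{thm05}.
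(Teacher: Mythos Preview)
Your argument is correct, and it is actually more careful than the paper's own proof in Appendix~A. The paper follows the same contradiction template as Theorems~\ref{thm09} and~\ref{thm11}: assume $\tilde r_2>r_2$, write $f'_2(\tilde r_2\vert\tilde r_1)-f'_2(r_2\vert r_1)$ as
\[
\bigl[p_1'(\tilde r_1)(\tilde r_1-\tilde r_2-c)-p_1'(r_1)(r_1-r_2-c)\bigr]+\bigl[p_1(\tilde r_1)-p_1(r_1)\bigr],
\]
and check the sign. But that expression is formula~(\ref{eq59}), which is the derivative of $f_2$ in the \emph{first}-stage variable with the second stage frozen; the derivative in the second-stage variable with stage~1 frozen at $\rho$ is precisely your $(1-p_1(\rho))\,\phi(r;\rho)$, built from $p_2(\cdot\vert\rho)$ rather than $p_1$. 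Your decision to strip off the unequal prefactors $1-p_1(r_1)$ and $1-p_1(\tilde r_1)$ and compare the reduced derivatives at the single point $r=r_2$, using (\ref{eq42})--(\ref{eq43}) together with the sign structure supplied by Theorem~\ref{thm04}, is exactly what is needed to make the argument go through cleanly. One cosmetic remark: you do not need Theorem~\ref{thm05} to place $r_2$ in $(r_1,T)$; that is immediate from the definition of $r_2$ as a minimizer over $r_1<r<T$.
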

\begin{proof} {See Appendix A.} \qed
\end{proof}

%
%
%
%
%
%
%

Theorems \ref{thm09}, \ref{thm11} and \ref{thm12} can be extended to multi-stage cascade. Correspondingly, Decreasing Phenomenon can be generalized to 
Generalized Decreasing Phenomenon and Algorithm 2 can be generalized to 
Algorithm 3. 

\textit{Generalized Decreasing Phenomenon:} If the alternative optimization 
algorithm 3 is used to find the globally optimal solution $(r_1^{\ast i} 
,r_2^{\ast i} ,...,r_i^{\ast i} ) = \arg \mathop {\min }\limits_{r_1 
,...,r_i ,} f_i (r_1 ,...,r_i )$, then it holds that: 

(1) Once a new stage $i + 1$ is added, all the optimal parameters of the 
existing stages $1,\ldots ,i$ are updated and decreased so that the 
computation cost is minimized.

(2) Once the number of stages is fixed, the parameter for each stage 
decreases gradually as iteration proceeds.

\begin{algorithm}[!t]
\renewcommand{\algorithmicrequire}{\textbf{Input:}}
\renewcommand\algorithmicensure {\textbf{Output:} }
\caption{ Globally optimal multi-stage cascade learning.}
\begin{algorithmic}[1]
\REQUIRE ~~\\
Strong classifier $H({\rm {\bf x}}) = \sum_{i=1}^{T} {\alpha _i h_i ({\rm {\bf x}})} $ and its threshold $t$;\\
A set of true negative sub-windows$\{{\rm {\bf x}}\vert l({\rm {\bf x}}) = - 1\}$;
\ENSURE ~~\\ 
$(r_1^ * ,...,r_S^ * ) = \arg \mathop {\min }\limits_{r_1 ,...,r_S } = f_S (r_1 ,....,r_S )$ where $S$ is the number of stages in the final cascade structure;

\STATE Search the optimal solution $r_1^{\ast (1)} $ of $f_1 (r)$ for stage 1 in the range of $1 < r < T$: $r_1 = \arg \mathop {\min }\limits_{1 < r < T} f_1 (r)$. $f \leftarrow f_1 (r_1 )$;

\FOR{$i=2$ to $S$}
\STATE Initialize the upper bound $r_1^u ,...,r_{i - 1}^u $ of $r_1 ,...,r_{i - 1} $: $r_j^u \leftarrow r_j^{\ast (i - 1)} $ for $j = 1,...,i - 1$;

\STATE Initialize the upper bound $r_i^u $ of $r_i $ by finding $r_i^u = \arg \mathop {\min }\limits_{r_i \geq 2r_{i - 1}^{\ast (i - 1)} - r_{i - 2}^{\ast (i - 1)} ,} f_i \left(r_i \vert r_1^{\ast (i - 1)} ,...,r_{i - 1}^{\ast (i - 1)} \right)$ with the search range $r_i \geq 2r_{i - 1}^{\ast (i - 1)} - r_{i - 2}^{\ast (i - 1)} $ and $r_0^{\ast (1)} \buildrel \Delta \over = 0$. 

\STATE $f \leftarrow f_i (r_1^{\ast (i - 1)} ,...,r_{i - 1}^{\ast (i - 1)} ),~\tilde {f} \leftarrow f_i (r_1^u ,...,r_i^u )$.

\WHILE{$f - \tilde {f} > \varepsilon$}
\STATE $f \leftarrow \tilde {f}$;
\FOR{$j=0$ to $i$}
\STATE $r_j^\ast = \arg \mathop {\min }\limits_{r_i \leq r_j^u } f_i (r_j \vert r_k^u ,k \ne j)$;
\ENDFOR
\STATE $\tilde {f} \leftarrow f_i (r_1^\ast ,...,r_i^\ast ),~r_j^u \leftarrow r_j^\ast .$
\ENDWHILE
\STATE $r_j^{\ast i} \leftarrow r_j^u ,~j = 1,...,i$;
\ENDFOR
\RETURN $r_i^\ast \leftarrow r_i^{\ast S} $, $i = 1,...,S$.
\end{algorithmic}
\end{algorithm}

In Algorithm 3, $\tilde {f}$ is the objective function after a new stage $i$ 
is added while $f$ is the one before stage $i$ is added. That is , $f$ is 
the value of objective function when there are $i - 1$ stages. According to 
Theorem \ref{thm05}, when a new stage $i$ is to be added, the optimal solution $r_i $ 
can be searched by increasing $r_i $ from $2r_{i - 1}^{\ast (i - 
1)} - r_{i - 2}^{\ast (i - 1)} $ instead of $r_{i - 1}^{\ast (i - 1)} $. 
Because $2r_{i - 1}^{\ast (i - 1)} - r_{i - 2}^{\ast (i - 1)} $ is much 
larger than $r_{i - 1}^{\ast (i - 1)} $, the search efficiency is very high. 
The iteration in line 5 of Algorithm 3 stops if the difference between $f$ and 
$\tilde {f}$ is below a threshold $\varepsilon > 0$, which implies that the 
algorithm arrives at global minimum solution for $i$ stages. 

Fig. 10 shows the classification procedure of multi-stage iCascade where the partition points ($r_1$,...,$r_S$) are given by Algorithm 3. If the computation cost of classifying positive samples is neglected, the computation cost $f_S $ of iCascade can be estimated by

\begin{equation}
\label{eq60}
\begin{array}{l@{~}l}
f_S =& \sum\limits_{i = 1}^S {(r_i+ic) \left[ {\prod\limits_{j = 1}^i {(1 - p_{j - 1} 
(r_{j - 1}))} } \right]} p_i (r_i)\\
& +(T+(S+1)c)\left[ {\prod\limits_{j = 1}^{S+1} {(1 - p_{j-1 } (r_{j-1}))} } \right].
\end{array}
\end{equation}


\subsection{Threshold learning in iCascade}
Once the number of weak classifiers in each stage is determined by Algorithm 3, the parameters affecting the computation cost are the thresholds $t_i $, $i=1,\ldots ,S$. In this section, we give theorem and algorithm for setting the thresholds ($t_1$,..., $t_S$). 

\begin{figure}
\label{FigiCascade}
\centering
\includegraphics{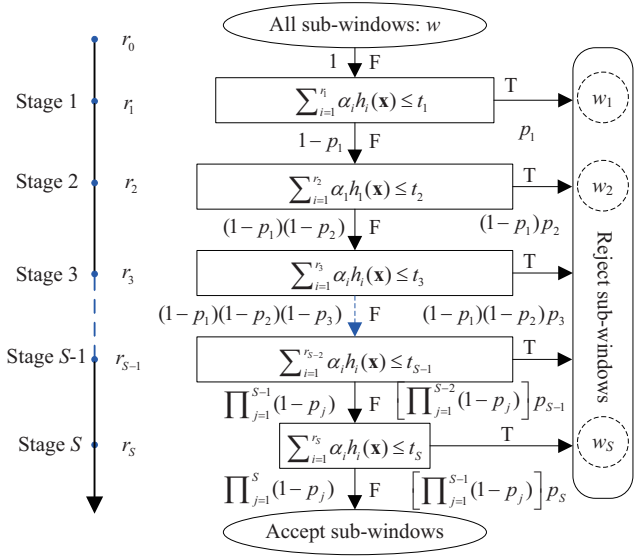}
\caption{The classification procedure of the multi-stage iCascade algorithm.}
\end{figure}

Theorem \ref{thm13} tells that the computation cost $f_S $ monotonically decreases with $t_i $ and $p_i (t_i )$, $i = 1,...,S$. So the 
computation cost can be reduced by decreasing the thresholds under the 
constraint of minimum-acceptable detection rate. 

\begin{theorem}
\label{thm13}
$f_S $ monotonically decreases with $t_i $ 
and $p_i (t_i )$, $i = 1,...,S$.
\end{theorem}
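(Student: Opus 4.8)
The plan is to regard $f_S$ as the polynomial in the stage rejection rates $p_1,\dots,p_S$ given by (60) — equivalently by (44)--(46) — with the partition points $r_1,\dots,r_S$ held fixed, and to show that $\partial f_S/\partial p_i<0$ for every $i$. Monotonicity of $f_S$ in $t_i$ then follows immediately: with $r_i$ fixed, the stage-$i$ rejection set $\{\mathbf{x}:H_L(\mathbf{x},r_i)+\max H_R(\mathbf{x},r_i)\le t_i\}$ only grows as $t_i$ increases, so the true-negative count it captures, hence $p_i=p_i(t_i)$, is non-decreasing in $t_i$; composing this with $\partial f_S/\partial p_i<0$ shows $f_S$ decreases with $t_i$ as well. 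This is precisely the sense in which the theorem is stated — through the cost model (60), $f_S$ depends on $t_i$ only via $p_i$.

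First I would carry out the differentiation. Writing $q_j:=1-p_j$ and $f_S=\sum_{k=1}^{S}(r_k+kc)\bigl(\prod_{j=1}^{k-1}q_j\bigr)p_k+(T+(S+1)c)\prod_{j=1}^{S}q_j$, the variable $p_i$ occurs only (a) as the explicit factor $p_k$ in the $k=i$ summand, (b) inside $q_i$ in every summand with $k>i$, and (c) inside $q_i$ in the $f_S^R$ term. Collecting these contributions and pulling out the common factor $\prod_{j=1}^{i-1}q_j$, I expect to reach
\begin{equation}
\frac{\partial f_S}{\partial p_i}=\Bigl(\prod_{j=1}^{i-1}q_j\Bigr)\bigl[(r_i+ic)-E_i\bigr],\qquad E_i:=\sum_{k=i+1}^{S}(r_k+kc)\,p_k\!\!\prod_{j=i+1}^{k-1}\!\!q_j\;+\;(T+(S+1)c)\!\!\prod_{j=i+1}^{S}\!\!q_j .
\end{equation}

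The heart of the argument is to recognise $E_i$ as a convex combination. Its coefficients are exactly the conditional fractions, among the sub-windows that reach stage $i+1$, of those rejected at stages $i+1,\dots,S$ and of those surviving to the final strong classifier; a one-line telescoping identity, $\sum_{k=i+1}^{S}\bigl(\prod_{j=i+1}^{k-1}q_j\bigr)(1-q_k)+\prod_{j=i+1}^{S}q_j=1$, shows they are non-negative and sum to $1$. Hence $E_i$ lies between the minimum and maximum of the numbers $\{r_k+kc:i<k\le S\}\cup\{T+(S+1)c\}$. Since the construction gives $r_i<r_{i+1}<\dots<r_S<T$ and $ic<(i+1)c<\dots<(S+1)c$, every one of those numbers strictly exceeds $r_i+ic$, so $E_i>r_i+ic$, the bracket is negative, and $\prod_{j=1}^{i-1}q_j>0$ because each stage rejects strictly less than everything ($p_j<1$); therefore $\partial f_S/\partial p_i<0$. (The case $i=S$ is the same computation with an empty sum, where $E_S=T+(S+1)c>r_S+Sc$.) Intuitively this is transparent: raising $p_i$ diverts a positive fraction of sub-windows into being rejected at stage $i$ at cost $r_i+ic$, whereas each of them previously paid strictly more — $r_k+kc$ for some $k>i$, or $T+(S+1)c$ — so the expected cost strictly drops.

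I expect the only genuine work to be the bookkeeping in the differentiation: tracking which summands contain $p_i$ as $p_i$ versus as $1-p_i$, and then verifying that the emergent coefficients form an honest probability vector so the convex-combination comparison applies; after that the sign of the bracket is a one-line inequality. A secondary point worth flagging is that this treats the other rejection rates $p_j$ $(j\ne i)$ as fixed when $t_i$ varies, which is exactly the parameterisation of the cost model (60) that Theorem~\ref{thm13} refers to; modelling the weak inter-stage coupling of a real cascade is outside the scope of the stated result.
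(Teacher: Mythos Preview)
Your argument is correct, and the convex-combination identification of $E_i$ is a tidy way to see why the bracket $(r_i+ic)-E_i$ is negative. The approach differs from the paper's in one substantive respect, however. The paper (Appendix~B) does \emph{not} hold the other $p_j$ fixed: it takes a finite increment $t_k^s\to t_k^b$, lets all the downstream conditional rates $p_{k+1},\dots,p_S$ shift accordingly, and then exploits the constraint that the total rejection from stage $k$ onward is conserved, $\tilde R_{k-1}(t_k^s)=\tilde R_{k-1}(t_k^b)=\eta$. Monotonicity follows by observing that the absolute rejection mass can only move from later (costlier) bins into bin $k$ (their inequality~(70)), and then replacing every cost coefficient in the difference $f_S(t_k^s)-f_S(t_k^b)$ by the smallest one makes the expression telescope to $r_k(\eta-\eta)=0$, forcing the original difference to be positive.

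So your closing disclaimer that the inter-stage coupling is ``outside the scope of the stated result'' does not quite match the paper: its proof is precisely about $df_S/dt_i$ with that coupling present, whereas you establish the coordinate-wise partial $\partial f_S/\partial p_i<0$. Both arguments rest on the same ordering $r_i+ic<r_k+kc<T+(S{+}1)c$ for $k>i$, and if the downstream $p_j$ happen not to move the paper's finite-difference computation collapses to your derivative one. Your route is more elementary and arguably cleaner; the paper's establishes the stronger operational statement that changing a single stage threshold moves the cascade's cost in the claimed direction even after the later stages' conditional rejection rates readjust.
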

\begin{proof} See Appendix B. \qed
\end{proof}

If the detection rate $D = 1$ (i.e., all the positive training samples 
are correctly classified) is the constraint, then the optimal 
threshold $t_i^ * $ can be expressed as:
\begin{equation}
\label{eq61}
t_i^\ast = \arg \mathop {\min } t_i ,\;\;\mbox{s.t.}\;\;d(t_i 
) = 1,\quad i = 1,...,S.
\end{equation}
which guarantees $D = \prod\nolimits_{i = 1}^S {d(t_i^ * )} = 1$. In (\ref{eq61}), 
$\;d(t_i )$ is the detection rate of stage $i$ defined by: 

\begin{equation}
d(t_i ) = \frac{\sum\limits_{\bf{x}} {I(\sum\limits_{j = 1}^{r_i} 
{\alpha _j h_j ({\rm {\bf x}})} > t)} }{\sum\limits_{\rm {\bf x}} {I(l({\rm 
{\bf x}}) == 1)} }.
\end{equation}

It is challenging to choose the optimal thresholds if the expected detection 
$D < 1$. It is well known that the detection rate $D$ of the system is the 
product of the detection rate $d(t_i )$ of each stage. A popular way to set 
$d(t_i )$ is 

\begin{equation}
\label{eq63}
d(t_i ) = D^{1 / S},\;i = 1,...,S.
\end{equation}

However, when the number of stages of iCascade is very large, it holds that $d(t_i ) \approx 1$. 
Such high $d(t_i )$ makes the threshold $t_i $ very large and the corresponding 
computation cost is very large. 

To deal with the above problem, we propose to use Algorithm 4 for threshold 
learning. The initial thresholds are chosen by (\ref{eq63}) guaranteeing the 
detection rate $D$ being 1. The corresponding initial computation cost is 
denoted by $f_S $. The main issue is to select which stage to decrease its 
initial threshold by a small step $\Delta t_i$. In our algorithm, the 
derivative ${f}'_S$ of the computation cost $f_S $ against detection rate 
$D$ is computed by

\begin{equation}
{f}'_S(i) \approx \Delta f_S / \Delta D_i ,
\end{equation}

\noindent
where $\Delta D_i $ is the variation of the system detection rate. Note 
that the variation $\Delta D_i $ is caused by changing $t_i $ to $t_i - 
\Delta t_i$ while the thresholds $t_k $ of other stages (i.e., $k \ne i$) remain 
unchanged. 

The stage $j$ with the largest derivative is selected and its threshold $t_j 
$ is then decreased by the small step $\Delta t_j$:

\begin{equation}
j = \arg \mathop {\max }\limits_i \frac{\Delta f_S }{\Delta D_i },
\end{equation}

\begin{equation}
t_j \leftarrow t_j - \Delta t_j,
\end{equation}

\noindent
with the thresholds of the stages (i.e., $i \ne j$) unchanged.

Re-compute the computation cost $f_S $ and detection rate $D$ when $t_j $ 
is updated:

\begin{equation}
f_S \leftarrow f_S - \Delta f_S,
\end{equation}

\begin{equation}
D \leftarrow D - \Delta D_j.
\end{equation}

The step $\Delta t_i$ is small enough to keep the detection rate $D$ smaller than 
the target detection rate $D_o $. 

As shown in Algorithm 4, the iteration of choosing the most important stage 
$j = \arg \mathop {\max }\limits_i \Delta f_{S} / \Delta D_i $, updating 
its threshold $t_j \leftarrow t_j - \Delta t_j$ and corresponding 
computation cost $f_S \leftarrow f_S - \Delta f_S $ and detection rate $D 
\leftarrow D - \Delta D_j$ runs until the updated detection rate $D$ is below 
the expected detection rate $D_o $. 

\begin{algorithm}[!t]
\renewcommand{\algorithmicrequire}{\textbf{Input:}}
\renewcommand\algorithmicensure {\textbf{Output:} }
\caption{ Threshold learning algorithm for iCascade.}
\begin{algorithmic}[1]
\REQUIRE ~~\\
Expected detection rate $D_o $;\\
Positve and negative training samples;\\
Strong classifiers $H({\rm {\bf x}}) = \sum\nolimits_{i = 1}^T {\alpha _i h_i ({\rm {\bf x}})} $;
\ENSURE ~~\\ 
The optimal thresholds $t_i $ of all the $S$ stages;

\STATE Initialize the thresholds $t_i $ for each stage by $t_i = \arg \mathop {\min } t_i$ , $\mbox{s.t.}\;\;d(t_i ) = 1$, $i = 1,...,S$ so that the system detection rate $D=1$;

\STATE Corresponding to the initial thresholds, the initial computation cost of the system is computed by (\ref{eq60}) and denoted by $f_S$;

\REPEAT
\STATE For each stage, compute the approximation of the derivative $\Delta f_S / \Delta D_i $ of the computation cost $f_S $ against detection rate $D$. The variations $\Delta f_S $ and $\Delta D_i $ are caused by changing $t_i $ to $t_i - \Delta t_i$ while the thresholds $t_k $ of other stages (i.e., $k \ne i$) remain unchanged;

\STATE From all the $S$ stages, choose the stage $j$ with largest derivative $j = \arg \mathop {\max }\limits_i \Delta f_S / \Delta D_i $. Then decrease the threshold $t_j $ of the stage $j$ by a small step $\Delta t_j$ : $t_j \leftarrow t_j - \Delta t_j$;

\STATE Update the computation cost $f_S $ and detection rate $D$: $f_S \leftarrow f_S - \Delta f_S $, $D \leftarrow D - \Delta D_j$;
\UNTIL{$D\le D_o$}

\RETURN the updated thresholds $t_i $ of all the $S$ stages.
\end{algorithmic}
\end{algorithm}

\section{EXPERIMENTAL RESULTS}

\subsection{Experimental Setup}
The classical cascade learning algorithms of Viola and Jones (VJ)(a.k.a., Fixed Cascade) \cite{Viola_face_detection_IJCV_2004}, Recyling Cascade \cite{Brubaker_Design_IJCV_2008} and Recyling \& Retracting Cascade \cite{Brubaker_Design_IJCV_2008} are compared with 
the proposed iCascade algorithm. The testing dataset is the
standard MIT-CMU frontal face databaset \cite{Rowley_NNFD_PAMI_1998}, \cite{Viola_face_detection_IJCV_2004}. The positive training dataset consists of about 20000 normalized face 
images of size 20$\times $20 pixels. 5000 non-face large images are collected from web sites to generate negative training dataset. Both of the positive and negative training images can be downloaded from https://sites.google.com/site/yanweipang/publica. 

In addition, the intermediate results 
demonstrating the correctness of the proposed theorems are given in Section VI.B.

A strong classifier $H({\rm {\bf x}}) = \sum\nolimits_{i = 1}^T {\alpha _i 
h_i ({\rm {\bf x}})} $ is considered input of iCascade. The strong 
classifier is obtained by standard AdaBoost algorithm without designing of 
cascade structure.

\subsection{Intermediate Results of iCascade}
Some intermediate results are shown in this section. These results show the rationality of the assumptions and the correctness of the proposed theorems.
\subsubsection{Local-Minimum Based Cascade}
In Section III, the regular strong AdaBoost classifier is divided into $H_L(\textbf{x},r)$ and $H_R(\textbf{x},r)$ to 
reject some negative sub-windows earlier, and the key problem is to 
determine an optimal $r$ to minimize the computation cost. To solve this 
problem, it is necessary to reveal the relationship between $r$ and the 
negative rejection rate $p$. 

In this part, with the training dataset described in Section VI.A, we train a regular strong AdaBoost classifier and split it 
into two parts by $r$, which varies $1$ to $T$. In the case that detection rate is fixed at 1, Fig. 11 shows that the negative rejection rate $p$ increases with $r$. $p$ first grows 
quickly from 0 to 0.96 when $r$ changes from 1 to a small value $r^\ast=80$, 
and then becomes stable when $r$ is larger than $r^\ast $ . Thus, we 
can model $p(r)$ by combining of two linear functions: $p_1 (r) = 0.012r$ with $r 
< r^\ast $ and $p_2 (r) = 1$ with $r \ge r^\ast $. Fig. 11 demonstrates the rationality of (\ref{eq16})-(\ref{eq25}). Fig. 12 shows that the computation cost $f$ first decreases and then increases with $r$, and the unique minimum is nearby $r^\ast $. Fig. 12 experimentally proves the correctness of Theorem \ref{thm01} and Theorem \ref{thm02}.
\begin{figure}[!t]
\label{FigExperimentNegativeReject}
\centering
\includegraphics[width=3.2in]{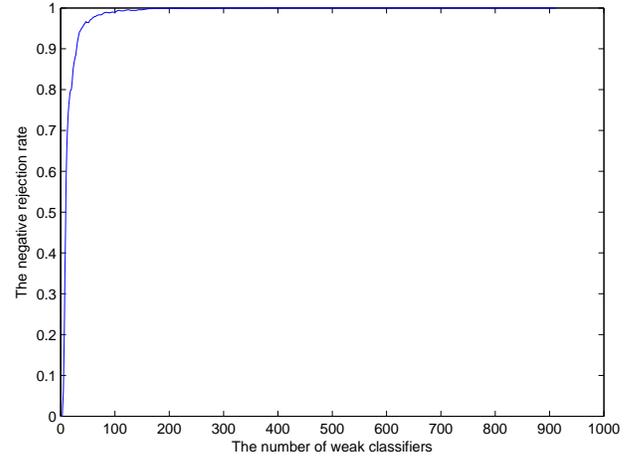}
\caption{The negative rejection rate $p(r)$}
\end{figure}

\begin{figure}[!t]
\label{FigExperimentCompuatation}
\centering
\includegraphics[width=3.2in]{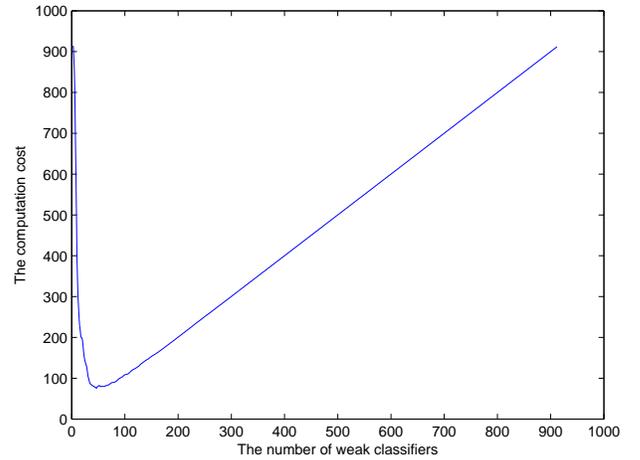}
\caption{The computation cost $f(r)$}
\end{figure}

When we split the regular strong AdaBoost classifier into $H_L ({\rm {\bf x}},r_1 )$ and $H_R ({\rm {\bf x}},r_1 )$, the sub-windows not rejected by stage 1 are fed to 
stage 2. Then we can divide $H_R ({\rm {\bf x}},r_1 )$ into two parts to form a
2-stage cascade. In this process, we should know some properties of the 
negative rejection rate of stage 2 (i.e., $p(r\vert r_1 ))$. Fig. 13 shows how $p(r\vert r_1 )$ changes with $r$, where the curves of $p(r|r_1)$ when $r_1 = 10$ and $r_1 = 30$ are given, respectively. $p(r|r_{1})$ has the similar characteristics with $p(r)$. Fig. 14 shows how the derivative curves of $p(r\vert r_1 )$ change with $r$. Obviously, when $\widetilde{r_1 } < r_1 $, $p(r_2 \vert \widetilde{r}_1 ) > p(r_2 \vert r_1 )$ and ${p}'(r_2 \vert \widetilde{r}_1 ) < {p}'(r_2 \vert r_1 )$.  Fig. 13 and 14 directly support the correctness of (\ref{eq39})-(\ref{eq43}).
\begin{figure}
\label{FigExperimentProperties}
\centering
\includegraphics[width=3.2in]{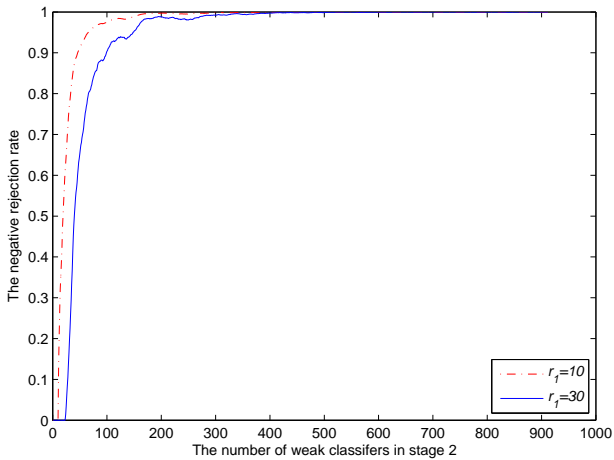}
\caption{Some properties of $p(r_2 \vert r_1 )$}
\end{figure}
\begin{figure}
\label{FigExperimentDerivative}
\centering
\includegraphics[width=3.2in]{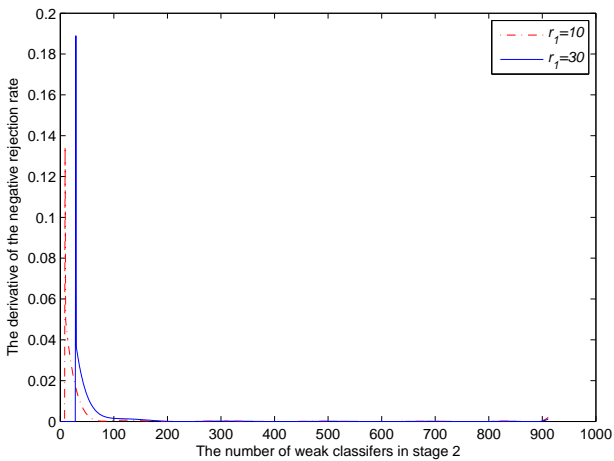}
\caption{The derivative of $p(r_2 \vert r_1 )$}
\end{figure}

We use the local-minimum based multi-stage cascade learning method (see Fig. 4) to train an 8-stage cascade classifier. Table 1 shows how the computation cost 
$f$ changes with the number of stages. The computation cost first decreases 
quickly and then becomes stable. This phenomenon can be understood 
easily, because the first few stages can reject the most part of the 
sub-windows, and then only some small part of the sub-windows arrive at last 
few stages which don't produce much computation cost.

\begin{table*}[!t]
\centering
\renewcommand{\arraystretch}{1.3}
\caption{Computation cost $f$ of the algorithm in Fig. 4 varies with the number $s$ of stages}
\begin{tabular}
{|c|c|c|c|c|c|c|c|c|}
\hline
$s$& 1& 2& 3& 4& 5& 6& 7& 8 \\
\hline
$f$& 74.42& 52.87& 52.16& 52.15& 52.14& 52.14& 52.14& 52.14 \\
\hline
\end{tabular}
\end{table*}

\subsubsection{Joint-Minimum Based Cascade}
In the local-minimum based multi-stage cascade(i.e., Fig. 4), it seeks an optimal $r_i $ on the condition that $(r_1 ,...,r_{i - 1} )$ are known and fixed, so $(r_1 ,...,r_{i - 1} ,r_i )$ can't be jointly optimal for minimizing the computation cost $f(r_1 ,...,r_i )$ where not only $r_{i}$ but also $r_{1},\ldots,r_{i-1}$ are variable. Thus, Algorithm 3 is proposed to train the joint-minimum based multi-stage cascade.

Fig. 15 shows the iteration process of Algorithm 3. 
The number 48 on the top blue line is $r_1^{*(1)}=\arg\min f_1(r)$, which is the result of line 1 of Algorithm 3. The right number 172 on the top red line is $r_2^u=\arg\min f_2(r_2|r_1^{*(1)})=172$ (see line 3 of Algorithm 3). Obviously $r_2^u=172$ is the solution of local-minimum based optimization. The number 17 and 82 on the second blue line are solutions of joint-minimum based optimization (i.e., line 13 of Algorithm 3). Generally, the right most number on each red line is the upper bound $r_i^u$ of Algorithm 3, and the number on each blue line are the solutions of joint-minimum based optimization $r_j^{*i},j=1,\ldots,i$.
The generalized decreasing phenomenon can be found from Fig. 15. For example, $r_1$ decreases from 48 to 17, 12; $r_2$ decreases from 172 to 82, 45, 33; $r_3$ decreases from 180 to 172, 69, 60. Table 2 gives the computation cost of the cascade corresponding to Fig. 15. In Table 2, $f_{local}(1)=74.42$ is the computation cost $f_1(r_1)$ with $r_1=48$, and $f_{local}(2)=52.87$ is equal to $f_2(r_2|r_1)$ with $r_1=48$ and local optimization solution is $r_2=172$. Generally, $f_{local}(i)$ means the computation cost $f(r_i|r_1,\ldots,r_{i-1})$. In Table 2, $f_{joint}(i)$ is the computation cost $f(r_1,\ldots,r_i)$ of the proposed joint-minimum algorithm where $r_1,\ldots,r_i$ are all unknown and $i$ is the total number of the stages of the cascade. Note that $f_{joint}(1)=f_{local}(1)$, because there is only one stage in the cascade. However, $f_{joint}(2)=37.83$ and $f_{joint}(3)=26.67$ are much less than $f_{local}(2)=52.87$ and $f_{local}(3)=31.24$, respectively.

\begin{figure}[!t]
\label{FigExperimentGeneralized}
\centering
\includegraphics[width=3.2in]{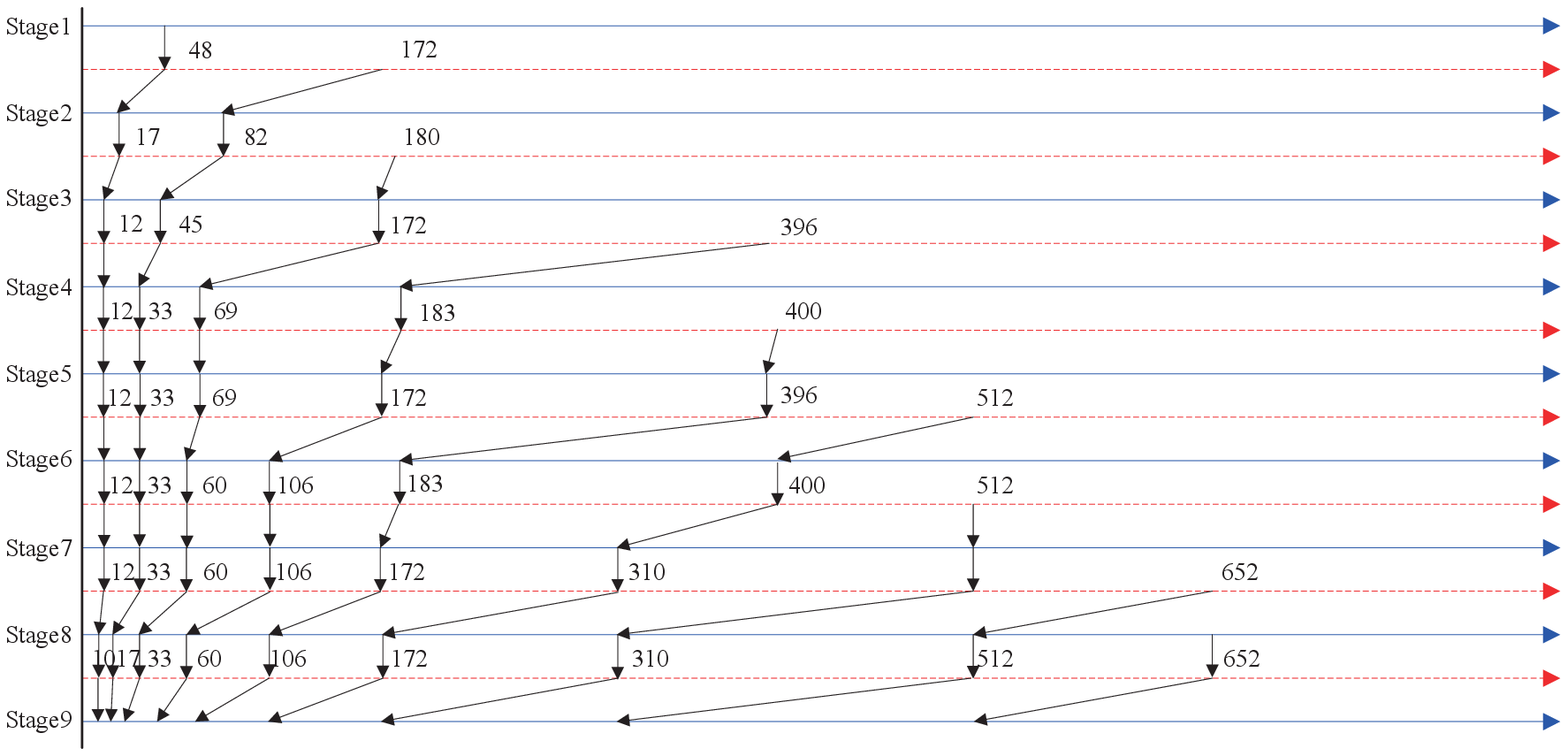}
\caption{Generalized decreasing phenomenon in joint-minimum based multi-stage cascade}
\end{figure}

\begin{table*}[!t]
\centering
\renewcommand{\arraystretch}{1.3}
\caption{Computation cost of the cascade trained by Algorithm 3}
\begin{tabular}
{|c|c|c|c|c|c|c|c|c|c|}
\hline
stage number $i$& 1& 2& 3& 4& 5& 6& 7& 8& 9 \\
\hline
$f_{local}(i)$ & 74.42& 52.87& 31.24& 26.00& 22.16& 21.99& 21.26& 21.19& 18.98 \\
\hline
$f_{joint}(i)$& 74.42& 37.83& 26.67& 22.70& 22.06& 21.31& 21.20& 18.99& 18.38 \\
\hline
\end{tabular}
\end{table*}

To compare the joint-minimum Algorithm 3 with the local-minimum algorithm (see Fig. 4), we visualize $f_{joint}$ in Table II and $f$ in Table I in Fig. 16. With the number of stages increasing, the computation costs decrease. But the difference is that the joint-minimum based algorithm decreases more quickly than the local-minimum algorithm. For example, when the numbers of stages are 3 and 8, the computation costs  of the joint-minimum and local-minimum algorithms are (26.67 and 18.99) and (52.16 and 52.14), respectively. In summary, Fig. 16 demonstrates the advantage and importance of the proposed joint-minimum optimization algorithm.

\begin{figure}
\label{FigExperimentComparisonLocalandJoint}
\centering
\includegraphics[width=3.2in]{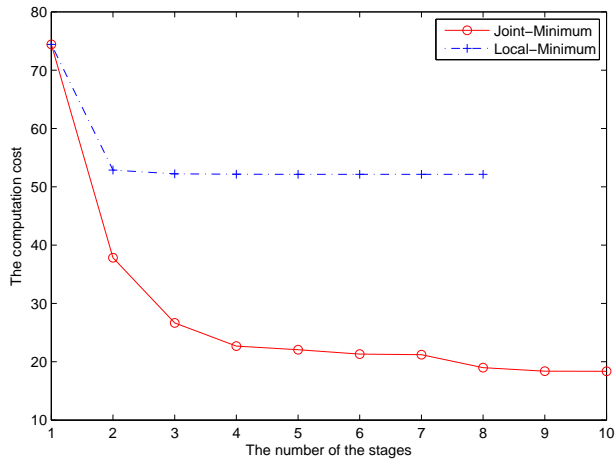}
\caption{Comparison of the computation cost between local-minimum based multi-stage cascade and joint-minimum based one}
\end{figure}

\subsubsection{Threshold learning}
 The thresholds $t_i,i=1,\ldots,T$ affect the computation cost of iCascade. Algorithm 4 gives the iteration process to choose the threshold of each stage for iCascade. Note that the variation $\Delta D_i$ of detection rate
 is obtained by changing $t_i$ to $t_i - \Delta t_i$. As $ \Delta t_i$ gradually decreases,
 the detection accuracy increases whereas the training
 time drastically grows. A set of $t_i$ is evaluated. We find that
 the performance is stably good if $t_i \leq 0.02$. As a tradeoff,
 $t_i = 0.01$ is empirically employed. Fig. 17 shows how the computation cost updates in the iteration process of the first 20 stages' thresholds. It can be seen that the computation cost significantly decreases with the iteration. In addition, Fig. 17 shows the convergence of the proposed threshold learning algorithm. Fig. 17 supports the correctness of Theorem \ref{thm13}.
\begin{figure}
\label{FigExperimentComputationUpdate}
\centering
\includegraphics[width=3.2in]{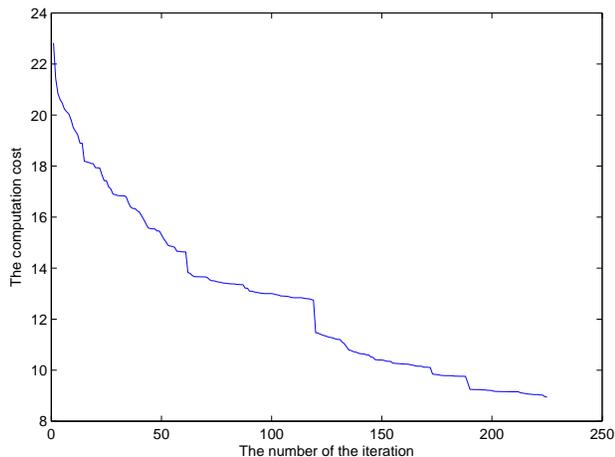}
\caption{The computation cost decreases with updation of threshold $t_i$}
\end{figure}

\subsection{Comparison With Other Algorithms}
In this section, we compare iCascade with some 
other algorithms, including Fixed Cascade \cite{Viola_face_detection_IJCV_2004}, Recyling Cascade \cite{Brubaker_Design_IJCV_2008} and Recyling \& 
Retracting Cascade \cite{Brubaker_Design_IJCV_2008}. 

Fixed Cascade is proposed by Viola and Jones. "Fixed" means that the 
detection rate $d_i$ and the false detection rate $f_i $ of each stage 
is same and fixed, respectively. If the target detection rate of the 
cascade is $D$, the target rejection rate is $F$ and the number of the 
stages is $N$, then $d_i = D^{1 / N}$ and $f_i = F^{1 / N}$. In Recyling Cascade, the score from the previous strong classifier stages serves as a starting point for the score of the new strong classifier stage. 
The benefit of Recyling Cascade is the reduction of the number of the weak classifiers in the strong classifier stages and the reduction of the computation cost. The side-effect of Recyling Cascade is that the last stage of cascade can serve as an accurate strong classifier. Recyling \& Retracting Cascade chooses a threshold after each weak classifier of the strong classifier is produced by Recyling Cascade to reject some negative sub-windows. To set these thresholds, it evaluates each score on the set of the positive examples and chooses the minimum score as the threshold so that all the positive examples in the set can pass all the weak classifiers.

These algorithms are evaluated on the standard MIT-CMU frontal face database \cite{Rowley_NNFD_PAMI_1998}, \cite{Viola_face_detection_IJCV_2004}, 
which consists of 125 grayscale images containing 483 labeled frontal faces.
If the detected rectangle and the ground-truth rectangle are at least 50 
percent of overlap, we call the detected rectangle a correct detection. The number of average features per window is used to represent the computation cost. Fig. 18 reflects the computation cost of different algorithms (i.e., iCascade, Recyling Cascade and Retracting \& Recyling Cascade) as a function of image location. The number of the average features used in a sliding window is accumulated to the center pixel of this sliding window. After detection, the value of each pixel is normalized to 0-255. The larger the value is, the greater the computation cost is, and the greater the probability that the face exists here is. It can be observed that Fig. 18(d) (i.e., iCascade) is much darker and sparser than Fig. 18(b) and (c). The darkness and sparisity imply that iCascade consumes less computation cost than the other two algorithms.

Fig. 19 shows the average number of features applied per window of different methods at different expected detection rates $D_0$. For example, when the detection rate is 0.98, iCascade averagely uses 5.95 features, wheras Fixed Cascade, Recyling Cascade and Reyling \& Retracting Cascade use 22.84, 20.78 and 13.32, respectively. Fig. 20 shows the ROC of the different algorithms. The detection performance of different methods is no significant difference. From Fig. 19 and 20, we can conclude that iCascade has less computation cost with no loss of detection performance. 

\begin{figure}[!t]
\centering
\subfloat[Original image]{\includegraphics[width=1.5in]{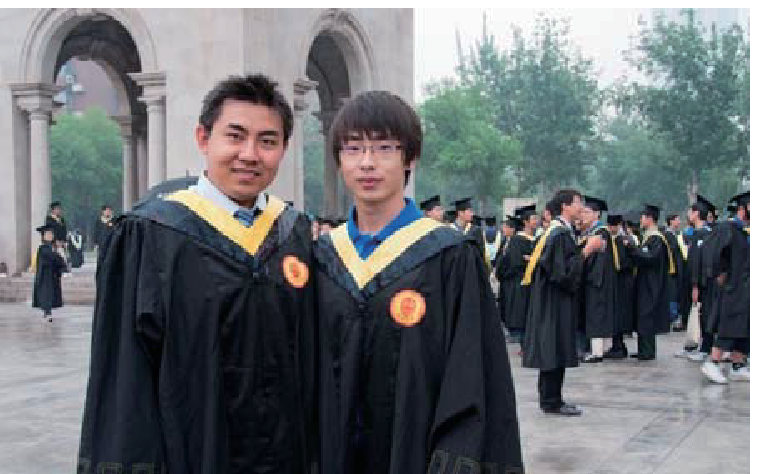}%
\label{fig_first_case}}
\hfil
\subfloat[Recyling response image]{\includegraphics[width=1.5in]{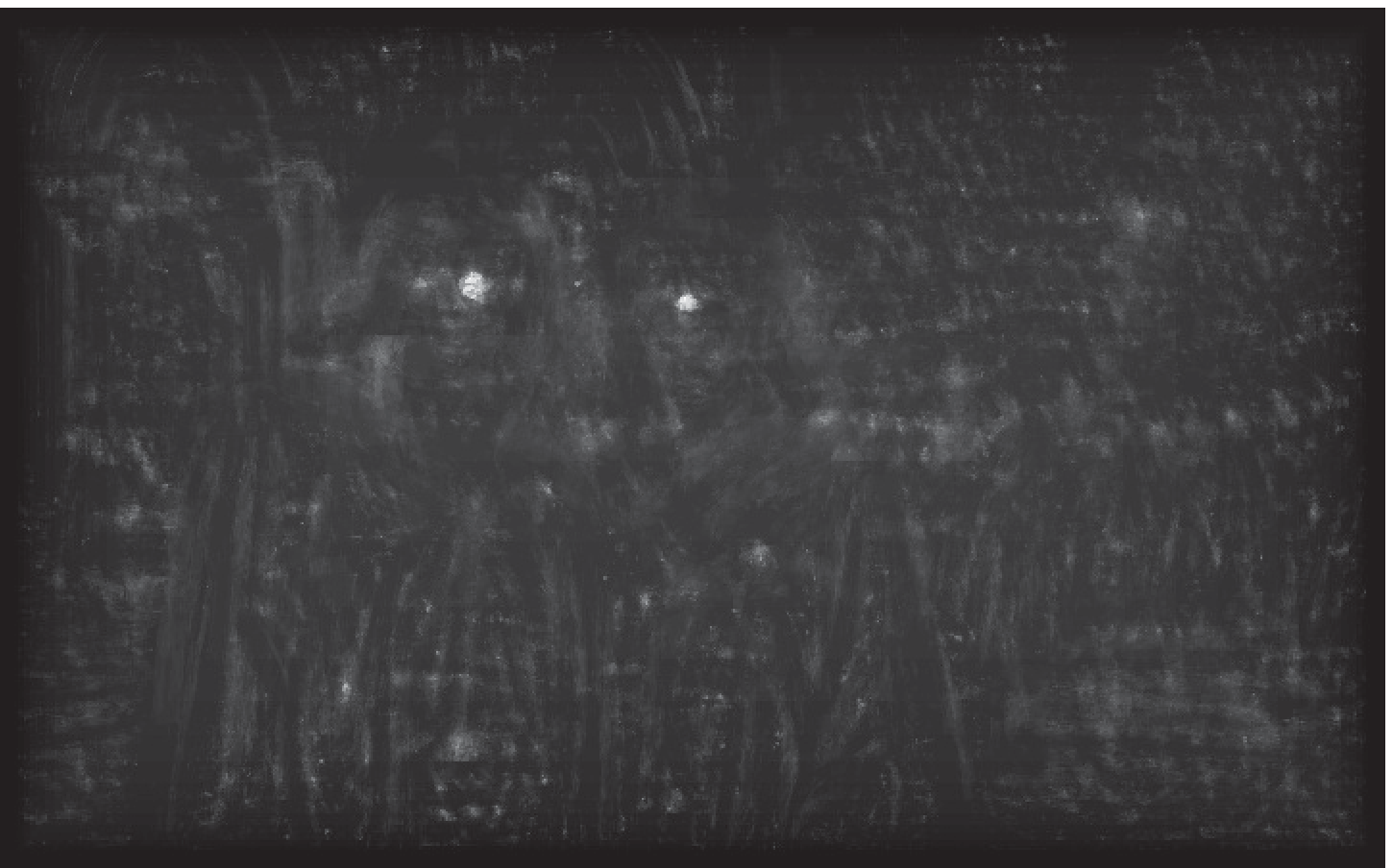}%
\label{fig_second_case}}
\hfil
\subfloat[Recyling \& Retracting response image]{\includegraphics[width=1.5in]{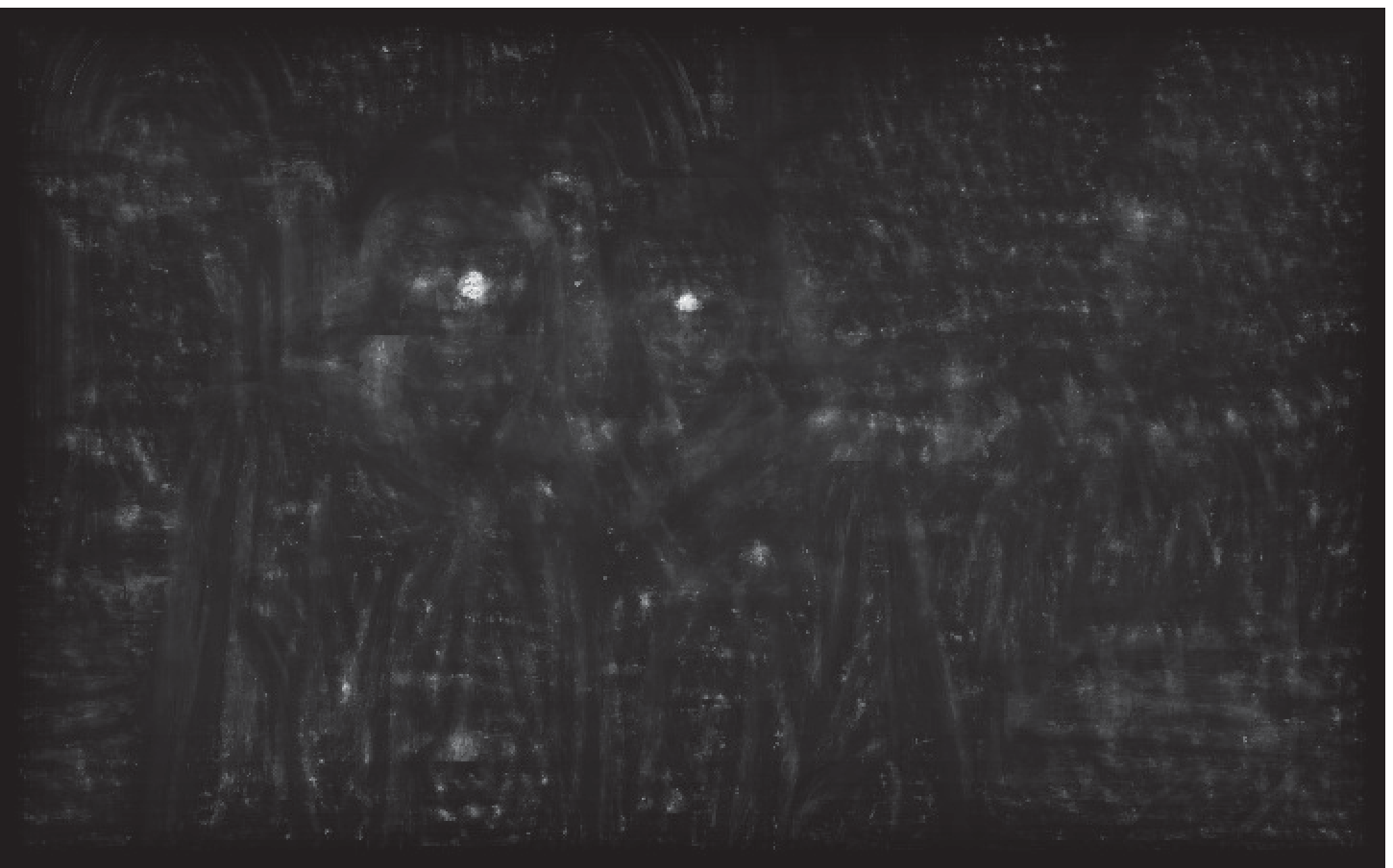}%
\label{fig_third_case}}
\hfil
\subfloat[iCascade response image]{\includegraphics[width=1.5in]{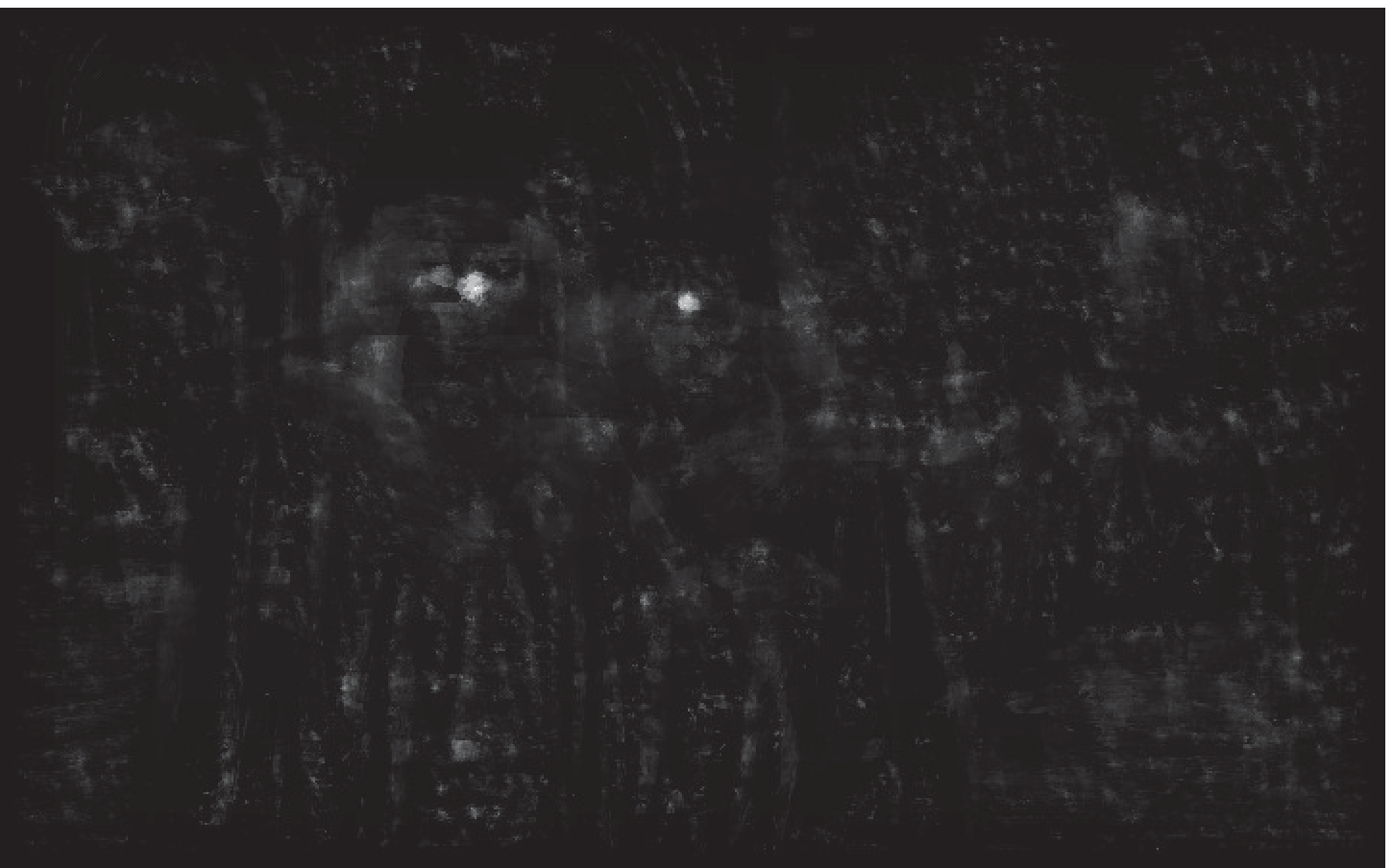}%
\label{fig_four_case}}
\caption{The computation cost shown as a function of image location.}
\label{fig_sim}
\end{figure}

\begin{figure}[!t]
\label{FigExperimentComparisonAlagorithms}
\centering
\includegraphics[width=3.2in]{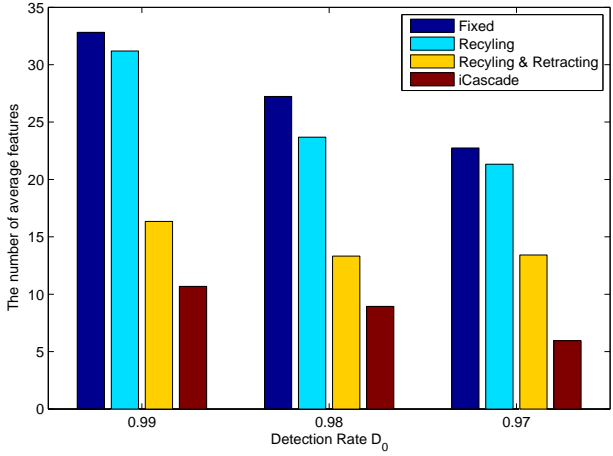}
\caption{Comparison of the computation cost between different algorithms}
\end{figure}

\begin{figure}[!t]
\label{FigExperimentROC}
\centering
\includegraphics[width=3.2in]{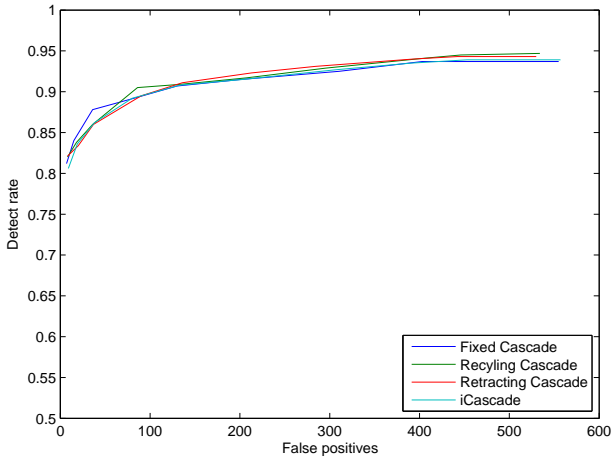}
\caption{ROC of different algorithms}
\end{figure}

\section{Conclusion}
In this paper, we have proposed to design a one-stage cascade structure by 
partitioning a strong classifier into left and right parts. Moreover, we 
have proposed to design a multi-stage cascade structure by iteratively 
partitioning the right parts. Solid theories have been provided to guarantee the 
existence and uniqueness of the optimal partition point with the goal of 
minimizing computation cost of the designed cascade classifier. Decreasing 
phenomenon has been discovered and theoretically justified for efficiently 
searching the optimal solutions. In addition, we have presented an effective algorithm for learning the optimal threshold of each stage classifier.


%
\section*{Appendix A: Proof of Theorem \ref{thm12}}
\begin{proof}
$\because\tilde {r}_2 = \arg \mathop {\min }\limits_r f_2 (r\vert \tilde {r}_1 )$ and 
$r_2 = \arg \mathop {\min }\limits_r f_2 (r\vert r_1 )$,

$\therefore\frac{df_2 (\tilde {r}_2 \vert \tilde {r}_1 )}{d\tilde {r}_2 } - \frac{df_2 (r_2 \vert r_1 )}{dr_2 } = 0$ is true.

Now investigate the value of ${f}'_2 (\tilde {r}_2 
\vert \tilde {r}_1 ) - {f}'_2 (r_2 \vert r_1 ) = \left[ {p_1 ^\prime (\tilde {r}_1 )(\tilde {r}_1 - \tilde {r}_2-c ) - p_1^\prime (r_1 )(r_1 - r_2-c )} \right] + [p_1 (\tilde {r}_1 ) - p_1 (r_1 )]$ if $\tilde {r}_2 > r_2 $ is true:

$\because\tilde {r}_2 > r_2$ and $\tilde {r}_1 < r_1$ are assumed,

$\therefore 0<p_1^\prime(r_{1})<p_1^\prime(\tilde{r}_{1}), \tilde {r}_1 - \tilde {r}_2-c < r_1 - r_2-c < 0,0<p_1(\tilde{r}_{1})<p_1(r_{1}),$

$\therefore {p_1 ^\prime (\tilde {r}_1 )(\tilde {r}_1 - \tilde {r}_2-c ) < p_1^\prime (r_1 )(r_1 - r_2-c )}, p_1 (\tilde {r}_1 ) < p_1 (r_1 )$.

$\therefore {f}'_2 (\tilde {r}_2 \vert \tilde {r}_1 ) - {f}'_2 (r_2 \vert r_1 ) < 0$.

This contradicts ${f}'_2 (\tilde {r}_2 \vert \tilde {r}_1 ) - {f}'_2 (r_2 
\vert r_1 ) = 0$. Therefore, $\tilde {r}_2 > r_2 $ is wrong and $\tilde 
{r}_2 \leq r_2 $ is true. \qed
\end{proof}

\section*{Appendix B: Proof of Theorem \ref{thm13}}
\begin{proof}
It is straightforward that $p_i (t_i )$ monotonically 
increases with $t_i $. Suppose we increase $t_k $ in stage $k$ 
from $t_k^s $ to $t_k^b $ with $t_k^s < t_k^b $ while the thresholds $t_i $ 
in other stages (i.e., $i \ne k$) are fixed. Correspondingly, the rejection rate 
$p_k (t_k )$ grows from $p_k^s $ to $p_k^b $ and the computation cost $f_S $ 
changes from $f_S (t_k^s )$ to $f_S (t_k^b )$. Theorem \ref{thm13} can be proved if 
$f_S (t_k^s ) - f_S (t_k^b ) > 0$.

Define $f_k^L = \sum_{i = 1}^k (r_i+ic) {\left[
{\prod\nolimits_{j = 1}^i {(1 - p_{j - 1} )} } \right]p_i } $, we have
\begin{equation*}
\begin{array}{l@{~}l}
 f_S & = f_{k - 1}^L  + \sum\limits_{i = k}^{S} {(r_i+ic)\left[ {\prod\limits_{j = 1}^i {(1 - p_{j - 1} )} } \right]p_i }\\
& + (T+(S+1)c)\left[ {\prod\limits_{j = 1}^{S+1} {(1 - p_{j-1})} } \right]\\
& = \left[ {\prod\limits_{j = 1}^k {(1 - p_{j - 1} )} } 
\right]\left\{ \sum\limits_{i=k+1}^{S}{(r_i+ic)\left[\prod\limits_{j = k + 1}^{i} (1 - p_{j - 1} )\right] } p_i \right. \\
& \left. + (T+(S+1)c)\left[ {\prod\limits_{j = k+1}^{S+1} {(1 - p_{j-1})} } \right]+r_k p_k\right\}+f_{k - 1}^L.
\end{array}
\end{equation*}



Because $f_{k - 1}^L $ is independent to the threshold parameter in stage 
$k$, so the difference between $f_S (t_k^s )$ and $f_S (t_k^b )$ is also 
independent to $f_{k - 1}^L $. Therefore, 
\begin{equation}
\label{eq69}
\begin{array}{l@{~}l}
& f_S  (t_k^s ) - f_S (t_k^b )\\
&= \left[ {\prod\limits_{j = 1}^k {(1 - p_{j - 1} )} } \right]\left\{ {(r_k+kc)\left( {p_k^s - p_k^b } \right)}\right.\\
& \left. {+ (r_{k+1}+(k+1)c)\left[ {(1 - p_k^s )p_{k + 1}^s - (1 - p_k^b )p_{k + 1}^b } \right] } \right.\\
& + \cdots\\
& + (r_{S}+Sc)\left. {\left[p_S^s {\prod\limits_{j = k + 1}^{S} {(1 - p_{j - 1}^s )}
 - p_S^b \prod\limits_{j = k + 1}^{S} {(1 - p_{j - 1}^b )} } \right]} \right.\\
& \left.+(T+(S+1)c){\left[{\prod\limits_{j = k + 1}^{S+1} {(1 - p_{j - 1}^s )}
 - \prod\limits_{j = k + 1}^{S+1} {(1 - p_{j - 1}^b )} } \right]}\right\}.
\end{array}
\end{equation}

Some items of (\ref{eq69}) can be measured by using the fact that iCascade can 
reject all the true negative sub-windows in training data and the total 
rejection rate $R $ is 1. The rejection rate $R $ consists of the total 
rejection rate $R_{k - 1} = \sum\nolimits_{i = 1}^{k - 1} {\left[ 
{\prod\nolimits_{j = 1}^i {(1 - p_{j - 1} )} } \right]p_i } $ of the first $k 
- 1$ stages and the total rejection rate $\tilde {R}_{k - 1} = 
\sum\nolimits_{i = k}^S {\left[ {\prod\nolimits_{j = 1}^i {(1 - p_{j - 1} 
)} } \right]p_i } + \prod\nolimits_{j = 1}^{S+1} {(1 - p_{j - 1} 
)}$ of stages $k, \ldots ,S$. That is, \\
\begin{array}{l@{~}l}
R &= \sum\limits_{i = 1}^S {\left[ {\prod\limits_{j = 1}^i {(1 - p_{j - 1} 
} )} \right]} p_i+\prod\nolimits_{j = 1}^{S+1} {(1 - p_{j - 1} 
)}\\
&= \sum\limits_{i = 1}^{k-1} {\left[ {\prod\limits_{j = 1}^i {(1 - p_{j - 1} 
} )} \right]} p_i+\sum\limits_{i = k}^{S} {\left[ {\prod\limits_{j = 1}^i {(1 - p_{j - 1} 
} )} \right]} p_i\\
&+\prod\nolimits_{j = 1}^{S+1} {(1 - p_{j - 1} 
)}\\
& = R_{k - 1} + \tilde {R}_{k - 1}  = 1.
\end{array}

Because the total rejection rate $R_{k - 1} $ of the first $k - 1$ stages
does not change with $t_k $, so the total rejection rate $\tilde {R}_{k - 1} 
$ of stages $k, \ldots ,S$ is a constant $\eta $, no matter 
how $t_k $ varies. So if we denote the rejection rates $\tilde {R}_{k - 1} $ corresponding to $p_k^s $ and $p_k^b $ by $\tilde {R}_{k - 1} (t_k^s )$ and $\tilde {R}_{k - 1} (t_k^b )$, respectively, then $\tilde {R}_{k - 1} (t_k^s ) = \tilde 
{R}_{k - 1} (t_k^b ) = \eta $.

Therefore, when $t_k$ grows form $t_{k}^{s}$ to $t_{k}^{b}$, $p_k$ will increase from $p_k^s $ to $p_k^b $, the rejection rate in stage $k$ will increase while the rejection rates in stages $k + 1,...,S$  will decrease or not change (i.e., $\left[ {\prod\nolimits_{j = 1}^i {(1 - p_{j - 1}^s )} } \right]p_i^s \geq \left[ {\prod\nolimits_{j = 1}^i {(1 - p_{j - 1}^b )} } \right]p_i^b$ and $ {\prod\nolimits_{j = 1}^{S+1} {(1 - p_{j - 1}^s )} }  \geq  {\prod\nolimits_{j = 1}^{S+1} {(1 - p_{j - 1}^b )} } $). So we have
\begin{equation}
\label{eq70}
\left[ {\prod\limits_{j = k + 1}^i {(1 - p_{j - 1}^s )} } \right]p_i^s \geq 
\left[ {\prod\limits_{j = k + 1}^i {(1 - p_{j - 1}^b )} } \right]p_i^b,
\end{equation}
where $i =k+1,...,S$.

Based on (\ref{eq70}), $f_S(t_k^s) - f_S(t_k^b) $ in (\ref{eq69}) satisfies:
\begin{array}{l@{~}l}
& f_S(t_k^s) - f_S(t_k^b) \\ 
&> \left[ {\prod\nolimits_{j = 1}^k {(1 - p_{j - 1} )} } \right]\left\{ {r_k\left( {p_k^s - p_k^b } \right)}\right.\\
& \left. {+ r_k\left[ {(1 - p_k^s )p_{k + 1}^s - (1 - p_k^b )p_{k + 1}^b } \right] } \right.\\
& + \cdots\\
& + r_k\left. {\left[p_S^s {\prod\limits_{j = k + 1}^{S} {(1 - p_{j - 1}^s )} 
 - p_S^b \prod\limits_{j = k + 1}^{S} {(1 - p_{j - 1}^b )} } \right]} \right.\\
&  \left.+r_k{\left[{\prod\limits_{j = k + 1}^{S+1} {(1 - p_{j - 1}^s )}
 - \prod\limits_{j = k + 1}^{S+1} {(1 - p_{j - 1}^b )} } \right]}\right\}\\
& = \left[ {\prod\limits_{j = 1}^k {(1 - p_{j - 1} )} } \right]r_k\left\{ 
p_k^s+\sum\limits_{i=k+1}^{S}p_i^s{\prod\limits_{j = k+1}^i {(1 - p_{j - 1}^s )} }\right.\\
&  {+\prod\limits_{j = k + 1}^{S+1} {(1 - p_{j - 1}^s )}- \left[p_k^b+\sum\limits_{i=k+1}^{S} p_i^b \prod\limits_{j = k+1}^i {(1 - p_{j - 1}^b )} \right.} \\
& \left.\left. + \prod\limits_{j = k + 1}^{S+1} {(1 - p_{j - 1}^b )} \right] \right\}\\
& = r_k\left\{{\tilde {R}_{k - 1} (t_k^s ) - \tilde {R}_{k - 1} (t_k^b )} \right\} = r_k\left\{ 
{\eta - \eta } \right\} = 0.
\end{array}

So $f_S(t_k^s) - f_S(t_k^b) > 0$ is proved. \qed
\end{proof}

\ifCLASSOPTIONcaptionsoff
  \newpage
\fi


\begin{thebibliography}{1}

\bibitem{Shao_FeatureLearning_TNNLS2014}
Ling Shao, Li Liu, and Xuelong Li, "Feature Learning for Image Classification Via Multiobjective Genetic Programming," \textit{IEEE Trans. Neural Netw. Learning Syst.}, vol. 25, no. 7, pp. 1359-1371, 2014.

\bibitem{Paisitkriangkrai_AScalableStageWise_TNNLS2014} 
Sakrapee Paisitkriangkrai, Chunhua Shen,
and Anton van den Hengel, "A Scalable Stagewise Approach to Large-Margin Multiclass Loss-Based Boosting," \textit{IEEE Trans. Neural Netw. Learning Syst.}, vol. 25, no. 5, pp. 1002-1013, 2014.  

\bibitem{Paisitkriangkrai_RandomBoost_TNNLS2014}
Sakrapee Paisitkriangkrai, Chunhua Shen, Qinfeng Shi, and Anton van den Hengel, "RandomBoost: Simplified Multiclass Boosting Through Randomization," \textit{IEEE Trans. Neural Netw. Learning Syst.}, vol. 25, no. 4, pp. 764-779, 2014.

\bibitem{Bian_MinimizingNearest_TNNLS2014}
Wei Bian, Tianyi Zhou, A.M. Martinez, G. Baciu, and Dacheng Tao, "Minimizing Nearest Neighbor Classification Error for Nonparametric Dimension Reduction", \textit{IEEE Trans. Neural Netw. Learning Syst.}, vol. 25, no. 8, pp. 1588-1594, 2014.  

\bibitem{Yu_AUnifiedLearning_TTNLS2014}
Jifei Yu, Xinbo Gao, Dacheng Tao, Xuelong Li, and Kaibing Zhang, "A Unified Learning Framework for Single Image Super-Resolution," IEEE Trans. Neural Netw. Learning Syst., vol. 25, no. 4, pp. 780-792, 2014.

\bibitem{Wang_ATCB_NNLS_2012}
P. Wang, C. Shen, N. Barnes, and H. Zheng, "Fast and Robust 
Object Detection Using Asymmetric Totally Corrective Boosting," \textit{IEEE Trans. Neural Networks and Learning Systems}, vol. 23, no.1, pp. 33-46, 2012. 

\bibitem{Pang_CoHOG_SMC_2012}
Y. Pang, H. Yan, Y. Yuan, and K. Wang, "Robust CoHOG Feature 
Extraction in Human Centered Image/Video Management System," \textit{IEEE Trans. Systems, Man, and Cybernetics, Part B: Cybernetics}, vol. 42, no. 2, pp. 458-468, 2012. 

\bibitem{Viola_face_detection_IJCV_2004}
P. Viola and M. Jones, "Robust Real-Time Face Detection," \textit{Int'l J. Computer Vision}, vol. 57, no. 2, pp. 137-154, 2004.

\bibitem{Shen_Effective_Node_IJCV_2013}
C. Shen, P. Wang, S. Paisitkriangkrai, and A. Hengel, "Training Effective Node Classifiers for Cascade Classification," \textit{Int'l J. Computer Vision}, vol. 103, no. 3, pp. 326-347, 2013.


\bibitem{Brubaker_Design_IJCV_2008}
S. Brubaker, J. Wu, J. Sun, M. Mullin, and J. Regh, "On the Design of Cascades of Boosted Ensembles for Face Detection," \textit{Int'l J. Computer Vision}, vol. 77, nos. 1-3, pp. 65-86, 2008.

\bibitem{Saberian_Boosting_NIP_2010}
M. Saberian and N. Vasconcelos, "Boosting Classifier Cascades," 
\textit{Proc. Advances in Neural Information Processing Systems}, 2010. 

\bibitem{Lefakis_PBC_NIP_2010}
L. Lefakis and F. Fleuret, "Joint Cascade Optimization 
Using a Product of Boosted Classifiers," \textit{Proc. Advances in Neural 
Information Processing Systems}, 2010.

\bibitem{Chen_Classifier_Cascade_AIS_2012}
M. Chen, Z. Xu, K. Weinberger, O. Chapelle, and D. Kedem, "Classifier Cascade for Minimizing Feature Evaluation Cost Minmin," \textit{Proc. Int'l Conf. Artificial Intelligence and 
Statistics}, 2012.

\bibitem{Benbouzid_DAGs_ICML_2012}
D. Benbouzid, R. Busa-Fekete, and B. K\'{e}gl, "Fast Classification Using Sparse Decision DAGs," \textit{Proc. Int'l Conf. Machine Learning}, 2012.

\bibitem{Xu_Cost-Sensitive_ICML_2013}
Z. Xu, M. Kusner, K. Weinberger, and M. Chen, 
"Cost-Sensitive Tree of Classifiers," \textit{Proc. Int'l Conf. Machine Learning}, 2013. 

\bibitem{Bourdev_Soft_Cascade_CVPR_2005}
L. Bourdev and J. Brandt, "Robust Object Detection via Soft 
Cascade," \textit{Proc. IEEE Int'l Conf. Computer Vision and Pattern Recognition}, 2005. 

\bibitem{Xiao_Boosting_Chain_CVPR_2003}
R. Xiao, L. Zhu, and H. Zhang, "Boosting Chain Learning for Object Detection," \textit{Proc. IEEE Int'l Conf. Computer Vision}, 2003.

\bibitem{Li_Floatboost_PAMI_2004}
S. Li and Z. Zhang. "Floatboost Learning and Statistical Face Detection," \textit{IEEE Trans. Pattern Analysis and Machine Intelligence}, vol. 26, no. 9, pp. 1112-1123, 2004. 

\bibitem{Luo_Optimization_Design_CVPR_2005}
H. Luo, "Optimization Design of Cascaded Classifiers," \textit{Proc. IEEE Int'l Conf. Computer Vision and Pattern Recognition}, 2005.

\bibitem{Wu_Asymmetric_Learning_PAMI_2008}
J. Wu, S. Brubaker, M. Mullin, and J. Rehg, 
"Fast Asymmetric Learning for Cascade Face Detection," \textit{IEEE Trans. Pattern Analysis and Machine Intelligence}, vol. 30, no. 3, pp. 369-382, 2008.

\bibitem{Brubaker_Towards_Optimal_CVPR_2006}
S. Brubaker, M. Mullin, and J. Rehg, "Towards Optimal 
Training of Cascaded Detectors," \textit{Proc. European Conf. on Computer Vision}, 2006. 

\bibitem{Pham_Multi-exit_Asymmetric_CVPR_2005}
M. Pham, V. Hoang, and T. Cham, "Detection with Multi-exit Asymmetric Boosting," \textit{Proc. IEEE Int'l Conf. Computer Vision and Pattern Recognition}, 2008.

\bibitem{Chen_Time-Efficient_CVPR_2005}
X. Chen and A. Yuille, "A Time-Efficient Cascade for Real-Time Object Detection: With Applications for the Visually Impaired," \textit{ Proc. IEEE Int'l Conf. Computer Vision and Pattern Recognition}, 2005.

\bibitem{Sochman_Waldboost-learning_CVPR_2005}
J. Sochman and J. Matas, "WaldBoost-Learning for Time Constrained Sequential Detection," \textit{Proc. IEEE Int'l Conf. Computer Vision and Pattern Recognition}, 2005.

\bibitem{Zhu_HOG_CVPR_2006}
Q. Zhu, M. Yeh, K. Cheng, and S. Avidan, "Fast Human Detection Using a 
Cascade of Histograms of Oriented Gradients," \textit{Proc. IEEE Int'l Conf. Computer Vision and Pattern Recognition}, 2006.

\bibitem{Wang_HOG-LBP_ICCV_2009}
X. Wang, T. Han, and S. Yan, "An HOG-LBP Human Detector With Partial 
Occlusion Handing," \textit{Proc. IEEE Int'l Conf. Computer Vision}, 2009. 

\bibitem{Dollar_Integral_Channel_BMVC_2009}
P. Doll\'{a}r, Z. Tu, P. Perona, and S. Belongie. "Integral Channel 
Features," \textit{Proc. British Machine Vision Conf.}, 2009. 

\bibitem{Benenson_100_Frames_ICCV_2012}
R. Benenson, M. Mathias, R. Timofte, and L. Gool, "Pedestrian Detection at 100 
Frames per Second," \textit{Proc. IEEE Int'l Conf. Computer Vision}, 2012.

\bibitem{Dollar_PedestrianDetection_PAMI_2012}
P. Doll\'{a}r, C. Wojek, B. Schiele, and P. Perona, "Pedestrian Detection: An Evaluation of the State of the Art," \textit{IEEE Trans. Pattern Analysis and Machine Intelligence}, vol. 34, no. 4, pp. 743-761, 2012.

\bibitem{Jun_LTF_PAMI_2013}
B. Jun, I. Choi, and D. Kim, "Local Transform Features and Hybridization for Accurate Face and Human Detection," \textit{IEEE Trans. Pattern Analysis and Machine Intelligence}, vol. 35, no. 6, pp. 1423-1436, 2013.

\bibitem{Gualdi_ParticleWindows_PAMI_2011}
G. Gualdi, A. Prati, and R. Cucchiara, "Multi-Stage Particle Windows for Fast and Accurate Object Detection," \textit{IEEE Trans. Pattern Analysis and Machine Intelligence}, vol. 34, no. 8, pp. 1589-1604, 2012.

\bibitem{Zhang_MultiInstance_NIPS_2007}
C. Zhang and P. Viola, "Multi-Instance Pruning for Learning Efficient Cascade Detectors," \textit{Proc. Advances in Neural Information Processing Systems}, 2007.


\bibitem{Rowley_NNFD_PAMI_1998}
H. Rowley, S. Baluja, and T. Kanade, "Neural network-based face
detection," \textit{IEEE Trans. Pattern Analysis and Machine Intelligence}, vol.
20, no. 1, pp. 22-38, 1998.




\end{thebibliography}
\end{document}